\documentclass{article}

\usepackage{microtype}
\usepackage{graphicx}
\usepackage{subcaption}
\usepackage{booktabs} 

\usepackage{hyperref}

\usepackage[preprint]{icml2026}

\usepackage{amsmath}
\usepackage{amssymb}
\usepackage{mathtools}
\usepackage{amsthm}

\usepackage{enumitem}
\usepackage{thmtools}
\usepackage[T1]{fontenc}
\usepackage{diagbox}
\usepackage{hhline}
\usepackage{makecell}
\usepackage{booktabs}

\usepackage[capitalize,noabbrev]{cleveref}

\theoremstyle{plain}
\newtheorem{theorem}{Theorem}[section]

\newtheorem{lemma}[theorem]{Lemma}

\theoremstyle{definition}
\newtheorem{definition}[theorem]{Definition}

\theoremstyle{remark}

\usepackage[textsize=tiny]{todonotes}

\icmltitlerunning{Submission and Formatting Instructions for ICML 2026}

\begin{document}

\twocolumn[
  \icmltitle{Breakthrough the Suboptimal Stable Point in Value-Factorization-Based Multi-Agent Reinforcement Learning}



  \icmlsetsymbol{equal}{*}

  \begin{icmlauthorlist}
    \icmlauthor{Lesong Tao}{xjtu}
    \icmlauthor{Yifei Wang}{xjtu}
    \icmlauthor{Haodong Jing}{xjtu}
    \icmlauthor{Jingwen Fu}{bza}
    \icmlauthor{Miao Kang}{xjtu}
    \icmlauthor{Shitao Chen}{xjtu}
    \icmlauthor{Nanning Zheng}{xjtu}
  \end{icmlauthorlist}

  \icmlaffiliation{xjtu}{State Key Laboratory of Human-Machine Hybrid Augmented Intelligence, Institute of Artificial Intelligence and Robotics, Xi'an Jiaotong University}
  \icmlaffiliation{bza}{Zhongguancun Academy}

  \icmlcorrespondingauthor{Nanning Zheng}{nnzheng@mail.xjtu.edu.cn}

  \icmlkeywords{multi-agent reinforcement learning, value factorization, representational limitation}

  \vskip 0.3in
]




\printAffiliationsAndNotice{}  

\begin{abstract}

   Value factorization, a popular paradigm in MARL, faces significant theoretical and algorithmic bottlenecks: its tendency to converge to suboptimal solutions remains poorly understood and unsolved. Theoretically, existing analyses fail to explain this due to their primary focus on the optimal case. To bridge this gap, we introduce a novel theoretical concept: the stable point, which characterizes the potential convergence of value factorization in general cases. Through an analysis of stable point distributions in existing methods, we reveal that non-optimal stable points are the primary cause of poor performance. However, algorithmically, making the optimal action the unique stable point is nearly infeasible. In contrast, iteratively filtering suboptimal actions by rendering them unstable emerges as a more practical approach for global optimality. Inspired by this, we propose a novel Multi-Round Value Factorization (MRVF) framework. Specifically, by measuring a non-negative payoff increment relative to the previously selected action, MRVF transforms inferior actions into unstable points, thereby driving each iteration toward a stable point with a superior action. Experiments on challenging benchmarks, including predator-prey tasks and StarCraft II Multi-Agent Challenge (SMAC), validate our analysis of stable points and demonstrate the superiority of MRVF over state-of-the-art methods.
    
\end{abstract}

\section{Introduction}
Multi-agent reinforcement learning (MARL) effectively addresses many real-world problems, such as robotics~\citep{Ref:robotic}, automated warehouses~\citep{Ref:poaql}, and games~\citep{Ref:dota2}. MARL's core issue is finding the optimal action in an action space that grows exponentially with the number of agents. To address this issue, value factorization methods represent the joint action value in a specific form. Early value factorization methods, such as VDN~\citep{Ref:vdn} and QMIX~\citep{Ref:qmix}, represent the joint action value through a monotonic mix of individual action values. This monotonic relationship ensures that the optimal joint action can be obtained by searching through individual action spaces. However, because of the monotonic particularity, these methods struggle to obtain the global optimum in non-monotonic cases. 

\begin{figure}[t]
    \centering
    \includegraphics[scale=0.32]{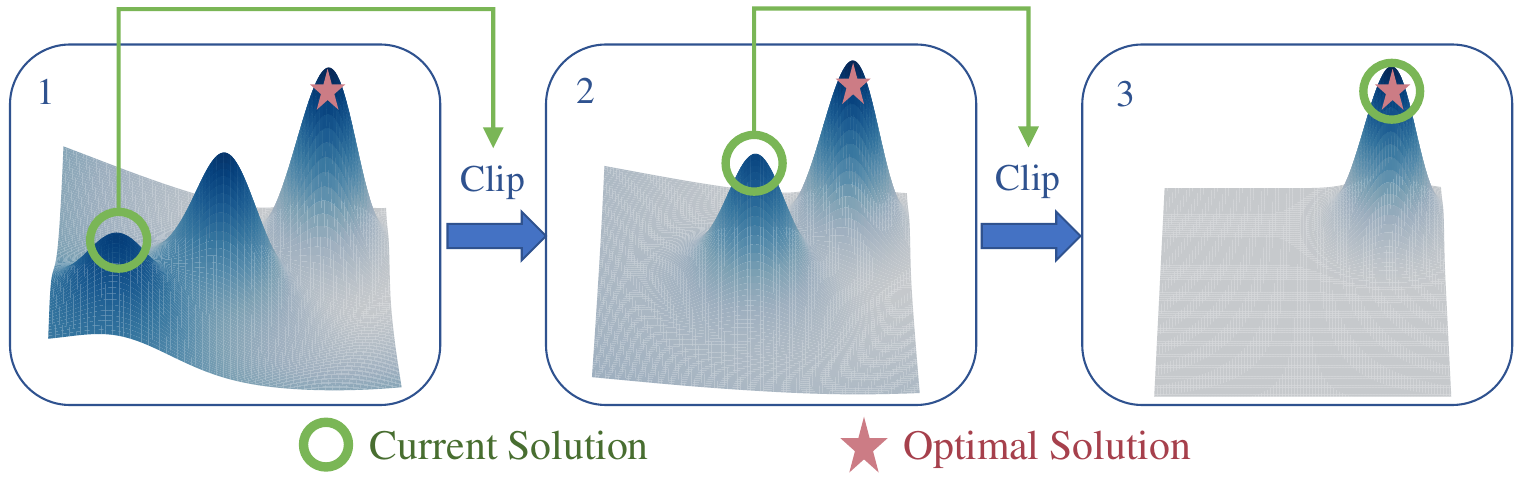}
    \caption{Clipping payoffs in the multi-round process. In this example, the two suboptimal solutions have relatively high payoffs. Therefore, existing methods are likely to select one of them. In contrast, by clipping the payoff of the selected solution, MRVF avoids convergence to it in subsequent rounds. Eventually, only the optimal solution retains a high payoff, making it easily identifiable.}
    
    
    \label{intro mrvf}
\end{figure}


To mitigate the representational limitation in VDN and QMIX, recent methods introduce new fitting functions (e.g., QPLEX~\citep{Ref:qplex} and ResQ~\citep{Ref:resq}) or loss functions (e.g., QTRAN~\citep{Ref:qtran} and WQMIX~\citep{Ref:wqmix}). However, these algorithms often converge to suboptimal solutions, particularly when initialized non-optimally. This phenomenon remains insufficiently understood, since existing analyses typically assume convergence from the optimal initialization. To bridge this gap, \textbf{we introduce a novel theory to analyze the convergence under general initialization.} A key issue is the gradient discontinuity inherent to value factorization under general initialization. To address this, we model the solution trajectory as a discrete dynamical system, with \textbf{stable points} corresponding to converged solutions. This perspective enables our analysis to more accurately reflect empirical outcomes than prior work~\citep{Ref:tadppo}. By characterizing the distribution of stable points, we demonstrate that the prevalence of suboptimal stable points is a primary cause of the poor performance exhibited by existing methods.

From the above discussion, a necessary condition for existing methods to achieve global optimality is that the optimal action is the unique stable point. However, this uniqueness rarely holds under a single factorization due to the high dimensionality of multi-agent action spaces. In contrast, we point out a more tractable alternative: iteratively transforming suboptimal actions into unstable points until only the optimal action remains stable. Inspired by this, \textbf{we propose a multi-round value factorization (MRVF) framework}. As illustrated in Figure~\ref{intro mrvf}, \textbf{1) for each round}, MRVF clips the difference between raw payoff and the payoff obtained in the preceding round to zero. This clipping operation not only renders inferior actions unstable, but also enhances the stability of superior actions by suppressing surrounding low-payoff influences.  Consequently, the solution in each round converges to a superior action. \textbf{2) As a whole}, since every round strictly improves the solution, MRVF can find the optimal solution given a sufficient number of rounds.



Environments with highly non-monotonic payoffs~\citep{Ref:wqmix} require significant coordination, which is extremely challenging for MARL. Accordingly, we conduct experiments on non-monotonic one-step games and the predator-prey task~\citep{Ref:DCG}. Existing methods perform poorly in these environments, corroborating our theoretical results on their suboptimality. In contrast, MRVF achieves better performance in both highly non-monotonic and nearly monotonic settings (e.g., the StarCraft Multi-Agent Challenge, SMAC~\citep{Ref:smac}), demonstrating that its multi-round iterations can attain a nearly optimal solution.


\section{Related Work}

\paragraph{Value-based Methods} IQL~\citep{Ref:IQL} is one of the earliest methods where agents independently learn individual action values. However, this method does not account for the dynamic policies of other agents. To address this issue, recent works propose learning a factorized joint action value under the IGM principle. Monotonic factorization is an intuitive way to satisfy the IGM principle. For example, VDN~\citep{Ref:vdn} represents the joint action value as the sum of individual action values. QMIX~\citep{Ref:qmix} represents the joint action value with a monotonic network that uses a positive weighted linear to deal with individual action values. Qattan~\citep{Ref:qatten} uses multi-head attention~\citep{Ref:attention} to generate mixing weights. NA\textsuperscript{2}Q~\citep{Ref:na2q} improves interpretability through Taylor expansion-based monotonic factorization. GoMARL~\citep{Ref:gomarl} and HYGMA~\citep{Ref:hygma} introduce a grouping mechanism into monotonic factorization. However, monotonic factorizations may suffer from representational limitations. To address this issue, QTRAN~\citep{Ref:qtran} and WQMIX~\citep{Ref:wqmix} pay more attention to the estimation of greedy action values. Specifically, QTRAN introduces a compensatory base to relax the non-optimal errors, while WQMIX reduces the weight on the approximation of non-optimal parts. Other methods like QPLEX~\citep{Ref:qplex} and ResQ~\citep{Ref:resq} design complete factorizations of the joint action value. QPLEX uses the maximum of a mixer value and a non-positive advantage function. ResQ combines a monotonic function with a non-positive function. GVR~\citep{Ref:gvr} takes a different approach by making \(\boldsymbol{u}^*\) the only stable point, although this requires numerous approximations.

\paragraph{Policy-based Methods}
In addition to value factorization methods, policy-based approaches also constitute a popular paradigm in MARL. COMA~\citep{Ref:COMA} introduces a counterfactual baseline in actor-critic framework, while MAPPO~\citep{Ref:mappo} adapts PPO~\citep{Ref:PPO} to multi-agent settings. Policy-based methods can be regarded as a special case of value factorization: value factorization methods factorize the joint action-value into individual Q-values, whereas policy-based methods factorize it into individual policy distributions. Building on this similarity, several works integrate value factorization into policy-based methods to handle continuous action spaces, exemplified by DOP~\citep{Ref:dop} and FACMAC~\citep{Ref:facmac}.

\paragraph{Communication-based Methods} Other researchers study communication mechanisms in multi-agent systems. CommNet~\citep{Ref:CommNet} enables multi-round communication for sharing observations. DIAL~\citep{Ref:DIAL} designs gradient-based messages exchanged between agents. To decide with whom to communicate, TarMAC~\citep{Ref:TarMac} uses a signature-based soft attention mechanism, and MAGIC~\citep{Ref:MAGIC} uses graph neural networks. The communication mechanisms can be integrated into either value-based or policy-based methods. For example, NDQ~\citep{Ref:NDQ} combines value factorization and communication by mixing individual \(Q\)s through communication, while CommFormer~\citep{Ref:commformer} establishes a learnable graph for sharing information in MAPPO. ACE~\citep{Ref:ACE} models communication as intermediate processes in the Markov Decision Process. In addition to parallel decision making, recent methods study sequential decision making through communication. PG-AR~\citep{Ref:pgar} randomizes the order of decision making. SeqComm~\citep{Ref:seqcomm} use intentions to prioritize agents in sequential decision-making.

\paragraph{Other MARL Methods} Recent works introduce additional innovations in MARL. RES~\citep{Ref:res} addresses value overestimation using regularized softmax losses. MAVEN~\citep{Ref:maven} enhances exploration efficiency through randomized latent vectors. LIGS~\citep{Ref:ligs} proposes learnable intrinsic rewards to improve multi-agent cooperation. SHAQ~\citep{Ref:shaq} integrates the Shapley value by modeling Q functions as marginal contributions of agents.

\section{Background}

\subsection{Dec-POMDP}
In cooperative multi-agent systems, agents interact with the environment to achieve common objectives. This process can be modeled as a decentralized partially observable Markov decision process (Dec-POMDP)~\citep{Ref:review}, defined by a tuple \(<\mathcal{S},\boldsymbol{\mathcal{U}},P,O,R,\gamma,n>\), in which \(n\) is the number of agents,  \(\mathcal{S}\) is the state space, \(\boldsymbol{\mathcal{U}}=\mathcal{U}_1\times\mathcal{U}_2\times\cdots\times\mathcal{U}_n\) is the action space, \(P(\boldsymbol{s}^{'}|\boldsymbol{s},\boldsymbol{u}):\mathcal{S} \times \boldsymbol{\mathcal{U}} \times \mathcal{S} \to [0,1]\) is the transition probability between the states, \(O:\mathcal{S}\to\boldsymbol{\mathcal{O}}\) is the joint observation function where \(\boldsymbol{\mathcal{O}}\) is the joint observation space, \(r \sim R:\mathcal{S}\times\boldsymbol{\mathcal{U}}\to\mathbb{R}\) is the reward, \(\gamma\in(0,1]\) is the discount, agent have an action-observation history \(\boldsymbol{\tau}\in \boldsymbol{\mathcal{T}} \equiv(\boldsymbol{\mathcal{O}}\times\boldsymbol{\mathcal{U}})^*\).  Similarly to RL with a single agent, the objective of MARL is to find the policy \(\pi:\boldsymbol{\mathcal{T}}\times\boldsymbol{\mathcal{U}}\to[0,1]\) that maximizes the joint action value \(Q_{\mathrm{jt}}(\boldsymbol{\tau},\boldsymbol{u})=\mathrm{E}_{\pi}[\sum_{t} \gamma^tr_t|\boldsymbol{\tau},\boldsymbol{u}]\), the expectation of return. However, obtaining the global optimal action \(\boldsymbol{u}^* = \arg\max_{\boldsymbol{u}} Q_{\mathrm{jt}}(\boldsymbol{\tau},\boldsymbol{u})\) by searching the large action space is intractable for value-based learning. 

\subsection{Monotonic Value Factorization}
Value factorization methods design \(Q_{\mathrm{tot}}\) to approximate \(Q_{\mathrm{jt}}\) by minimizing \(L_\mathrm{tot}\).

\begin{equation}
    \label{equ: Ltot of value factorization}
        L_{\mathrm{tot}}=\sum_{\boldsymbol{\tau},\boldsymbol{u}}{\pi(\boldsymbol{u}|\boldsymbol{\tau})(Q_{\mathrm{tot}}(\boldsymbol{\tau},\boldsymbol{u})-Q_{\mathrm{jt}}(\boldsymbol{\tau},\boldsymbol{u}))^2}
\end{equation}

Specifically, VDN~\citep{Ref:vdn} factorizes \(Q_{\mathrm{tot}}\) into the sum of individual \(Q\)s: \(Q_{\mathrm{tot}}=\sum_i{Q_i}\). And QMIX~\citep{Ref:qmix} uses a monotonic function \(f_{\mathrm{mon}}\) to represent the relationship: \(Q_{\mathrm{tot}}=f_{\mathrm{mon}}(Q_1, Q_2, \cdots, Q_n)\) where \(\frac{\partial{f_{\mathrm{mon}}}}{\partial{Q_i}} \geq 0, \forall i\in \{1,2,\cdots, N\}\). The monotonicity of \(Q_{\mathrm{tot}}\) with respect to \(Q_i\) ensures the alignment of their optimal actions, which is the Individual-Global-Max (IGM) principle~\footnote{We use "\(\subseteq\)" instead of "\(=\)" in Equation~(\ref{IGM principle}), as \(Q_{\mathrm{tot}}\) may have more maxima than individual \(Q\)s (Table~\ref{CW-QMIX fail example}).}:
\begin{equation}
    \prod\limits_{i=0}^n\arg\max \limits_{u_i} Q_i(\tau_i,u_i)
    \subseteq
    \arg\max _{\boldsymbol{u}}Q_{\mathrm{tot}}(\boldsymbol{\tau},\boldsymbol{u})
    \label{IGM principle}
\end{equation}

where \(\prod\limits_{i=0}^n\arg\max \limits_{u_i} Q_i(\tau_i,u_i)\) is the Cartesian product of the sets \(\arg\max \limits_{u_i} Q_i(\tau_i,u_i)\), and \(Q_i:\mathcal{T}_i\times \mathcal{U}_i\to\mathbb{R}\) is the individual action value of agent \(i\). Under the IGM principle, if \(Q_{\mathrm{tot}}\) approximates \(Q_{\mathrm{jt}}\) precisely, the maximum of \(Q_{\mathrm{jt}}\) would be obtained by searching the maximum of \(Q_i\). Here we define the greedy action \(\bar{\boldsymbol{u}}\) as
\begin{equation}
    \bar{\boldsymbol{u}}\in\prod\limits_{i=0}^n\arg\max \limits_{u_i} Q_i(\tau_i,u_i)
    \label{definition of greedy action}
\end{equation}

\subsection{Gradient-Free Components in Value Factorization}

Loss function \(L_{tot}\) defined in existing value factorization contains gradient-free components that are not updated via gradient backpropagation. In this paper, we focus on those whose values depend on the outcome of \(\arg\max \limits_{u_i} Q_i(\tau_i,u_i)\).

\begin{definition}[Gradient-free components and \(\widetilde{\boldsymbol{u}}\)]
    Gradient-free components are functions that take a formal parameter \(\widetilde{\boldsymbol{u}}\), where \(\widetilde{\boldsymbol{u}}\) is instantiated by the greedy action \(\bar{\boldsymbol{u}}\).
\end{definition}

Gradient-free components are common in value factorization, for example, the decentralized \(\epsilon\)-greedy policy \(\pi\) 
\begin{equation}
    \pi(\boldsymbol{u}|\boldsymbol{\tau})=\left(\frac{\epsilon}{|\mathcal{U}|}\right)^{n-m} \left(1-\epsilon+\frac{\epsilon}{|\mathcal{U}|}\right)^m
    \label{decentralized epsilon greedy}
\end{equation}
where \(m=|\{i|\boldsymbol{u}_i=\widetilde{\boldsymbol{u}}_i\}|\). Recent methods introduce additional terms involving \(\widetilde{\boldsymbol{u}}\) into monotonic factorization. For example, the weight \(w\) of the weighted \(L_\mathrm{tot}\) in WQMIX~\citep{Ref:wqmix}~\footnote{It stands for Centrally-Weighted QMIX.}.
\begin{equation}
    w(\boldsymbol{\tau},\boldsymbol{u})=
    \begin{cases}
        1 &   \hat{Q}_{\mathrm{jt}}(\boldsymbol{\tau},\boldsymbol{u})>\hat{Q}_{\mathrm{jt}}(\boldsymbol{\tau},\widetilde{\boldsymbol{u}}) \ \ or \ \ \boldsymbol{u}=\widetilde{\boldsymbol{u}}   \\
        \alpha<1 & otherwise
    \end{cases}
\end{equation}
where \(\hat{Q}_{\mathrm{jt}}\) is an approximation of \(Q_{\mathrm{jt}}\), and the residual mask \(w_\mathrm{r}\) in \(Q_{tot}\) of ResQ~\citep{Ref:resq}.
\begin{equation}
    w_\mathrm{r}(\boldsymbol{\tau},\boldsymbol{u})=
    \begin{cases}
        0   &   \boldsymbol{u}=\widetilde{\boldsymbol{u}} \\
        1   &   otherwise
    \end{cases}
\end{equation}



\section{A Unified Theory for the Suboptimality of Existing Value Factorization}\label{Sec:theory}

In this section, we discuss why existing value factorization methods converge to suboptimal solutions during training. To explain this phenomenon, we propose a novel theory to study how greedy actions change and converge during training. \textbf{1) Regarding the transition of greedy actions}, prior work~\citep{Ref:tadppo} overlooks the gradient discontinuities caused by the discrete changes in greedy actions (as illustrated in Figure~\ref{gradient discontinuity}). To address this issue, we model the transitions of greedy actions using a discrete dynamical system, thereby enabling gradient-based analysis at every transition step. \textbf{2) Regarding the convergence of greedy actions}, existing studies~\citep{Ref:qtran, Ref:wqmix, Ref:qplex, Ref:resq} only analyze convergence when the greedy action is optimal. In contrast, we provide conditions under which greedy actions converge in the general case and give specific examples showing how current algorithms converge to suboptimal solutions.


\begin{figure}[h!]
    \centering
    \includegraphics[scale=0.75]{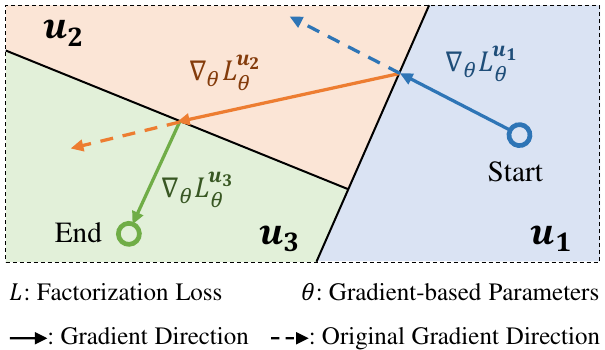}
    \caption{Gradient discontinuity induced by crossing different values of greedy action. The gradient is continuous within any single region. However, when the greedy action changes, the values of the gradient-free components change. Since these components appear in the gradient of gradient-based parameters (e.g., individual \(Q\)s), their sudden changes induce a discontinuity in the gradient.}
    \label{gradient discontinuity}

\end{figure}

\subsection{Transition of the Greedy Action}

To introduce the analysis of the greedy action's convergence, we first illustrate how the greedy action changes during training. We provide an example for WQMIX in Figure~\ref{img stable point}, where we observe that the transition of greedy actions is similar to that in a dynamical system \(x^{k+1}=g(x^{k})\): 

\begin{equation}
    \begin{aligned}
       x^{k+1}  = &g(x^{k}) \\
       \bar{\boldsymbol{u}} \coloneqq x^{k+1}, &\ \widetilde{\boldsymbol{u}} \coloneqq  x^{k}
    \end{aligned}
\end{equation}

where the current state \(\widetilde{\boldsymbol{u}}\) controls the transition, and the next state \(\bar{\boldsymbol{u}}\) is the result of the transition. Different factorization forms correspond to different functions \(g\), depending on how the loss function is defined and optimized. 



\begin{figure*}[h]
    \centering
    \includegraphics[scale=0.55]{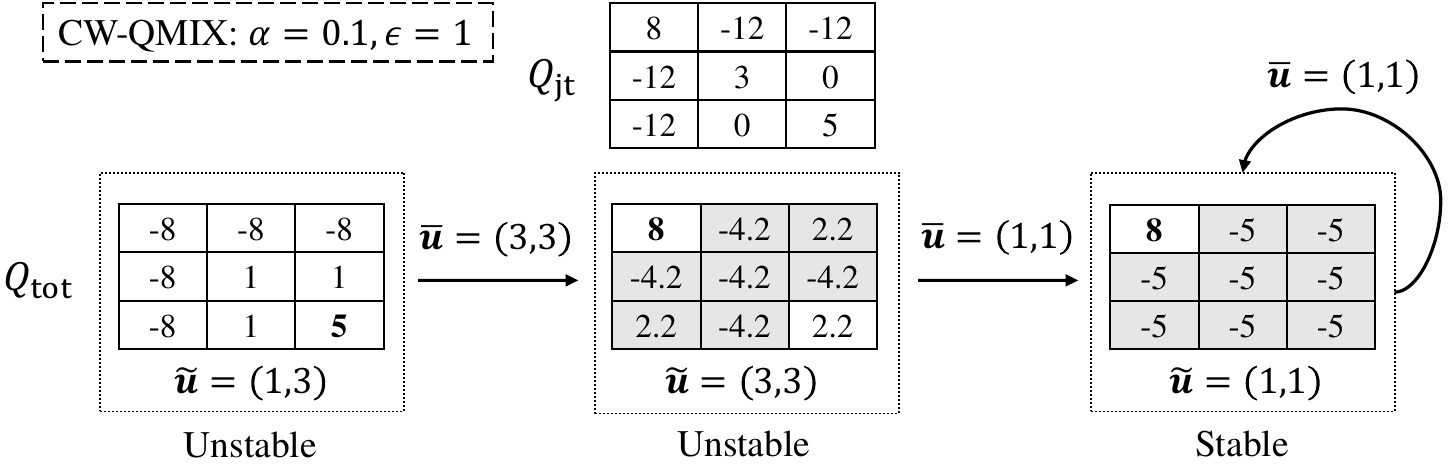}
    \caption{Transition of greedy action in WQMIX with \(\alpha=0.1\) under uniform visitation (\(\pi\equiv1\)). This process begins with \(\widetilde{\boldsymbol{u}}=(1,3)\) (top right) and stabilizes at \(\bar{\boldsymbol{u}}=(1,1)\) (top left). At each step we present \(Q_{\mathrm{tot}}\) optimized under the weight induced by current \(\widetilde{\boldsymbol{u}}\), where cells are shaded if  \(w(\boldsymbol{\tau},\boldsymbol{u})=\alpha\), and the value corresponding to \(\bar{\boldsymbol{u}}=(row, column)\) is in bold.}
    \label{img stable point}
\end{figure*}

\begin{table}[h!]
    \centering
    \begin{tabular}{ccc}
        \toprule
         & Dynamical System  & Value Factorization   \\
        \midrule
        \(\widetilde{\boldsymbol{u}}\) & \makecell{Current state of\\greedy action}  & \makecell{A formal parameter of\\gradient-free components}  \\
        \midrule
        \(\bar{\boldsymbol{u}}\) &\makecell{Next state of\\greedy action}   & \makecell{Individual-max action\\defined in Equation~(\ref{definition of greedy action})}   \\
        \midrule
        \(g\) & System function  & \makecell{Gradient-based optimization \\ of factorization parameters}  \\
        \bottomrule
    \end{tabular}
    \caption{The element-wise correspondence between dynamical system and value factorization.}
    \label{difference of greedy actions}
\end{table}

We illustrate the transition in Figure~\ref{img stable point} as follows:
\begin{enumerate}
    \item Initially, \(\widetilde{\boldsymbol{u}}=(1,3)\) makes an all-ones weight matrix.
    \item Minimizing \(L_\mathrm{tot}=w*(Q_\mathrm{tot}-Q_\mathrm{jt})^2\) under the all-ones weight \(w\) yields the \(Q_\mathrm{tot}\) (left table), corresponding to \(\bar{\boldsymbol{u}}=(3,3)\).
    \item \(\widetilde{\boldsymbol{u}}\) is assigned the new value \(\bar{\boldsymbol{u}}=(3,3)\), thereby altering the weight matrix (shaded regions).
    \item Minimizing \(L_\mathrm{tot}\) with the weight under \(\widetilde{\boldsymbol{u}}=(3,3)\) produces a new \(Q_\mathrm{tot}\) (middle table).
    \item \(\bar{\boldsymbol{u}}\) keeps on changing (steps 2-4) until \(\bar{\boldsymbol{u}}=\widetilde{\boldsymbol{u}}\) is reached (right table).
\end{enumerate}



\subsection{Convergence of the Greedy Action}
From the discussion above, the transition of \(\bar{\boldsymbol{u}}\) typically begins in a transient
phase (steps 2-4) and ends in a steady phase (step 5) when self-transition occurs. If self-transition occurs, both \(\bar{\boldsymbol{u}}\) and \(\widetilde{\boldsymbol{u}}\) remain unchanged, which reflects the convergence of value factorization. \textbf{We use stable points to describe the self-transition of greedy actions}, defined as follows:

\begin{definition}[Stable Point]
    In a specific value factorization, for a joint action value \(Q_{\mathrm{jt}}\) and  \(\boldsymbol{\tau}\in \boldsymbol{\mathcal{T}}\), if \(\check{\boldsymbol{u}}\) is a stable point of \(Q_{\mathrm{jt}}(\boldsymbol{\tau},\cdot)\), then for \(\widetilde{\boldsymbol{u}}=\check{\boldsymbol{u}}\) in the optimization of minimizing \(L_{tot}\), the converged \(\bar{\boldsymbol{u}}\) satisfies \(\bar{\boldsymbol{u}}=\check{\boldsymbol{u}}\).
    \label{def stable point}
\end{definition}

Note that multiple stable points may exist in value factorization, any of which could be the final result of greedy actions. To analyze the possible outcomes of greedy actions, we need to determine whether an action \(\check{\boldsymbol{u}}\) is a stable point, which is described as follows.

\begin{enumerate}
    \item Set \(\widetilde{\boldsymbol{u}}=\check{\boldsymbol{u}}\) and generate the corresponding gradient-free components.
    \item Minimize \(L_\mathrm{tot}\) under \(\widetilde{\boldsymbol{u}}=\check{\boldsymbol{u}}\) to obtain \(Q_\mathrm{tot}\) (multiple solutions may exist) and corresponding \(\bar{\boldsymbol{u}}\).
    \item If any \(\bar{\boldsymbol{u}}\) satisfies \(\bar{\boldsymbol{u}}=\check{\boldsymbol{u}}\), then \(\check{\boldsymbol{u}}\) is a stable point.
    \item Furthermore, if \(\bar{\boldsymbol{u}}=\check{\boldsymbol{u}}\) holds for all solutions, \(\check{\boldsymbol{u}}\) is a strongly stable point. Otherwise, \(\check{\boldsymbol{u}}\) is a weakly stable point.
\end{enumerate}

The difference between mixer types lies in optimization of the loss function (Step 2): For VDN-style mixers, we obtain \(Q_\mathrm{tot}\) using gradient descent. And for QMIX-style mixers, we obtain \(Q_\mathrm{tot}\) based on the Ideal QMIX assumption: The Ideal QMIX yields the optimal \(Q_\mathrm{tot}\) that minimizes \(L_\mathrm{tot}\) under any given \(\widetilde{\boldsymbol{u}}\) (Appendix~\ref{section: ideal qmix}).

\subsection{Suboptimality of Existing Value Factorization}
\label{Subsec:suboptimality of single-round}
We analyze stable points in existing methods to demonstrate cases where greedy actions converge to suboptimal actions, which usually happens in environments with highly non-monotonic payoff. Then, we derive the conditions under which the greedy action converges solely to the optimal action.

\textbf{WQMIX:} WQMIX can not guarantee the convergence to the optimal action. In some cases, as shown in Table~\ref{CW-QMIX fail example}, the optimal action is not a stable point for any \(\alpha\), which means that the greedy action does not converge to the optimal action.

\begin{table}[t]
    \centering
    \begin{subtable}[h]{0.5\linewidth}
        \centering
        \begin{tabular}{|c|c|c|}
            \multicolumn{3}{c}{\ } \\
            \hline
            4 & 0  & -8     \\
            \hline
            0     & 3 & 0      \\
            \hline
            -8     & 0       & -8  \\
            \hline
        \end{tabular}
        \caption{\(Q_{\mathrm{jt}}\)}
    \end{subtable}%
    \begin{subtable}[h]{0.5\linewidth}
        \centering
        \begin{tabular}{|c|c|c|}
            \multicolumn{3}{c}{\ } \\
            \hline
            \(1\) & \(\alpha\)  & \(\alpha\)     \\
            \hline
            \(\alpha\)     & \(\alpha\) & \(\alpha\)      \\
            \hline
            \(\alpha\)     & \(\alpha\)       & \(\alpha\)  \\
            \hline
        \end{tabular}
        \caption{\(w\)}
    \end{subtable}%
    \\
    \begin{subtable}[h]{0.5\linewidth}
        \centering
        \begin{tabular}{|c|c|c|}
            \multicolumn{3}{c}{\(L_\mathrm{tot}\)=70\(\alpha\)}    \\
            \hline
            \textbf{4} & 1  & -4     \\
            \hline
            1     & 1 & -4      \\
            \hline
            -4     & -4       & -8  \\
            \hline
        \end{tabular}
        \caption{\(Q_{\mathrm{tot}}^*\) under \(\bar{\boldsymbol{u}}=\boldsymbol{u}^*\)}
    \end{subtable}%
    \begin{subtable}[h]{0.5\linewidth}
        \centering
        \begin{tabular}{|c|c|c|}
            \multicolumn{3}{c}{\(L_\mathrm{tot}\)=33\(\alpha\)}    \\
            \hline
            4 & 4  & -8     \\
            \hline
            4     & \textbf{4} & 0      \\
            \hline
            -8     & 0       & -8  \\
            \hline
        \end{tabular}
        \caption{\(Q_{\mathrm{tot}}\) with \(\bar{\boldsymbol{u}}\neq\boldsymbol{u}^*\)}
    \end{subtable}
    \caption{An example for WQMIX in which \(\boldsymbol{u}^*\) is not a stable point. (a): The matrix of \(Q_{\mathrm{jt}}\) with the maximum value of \(4\). (b): The weight \(w\) of WQMIX where \(\widetilde{\boldsymbol{u}}=\boldsymbol{u}^*\). (c): The \(Q_{\mathrm{tot}}\) with minimum \(L_\mathrm{tot}\) under the condition that \(\bar{\boldsymbol{u}}=\boldsymbol{u}^*\). (d): Another \(Q_{\mathrm{tot}}\) where \(\bar{\boldsymbol{u}}\neq\boldsymbol{u}^*\) achieves less \(L_\mathrm{tot}\) than (c). Therefore, \(\boldsymbol{u}^*\) is not a stable point in this case.}
    \label{CW-QMIX fail example}
\end{table}

\textbf{QPLEX:} QPLEX proves that if \(\widetilde{\boldsymbol{u}}\) happens to be the optimal action, then \(\bar{\boldsymbol{u}}\) can converge to the optimal action. However, we find that multiple suboptimal stable points exist in QPLEX, which means that if \(\widetilde{\boldsymbol{u}}\) is certain suboptimal actions, \(\bar{\boldsymbol{u}}\) will converge to them (examples in Appendix~\ref{stability analysis of QPLEX}).

\textbf{ResQ:} ResQ proves that \(\bar{\boldsymbol{u}}\) can converge to the optimal action when \(\widetilde{\boldsymbol{u}}\) is the optimal action (similar to QPLEX). However, we find that although the optimal action is the only stable point, it is always a weak stable point (proof in Appendix~\ref{stablility analysis of resq}). This means that even if the optimal action has been found (\(\widetilde{\boldsymbol{u}}=\boldsymbol{u}^*\)), it may be lost subsequently.

From the above discussion, we conclude that in single-round value factorization, if the greedy action is to be guaranteed to converge to the optimal action, then the optimal action must be the unique strongly stable point, which means:
\begin{enumerate}
    \item The optimal action must be a strong stable point (WQMIX and ResQ fail)
    \item All suboptimal actions must be unstable points (QPLEX fails).
\end{enumerate}

\section{Breakthrough Suboptimality with Multi-Round Value Factorization}
\label{Sec:method}

In Section~\ref{Subsec:suboptimality of single-round}, we provide the condition under which the greedy action converges exclusively to the optimal action. This condition is highly stringent, to the extent that existing single-round factorization methods struggle to satisfy them. Therefore, for achieving optimality, we propose a multi-round factorization framework with a lenient condition --- the \textbf{Strict Improvement Condition}: if the preceding greedy action is not optimal, the current greedy action must attain a strictly higher payoff than the preceding one. Satisfying this condition guarantees that the greedy action in a certain round is the optimal one (Theorem~\ref{approximate convergence}).

\begin{definition}[Strict Improvement Condition]
    Given a joint action value \(Q_{\mathrm{jt}}\) and \(\boldsymbol{\tau}\in \boldsymbol{\mathcal{T}}\), let  \(\{\bar{\boldsymbol{u}}^k|\bar{\boldsymbol{u}}^k\in\boldsymbol{\mathcal{U}}\}\) be an infinite sequence of greedy actions. The Strict Improvement Condition is satisfied if, \(\forall k>1\) and \(\bar{\boldsymbol{u}}^{k-1}\neq\boldsymbol{u}^*\), we have \(Q_{\mathrm{jt}} (\boldsymbol{\tau},\bar{\boldsymbol{u}}^k)> Q_{\mathrm{jt}}(\boldsymbol{\tau},\bar{\boldsymbol{u}}^{k-1})\).
    \label{def strict improvement Condition}
\end{definition}

\begin{restatable}{theorem}{mainthmone}
    \label{approximate convergence}
    Given a joint action value \(Q_{\mathrm{jt}}\) with a finite action space \(\boldsymbol{\mathcal{U}}\) and \(\boldsymbol{\tau}\in \boldsymbol{\mathcal{T}}\), for any infinite sequence \(\{\bar{\boldsymbol{u}}^k|\bar{\boldsymbol{u}}^k\in\boldsymbol{\mathcal{U}}\}\) that satisfies the Strict Improvement Condition, then \(\exists K>0, \bar{\boldsymbol{u}}^K=\boldsymbol{u}^*\). 
\end{restatable}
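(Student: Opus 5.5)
We argue by contradiction, using only the finiteness of \(\boldsymbol{\mathcal{U}}\) and the Strict Improvement Condition (Definition~\ref{def strict improvement Condition}). Suppose that \(\bar{\boldsymbol{u}}^k\neq\boldsymbol{u}^*\) for every \(k\geq 1\). Then for every \(k>1\) the premise \(\bar{\boldsymbol{u}}^{k-1}\neq\boldsymbol{u}^*\) holds. The condition therefore gives
\begin{equation*}
Q_{\mathrm{jt}}(\boldsymbol{\tau},\bar{\boldsymbol{u}}^1)<Q_{\mathrm{jt}}(\boldsymbol{\tau},\bar{\boldsymbol{u}}^2)<Q_{\mathrm{jt}}(\boldsymbol{\tau},\bar{\boldsymbol{u}}^3)<\cdots,
\end{equation*}
so the sequence of payoffs \(Q_{\mathrm{jt}}(\boldsymbol{\tau},\bar{\boldsymbol{u}}^k)\) is strictly increasing.

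From this chain we conclude that the actions \(\bar{\boldsymbol{u}}^k\) are pairwise distinct. If \(\bar{\boldsymbol{u}}^i=\bar{\boldsymbol{u}}^j\) with \(i<j\), their payoffs would be equal, which contradicts strict increase. The map \(k\mapsto\bar{\boldsymbol{u}}^k\) is then an injection from \(\mathbb{N}\) into the finite set \(\boldsymbol{\mathcal{U}}\), which is impossible. The same counting gives a quantitative bound. The sequence \(\bar{\boldsymbol{u}}^1,\dots,\bar{\boldsymbol{u}}^{|\boldsymbol{\mathcal{U}}|}\) consists of \(|\boldsymbol{\mathcal{U}}|\) distinct actions if none equals \(\boldsymbol{u}^*\), yet it must avoid \(\boldsymbol{u}^*\), leaving only \(|\boldsymbol{\mathcal{U}}|-1\) admissible values. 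Hence some \(K\leq|\boldsymbol{\mathcal{U}}|\) satisfies \(\bar{\boldsymbol{u}}^K=\boldsymbol{u}^*\).

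The one subtle point is the meaning of \(\boldsymbol{u}^*\) when the argmax of \(Q_{\mathrm{jt}}(\boldsymbol{\tau},\cdot)\) is not unique. I would read "\(\bar{\boldsymbol{u}}^{k-1}\neq\boldsymbol{u}^*\)" as "\(\bar{\boldsymbol{u}}^{k-1}\) is not a maximizer", and the conclusion as "\(\bar{\boldsymbol{u}}^K\) is a maximizer". Under that reading the argument is unchanged: it runs over the finite set of non-maximizing actions, on which any strictly increasing chain must terminate. A more direct view of the same fact is that the payoffs along the chain take only finitely many values and so cannot increase forever. I expect no real obstacle beyond stating this convention explicitly.
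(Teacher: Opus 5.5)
Your proposal is correct and follows essentially the same route as the paper. Both assume \(\bar{\boldsymbol{u}}^k\neq\boldsymbol{u}^*\) for all \(k\), obtain a strictly increasing chain of payoffs, and contradict the finiteness of \(\boldsymbol{\mathcal{U}}\); the paper counts via the strictly shrinking sets \(\boldsymbol{\mathcal{U}}_+^k\) of actions superior to \(\bar{\boldsymbol{u}}^k\), whereas you use pairwise distinctness of the \(\bar{\boldsymbol{u}}^k\), which incidentally gives the slightly sharper bound \(K\leq|\boldsymbol{\mathcal{U}}|\). (Your convention for non-unique maximizers is harmless but unnecessary, since the same argument works verbatim for any fixed \(\boldsymbol{u}^*\in\arg\max_{\boldsymbol{u}}Q_{\mathrm{jt}}(\boldsymbol{\tau},\boldsymbol{u})\).)
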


\subsection{Designs for the Strict Improvement Condition}

We design the loss function of \(Q_{\mathrm{tot}}\) to derive, through backward computation, individual Q-values that meet the condition. Typically, \(\hat{Q}_{\mathrm{jt}}\), a payoff measurement, serves as the target value of \(Q_{\mathrm{tot}}\). In this paper, we replace it with the action-value increment for round \(k>1\). As shown in Equation~(\ref{tot loss round>1}), for each step \(t\), this increment is computed as the difference between \(\hat{Q}_{\mathrm{jt}}\) and its value at the preceding greedy action \(\bar{\boldsymbol{u}}_{t}^{k-1}\), clipped to a minimum of zero.

\vspace{-0.5cm}

\begin{equation}
    \begin{aligned}
        L_{\mathrm{tot}} & = 
        \mathrm{E}_{\boldsymbol{\tau}_t,\boldsymbol{u}_{t}^{k} \sim \pi_k}[(Q_{\mathrm{tot}}(\boldsymbol{\tau}_{t}, \bar{\boldsymbol{u}}_{t}^{k-1},\boldsymbol{u}_{t}^{k}) 
        \\
        & - \max\{\hat{Q}_{\mathrm{jt}}(\boldsymbol{\tau}_t,\boldsymbol{u}_t^{k})-\hat{Q}_{\mathrm{jt}}(\boldsymbol{\tau}_t,\bar{\boldsymbol{u}}_{t}^{k-1}),0 \})^2], \
        k>1 
    \end{aligned}
    \label{tot loss round>1}
\end{equation}
where \(\max\{\hat{Q}_{\mathrm{jt}}(\boldsymbol{\tau}_t,\boldsymbol{u}_t^{k})-\hat{Q}_{\mathrm{jt}}(\boldsymbol{\tau}_t,\bar{\boldsymbol{u}}_{t}^{k-1}),0 \}\) is the action-value increment.

We illustrate in Table~\ref{target of Qtot} how the target value of \(Q_{\mathrm{tot}}\) shapes during the multi-round process. For the first round, we use the raw \(\hat{Q}_{\mathrm{jt}}\) as the target for \(Q_{\mathrm{tot}}\), as shown in Equation~(\ref{tot loss round 1}).

\vspace{-0.5cm}

\begin{equation}
    L_{\mathrm{tot}} = 
        \mathrm{E}_{\boldsymbol{\tau}_t,\boldsymbol{u}_{t}^{k} \sim \pi_k}[(Q_{\mathrm{tot}}(\boldsymbol{\tau}_{t}, \boldsymbol{u}_{t}^{k}) -\hat{Q}_{\mathrm{jt}}(\boldsymbol{\tau}_t,\boldsymbol{u}_t^{k}) )^2], \ 
        k=1 
    \label{tot loss round 1}
\end{equation}

\begin{table}[h]
  \centering
  \begin{subtable}[h]{0.35\linewidth}
  \begin{tabular}{|c|c|c|}
    \hline
    8 & -12  & -12     \\
    \hline
    -12     & 3 & 0      \\
    \hline
    -12     & 0       & \textbf{5}  \\
    \hline
  \end{tabular}
    \caption{\(k=1\)}
    \end{subtable}   
    \begin{subtable}[h]{0.31\linewidth}
      \begin{tabular}{|c|c|c|}
        \hline
        \  \textbf{3} \  & \ 0 \  & \  0  \     \\
        \hline
        0     & 0 & 0      \\
        \hline
        0     & 0       & 0  \\
        \hline
      \end{tabular}
      \caption{\(k=2\)}
     \end{subtable}
    \begin{subtable}[h]{0.31\linewidth}
      \begin{tabular}{|c|c|c|}
        \hline
        \  0 \   &\   0  \  &\   0 \      \\
        \hline
        0     & 0 & 0      \\
        \hline
        0     & 0       & 0  \\
        \hline
      \end{tabular}
      \caption{\(k=3\)}
    \end{subtable}
    \hfill
  \caption{This table shows the target of \(Q_{\mathrm{tot}}\) in three round, in which the value corresponding to \(\bar{\boldsymbol{u}}\) is in bold when it is unique. (a): The target of \(Q_{\mathrm{tot}}\) in the first round which is the original \(\hat{Q}_{\mathrm{jt}}\). (b): The target of \(Q_{\mathrm{tot}}\) in the second round which is clipped with \(\hat{Q}_{\mathrm{jt}}(\boldsymbol{\tau}_t,\bar{\boldsymbol{u}}_{t}^{1})=5\). (c): The target of \(Q_{\mathrm{tot}}\) in the final round where any \(\boldsymbol{u}\) can be \(\bar{\boldsymbol{u}}^{3}\).}
  \label{target of Qtot}
\end{table}

Intuitively, as shown in Table~\ref{target of Qtot}, the action-value increment enhances the monotonicity of \(\hat{Q}_{\mathrm{jt}}\): It suppresses the locally high payoffs of inferior actions and, conversely, offsets the low payoffs around superior actions. This reshaping inclines monotonic factorization toward superior actions. \textbf{Theoretically, it ensures the Strict Improvement Condition when} \(\bar{\boldsymbol{u}}_t^{k-1}\neq \boldsymbol{u}^*\). This is grounded in a property of monotonic factorization: the greedy action in QMIX will not converge to actions with the lowest target value of \(Q_{\mathrm{tot}}\), since these lowest-value actions are not stable points (Theorem~\ref{centralized policy avoid the worst}). With the action-value increment, all inferior actions \(\boldsymbol{u}_t^{-}\) with \(\hat{Q}_{\mathrm{jt}}(\boldsymbol{\tau}_t,\boldsymbol{u}_t^{-})\leq \hat{Q}_{\mathrm{jt}}(\boldsymbol{\tau}_t,\bar{\boldsymbol{u}}_{t}^{k-1})\) are assigned zero, the minimum value. Meanwhile, all superior actions \(\boldsymbol{u}_t^{+}\) with \(\hat{Q}_{\mathrm{jt}}(\boldsymbol{\tau}_t,\boldsymbol{u}_t^{+}) > \hat{Q}_{\mathrm{jt}}(\boldsymbol{\tau}_t,\bar{\boldsymbol{u}}_{t}^{k-1})\) have positive values. Therefore, \(\bar{\boldsymbol{u}}_{t}^{k}\) only converges to actions that are superior to \(\bar{\boldsymbol{u}}_{t}^{k-1}\).


\begin{restatable}{theorem}{mainthmtwo}
    \label{centralized policy avoid the worst}
    Given a non-constant joint action value \(Q_{\mathrm{jt}}(\boldsymbol{\tau},\cdot)\) (\(\forall \boldsymbol{\tau} \in \boldsymbol{\mathcal{T}}, \forall C\in \mathbb{R}, Q_{\mathrm{jt}}(\boldsymbol{\tau},\cdot)\not\equiv C\)) with finite action-observation history space \(\boldsymbol{\mathcal{T}}\) and action space \(\boldsymbol{\mathcal{U}}\). For the ideal QMIX and the centralized \(\epsilon\)-greedy policy \(\pi\) defined in Equation~(\ref{centralized epsilon greedy}), we have \(\exists E\in(0,1), \forall \epsilon\in(0,E), \forall \boldsymbol{\tau} \in \boldsymbol{\mathcal{T}}, \forall \boldsymbol{u} \in \arg\min_{\boldsymbol{u}}Q_{\mathrm{jt}}(\boldsymbol{\tau},\boldsymbol{u})\), \(\boldsymbol{u}\) is not a stable point.
\end{restatable}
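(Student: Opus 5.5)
The plan is to fix an arbitrary $\boldsymbol{\tau}$ and compare two families of candidate $Q_{\mathrm{tot}}$ under the Ideal QMIX assumption (Appendix~\ref{section: ideal qmix}). Set $\widetilde{\boldsymbol{u}}=\boldsymbol{u}_{\min}\in\arg\min_{\boldsymbol{u}}Q_{\mathrm{jt}}(\boldsymbol{\tau},\boldsymbol{u})$. We show that every minimizer of $L_{\mathrm{tot}}$ has a greedy action different from $\boldsymbol{u}_{\min}$. We do this by exhibiting a representable $Q_{\mathrm{tot}}$ whose loss is strictly smaller than the loss of \emph{any} $Q_{\mathrm{tot}}$ whose greedy action is $\boldsymbol{u}_{\min}$. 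Since $\boldsymbol{\mathcal{T}}$ is finite, we then take $E$ as the minimum of the per-$\boldsymbol{\tau}$ thresholds; each threshold is positive because $Q_{\mathrm{jt}}(\boldsymbol{\tau},\cdot)$ is non-constant.

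Write $m=Q_{\mathrm{jt}}(\boldsymbol{\tau},\boldsymbol{u}_{\min})$, $A=1-\epsilon+\epsilon/|\boldsymbol{\mathcal{U}}|$ and $b=\epsilon/|\boldsymbol{\mathcal{U}}|$. These are the centralized $\epsilon$-greedy weights on $\widetilde{\boldsymbol{u}}$ and on the other actions.

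\textbf{Lower bound for greedy $=\boldsymbol{u}_{\min}$.} If $\boldsymbol{u}_{\min}$ is a greedy action, then by the IGM principle (\ref{IGM principle}) it maximizes $Q_{\mathrm{tot}}$. Writing $c=Q_{\mathrm{tot}}(\boldsymbol{u}_{\min})$, we therefore have $Q_{\mathrm{tot}}(\boldsymbol{u})\le c$ for all $\boldsymbol{u}$. Hence
\[
L_{\mathrm{tot}}\ \ge\ \min_{c}\Big[A(c-m)^2+b\sum_{\boldsymbol{u}\neq\boldsymbol{u}_{\min}}\big(Q_{\mathrm{jt}}(\boldsymbol{u})-c\big)_+^2\Big],
\]
because the optimal choice under the constraint is $Q_{\mathrm{tot}}(\boldsymbol{u})=\min(Q_{\mathrm{jt}}(\boldsymbol{u}),c)$. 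This bound holds whether or not that choice is representable. Let $S=\sum_{\boldsymbol{u}\neq\boldsymbol{u}_{\min}}(Q_{\mathrm{jt}}(\boldsymbol{u})-m)^2$. Setting $c=m+\delta$ and minimizing, the optimal $\delta$ is $O(b)$ because $A\approx 1$ dominates. The resulting minimum is therefore $bS-O(b^2)$, i.e. at least $bS-C\epsilon^2$ for a constant $C$ depending only on $Q_{\mathrm{jt}}(\boldsymbol{\tau},\cdot)$.

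\textbf{Competitor.} Pick $\boldsymbol{u}'\in\arg\max Q_{\mathrm{jt}}(\boldsymbol{\tau},\cdot)$ and let $\Delta=Q_{\mathrm{jt}}(\boldsymbol{u}')-m>0$; this is positive by non-constancy. Define
\[
Q_{\mathrm{tot}}(\boldsymbol{u})=m+\Delta\prod_i \mathbf{1}[u_i=u'_i].
\]
This is representable by a monotonic mixer: take $Q_i=\mathbf{1}[u_i=u'_i]\ge 0$ and $f_{\mathrm{mon}}=m+\Delta\prod_iQ_i$, which is non-decreasing on the nonnegative orthant. Each $Q_i$ has the unique maximizer $u'_i$, so the greedy action is uniquely $\boldsymbol{u}'\neq\boldsymbol{u}_{\min}$. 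The loss of this competitor is exactly $bS-b\Delta^2$.

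\textbf{Comparison.} The competitor beats the lower bound whenever $b\Delta^2>C\epsilon^2$, i.e. $\epsilon<\Delta^2/(C|\boldsymbol{\mathcal{U}}|)$, suitably capped below $1$. In that range no minimizer of $L_{\mathrm{tot}}$ can have $\boldsymbol{u}_{\min}$ as a greedy action. So $\bar{\boldsymbol{u}}\neq\boldsymbol{u}_{\min}$ for every solution, and $\boldsymbol{u}_{\min}$ is not even weakly stable.

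The main obstacle is making the $O(\epsilon^2)$ estimate in the lower bound rigorous and uniform. The plan is to minimize the convex function of $c$ explicitly. First observe that the optimum satisfies $c\ge m$ and $|c-m|\le b\,\Delta(|\boldsymbol{\mathcal{U}}|-1)/A$. Then expand: the gain over $c=m$ is at most $b^2(\sum_{\boldsymbol{u}} (Q_{\mathrm{jt}}(\boldsymbol{u})-m))^2/A$, which gives an explicit $C$. A secondary point to check against the Ideal QMIX definition is that it permits arbitrary monotonic $f_{\mathrm{mon}}$, including the product-of-indicators construction. If it does not, the same argument would be run with a smooth monotone surrogate that yields a strict unique greedy action at $\boldsymbol{u}'$ and loss within $o(\epsilon)$ of $bS-b\Delta^2$.
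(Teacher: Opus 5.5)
Your proof is correct, but it controls the value of $Q_{\mathrm{tot}}$ at the minimal action differently from the paper.

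\textbf{The paper's route.} It argues by contradiction. It takes a hypothetical minimizer $Q^*_{\mathrm{tot}}$ whose greedy action is $\bar{\boldsymbol{u}}^*\in\arg\min Q_{\mathrm{jt}}$. It then invokes the separate, purely qualitative Lemma~\ref{Qtot* limit} to get $c=Q^*_{\mathrm{tot}}(\boldsymbol{\tau},\bar{\boldsymbol{u}}^*)\to m$ as $\epsilon\to0^+$. From IGM it deduces $Q^*_{\mathrm{tot}}\le c<Q_{\mathrm{jt}}$ on $\boldsymbol{\mathcal{U}}^+$. Finally it compares against a spike competitor built on the minimizer's own baseline: the competitor equals $Q_{\mathrm{jt}}(\boldsymbol{\tau},\boldsymbol{u}^+)$ at one $\boldsymbol{u}^+\in\boldsymbol{\mathcal{U}}^+$ and $c$ elsewhere. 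The heavily weighted term at $\bar{\boldsymbol{u}}^*$ is identical on both sides and cancels. The leftover $-|\boldsymbol{\mathcal{U}}^-|(c-m)^2$ becomes negligible against $(c-Q_{\mathrm{jt}}(\boldsymbol{\tau},\boldsymbol{u}^+))^2$ as $c\to m$.

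\textbf{Your route.} You minimize out the unknown baseline $c$ explicitly. This gives the bound $bS-b^2D^2/A$, with $D=\sum_{\boldsymbol{u}}(Q_{\mathrm{jt}}(\boldsymbol{\tau},\boldsymbol{u})-m)$, valid for \emph{every} $Q_{\mathrm{tot}}$ (representable or not) that has $\boldsymbol{u}_{\min}$ as a greedy action. You then compare it with a fixed competitor anchored at $m$, which has no $A$-weighted error at all.

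\textbf{What each buys.} Your version makes Lemma~\ref{Qtot* limit} unnecessary and replaces the contradiction-plus-limit argument with a direct inequality. It also yields an explicit threshold, e.g.\ $E_{\boldsymbol{\tau}}=\min\{1/2,\,|\boldsymbol{\mathcal{U}}|\Delta^2/(2D^2)\}$. The paper's lemma is stated for a general class of $\epsilon$-greedy policies continuous in $\epsilon$, which is more generality than this theorem needs.

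\textbf{Your two flagged concerns.} The step you called the main obstacle is closed by the elementary inequality $(d-\delta)_+^2\ge d^2-2d\delta$ for $d,\delta\ge0$. The representability worry does not arise. Definition~\ref{definition ideal QMIX} only imposes the pairwise constraint~(\ref{monotonic constrain}), which your product-of-indicators pair satisfies. The paper uses a spike competitor of the same shape without even checking this.

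\textbf{A shared implicit assumption.} Your lower bound, like the paper's proof, uses IGM. Under the bare pairwise constraint, IGM can fail when some $Q_i$ has ties. Both arguments therefore implicitly read Ideal QMIX as $Q_{\mathrm{mon}}=f_{\mathrm{mon}}(Q_1,\dots,Q_n)$ with $f_{\mathrm{mon}}$ a genuine nondecreasing function. Your competitor already has this form, so nothing in your argument changes.
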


The forward computation in step \(t\) generates the greedy final action \(\bar{\boldsymbol{u}}_{t}\) during evalutaion (or the final action \(\boldsymbol{u}_t\) for exploration during training) that actually interacts with the environment. However, the greedy action in the final round is not always optimal. As shown in Table~\ref{target of Qtot}, since the greedy action in the round \(k=2\) is optimal, the action-value increment for all actions in the next round is zero. As a result, greedy action in the round \(k=3\) can converge to an arbitrary action, which could lead to a deterioration of \(\bar{\boldsymbol{u}}_{t}\). Therefore, when we need to output \(\bar{\boldsymbol{u}}_{t}\), \textbf{early termination is necessary once the optimal action is obtained}. To verify whether the optimal action has been found, we use the Strict Improvement Condition to check for a payoff increase between consecutive greedy actions. Specifically, when \(\hat{Q}_{\mathrm{jt}}(\boldsymbol{\tau}_t,\bar{\boldsymbol{u}}_t^{k}) \leq \hat{Q}_{\mathrm{jt}}(\boldsymbol{\tau}_t,\bar{\boldsymbol{u}}_{t}^{k-1})\), the forward process in step \(t\) is terminated with an output \(\bar{\boldsymbol{u}}_{t}^{k-1}\), otherwise the process proceed to the next round.


\subsection{Overview of the Architecture}
\begin{figure*}[h!]
    \centering
    \includegraphics[scale=0.55]{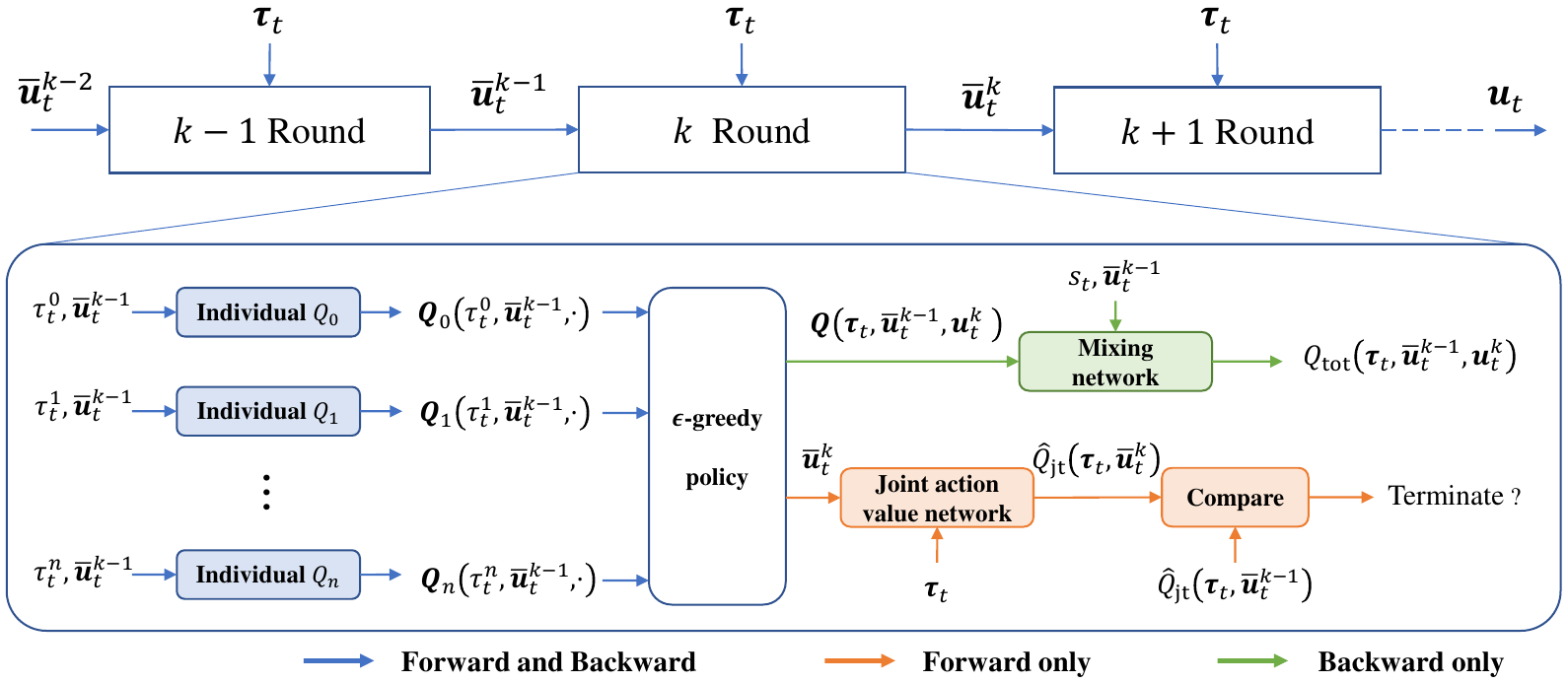}  
    \caption{The architecture of multi-round value factorization framework.}
    \label{img architecture of mrvd}
\end{figure*}

The architecture of our methods is shown in Figure~\ref{img architecture of mrvd}. The architecture consists of three crucial parts: individual Q networks, a mixing network with positive weights, and a joint action value network. Individual Q networks receive the action-observation history \(\boldsymbol{\tau}_t\) and the greedy action of the preceding round \(\bar{\boldsymbol{u}}_{t}^{k-1}\) (default action for the first round). Then, they output the actions of the current round \(\boldsymbol{u}_{t}^{k}\) through the \(\epsilon\)-greedy action selector. The mixing network receives individual Q values and outputs \(Q_{\mathrm{tot}}\). The joint action value network outputs \(\hat{Q}_{\mathrm{jt}}\) to approximate the real \(Q_{\mathrm{jt}}\). We update \(\hat{Q}_{\mathrm{jt}}\) by minimizing the temporal difference (TD) error in Equation~(\ref{jt loss}) (omitted in Figure~\ref{img architecture of mrvd}).
\begin{equation}
    L_{\mathrm{jt}} = \mathrm{E}_{\boldsymbol{\tau}_t, \boldsymbol{u}_t\sim\pi}[(\hat{Q}_{\mathrm{jt}}(\boldsymbol{\tau}_t,\boldsymbol{u}_t) - (r_t+\gamma\hat{Q}_{\mathrm{jt}}(\boldsymbol{\tau}_{t+1},\bar{\boldsymbol{u}}_{t+1})))^2]
    \label{jt loss}
\end{equation}
where \(\bar{\boldsymbol{u}}_t\) is the greedy final action, and \(\hat{Q}_{\mathrm{jt}}(\boldsymbol{\tau}_{t+1},\bar{\boldsymbol{u}}_{t+1})\) approximates \(\max_{\boldsymbol{u}_{t+1}}\hat{Q}_{\mathrm{jt}}(\boldsymbol{\tau}_{t+1},\boldsymbol{u}_{t+1})\). 

Within step \(t\), both the forward and backward processes take \(\boldsymbol{\tau}_{t}\) as input, and in each round, the individual Q networks are fed with \(\boldsymbol{\tau}_t\) and \(\bar{\boldsymbol{u}}_{t}^{k-1}\) to produce \(Q_i\) and \(\bar{\boldsymbol{u}}_{t}^{k}\). The difference is that the forward process computes and compares \(\hat{Q}_{\mathrm{jt}}\) to determine whether to terminate early, while the backward process does not involve early termination and computes \(\hat{Q}_{\mathrm{jt}}\) only once using the recorded \(\boldsymbol{u}_{t}\) from the batch (not shown in Figure~\ref{img architecture of mrvd}). In addition, \(Q_{\mathrm{tot}}\) is computed only in the backward.

\subsection{Sampling}
We approximate the expectation in \(L_{\mathrm{jt}}\) and \(L_{\mathrm{tot}}\) by sampling the action spaces. We design sampling strategies for multi-round frameworks to ensure stable training. For the final action \(\boldsymbol{u}_t\) in \(L_{\mathrm{jt}}\), its sampling should not only cover the greedy final action and random actions, but also cover the greedy action in each round, which obtains an accurate target value in Equation~(\ref{tot loss round>1}). Therefore, during training, we disable early termination and select \(\bar{\boldsymbol{u}}_{t}^{k}\) in a certain round \(k\) as the final action with probability \(p\). For \(\boldsymbol{u}_{t}^{k}\) in \(L_{\mathrm{tot}}\), its distribution \(\pi_k\) is the centralized \(\epsilon\)-greedy policy defined in Equation~(\ref{centralized epsilon greedy}) with \(\widetilde{\boldsymbol{u}}=\bar{\boldsymbol{u}}^k\).
\begin{equation}
    \pi(\boldsymbol{u}|\boldsymbol{\tau})=
    \begin{cases}
        1-\epsilon+\frac{\epsilon}{|\boldsymbol{\mathcal{U}}|}  &   \boldsymbol{u}=\widetilde{\boldsymbol{u}}   \\
        \frac{\epsilon}{|\boldsymbol{\mathcal{U}}|}    &   \boldsymbol{u} \neq \widetilde{\boldsymbol{u}}
    \end{cases}
    \label{centralized epsilon greedy}
\end{equation}
The reason for not using the decentralized one (defined in Equation~(\ref{decentralized epsilon greedy})) is that, according to Theorem~\ref{centralized policy avoid the worst}, the centralized \(\epsilon\)-greedy policy is required to guarantee the Strict Improvement Condition. More importantly, since \(Q_{\mathrm{tot}}\) learns the actual payoff from \(\hat{Q}_{\mathrm{jt}}\), the centralized \(\epsilon\)-greedy policy allows the same actions to be sampled in \(L_{\mathrm{jt}}\) and \(L_{\mathrm{tot}}\) when sampling random actions, which prevents \(Q_{\mathrm{tot}}\) from learning regions where \(\hat{Q}_{\mathrm{jt}}\) exhibits underfitting.

\begin{figure*}[t]
    \centering
    \includegraphics[scale=0.28]{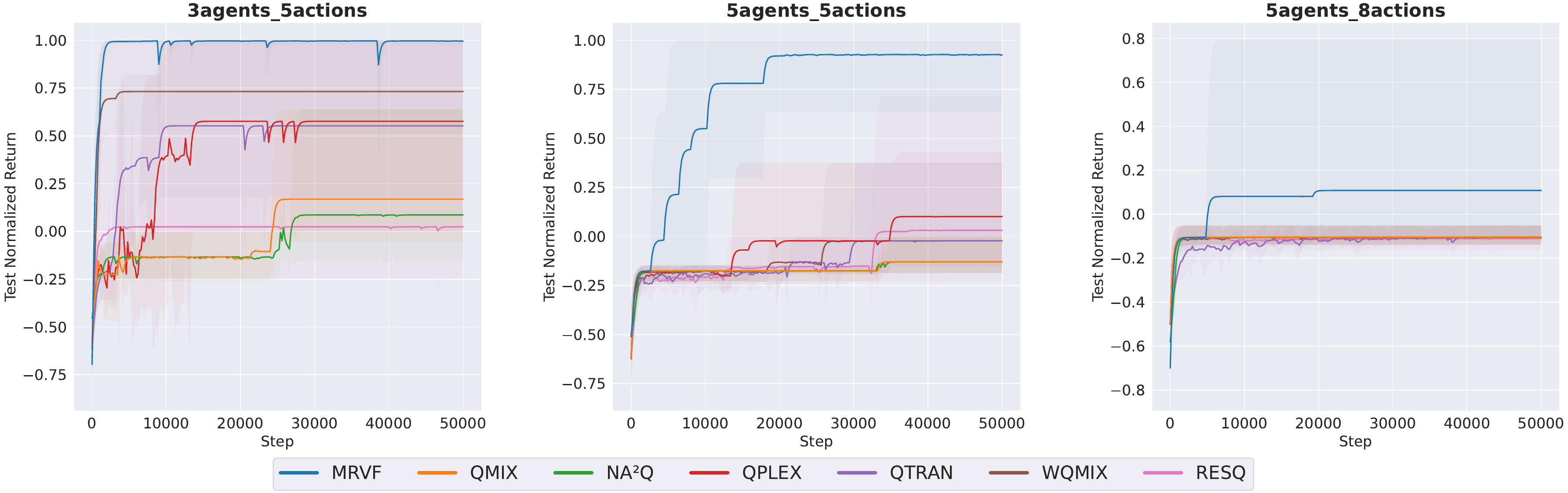}    
    \caption{Test normalized return in the risk-reward games. The positive returns are normalized to \([0,1]\) (\(0\) corresponds to the smallest positive return, and \(1\) corresponds to the largest). The negative returns are normalized to \([-1,0)\). Five random cases for each setting.}
    \label{risk result}
\end{figure*}

\begin{figure*}[t]
    \centering
    \includegraphics[scale=0.28]{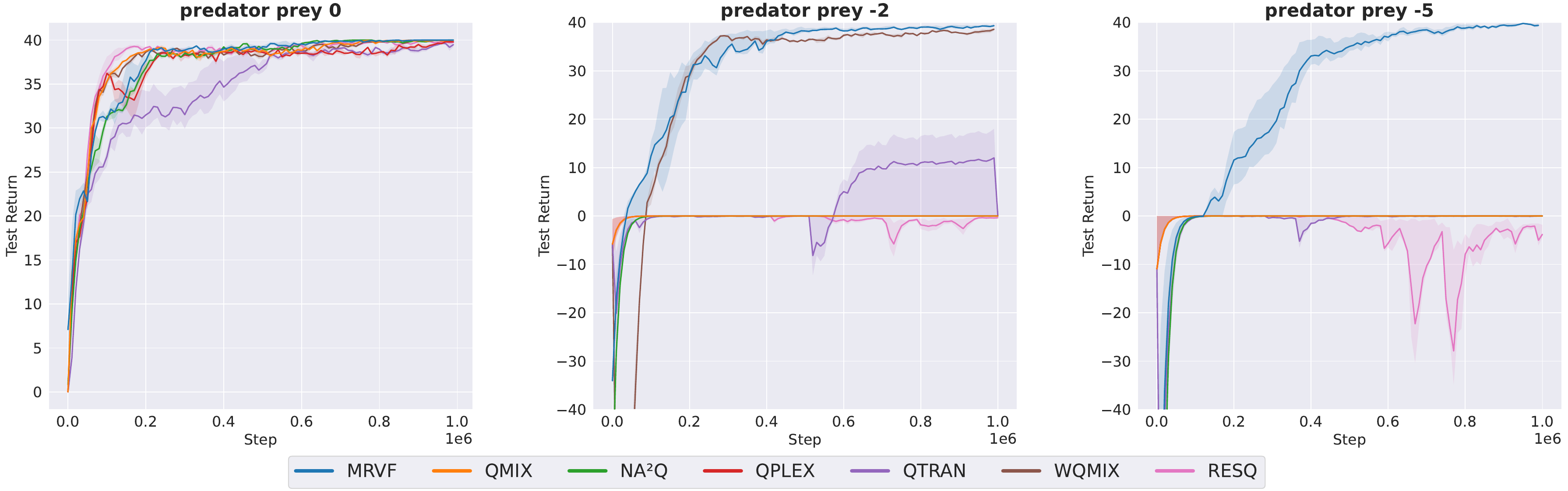}   
    \caption{Test return in the predator prey tasks with punishments 0 (left), -2 (middle), and -5 (right). The non-monotonicity of the payoff increases with the punishment.}
    \label{pred result}
\end{figure*}

\section{Experiment}
\label{experiment}
In this section, we evaluate the performance in one-step games, the predator-prey task~\citep{Ref:DCG} with increasing punishment, and a variety of SMAC~\citep{Ref:smac} scenarios, among which one-step games and predator-prey tasks with large punishment are environments with highly non-monotonic payoff (defined in Appendix~\ref{sec: def monotonic payoff}). In addition, experiments in SMACv2~\citep{Ref:smacv2} are included in Appendix~\ref{exp in SMACv2}, since this paper studies the impact of payoff structure rather than stochastic factors on value factorization. We report the median performance, with the shaded area denoting the 25\%-75\% percentile (0\%-100\% percentile for one-step games). 

We use the methods discussed in Section~\ref{Sec:theory} as our primary baseline. These algorithms are closely related to MRVF in that all aim to improve value factorization under non-monotonic payoffs. The results for other baselines (e.g., policy-based methods including MAPPO~\citep{Ref:mappo} and other recent value factorization methods) are presented in Appendix~\ref{exp on other baselines}. Our experimental setup is provided in Appendix~\ref{experimental setup}, and ablation results are presented in Appendix~\ref{ablation}.

\subsection{One-Step Game}

In Section~\ref{Subsec:suboptimality of single-round}, we observe that existing methods struggle to achieve optimal solutions in highly non-monotonic payoff matrices. However, benchmarks in matrix form~\citep{Ref:qplex, Ref:resq} suffer from limitations in both scale and generality. To address this, we introduce the \textit{risk-reward game}, which can randomly generate payoffs in the form of high-dimensional tensors. In a \textit{risk-reward game}, only \(|\mathcal{U}|\) out of  \(|\mathcal{U}|^n\) entries are positive in the payoff tensor (see Appendix~\ref{one-step game setup} for details). Therefore, agents must reach consensus to obtain positive rewards; otherwise, even a deviation by an individual agent yields large-magnitude negative rewards.


\begin{table*}[t]
    \centering
    \begin{tabular}{cccccccc}
        \toprule
        Methods & \textbf{MRVF}  & QMIX & NA\textsuperscript{2}Q & QPLEX & QTRAN & WQMIX & RESQ  \\
        \midrule
        Best & \textbf{6}  & 3 & 4 & 1 & 1 & 1 & 3  \\
        Worst & \textbf{0}  & 1 & 0 & 3 & 5 & 2 & 1  \\
        \bottomrule
    \end{tabular}
    \vspace{0.2cm}
    \caption{The number of SMAC scenarios where each method performs the best and the worst. We evaluate the final performance (averaged over steps after 1.5M) with a tolerance of \(\pm 0.05\).}
    \label{smac result table}
\end{table*}

\begin{figure*}[t]
    \centering
    \includegraphics[scale=0.28]{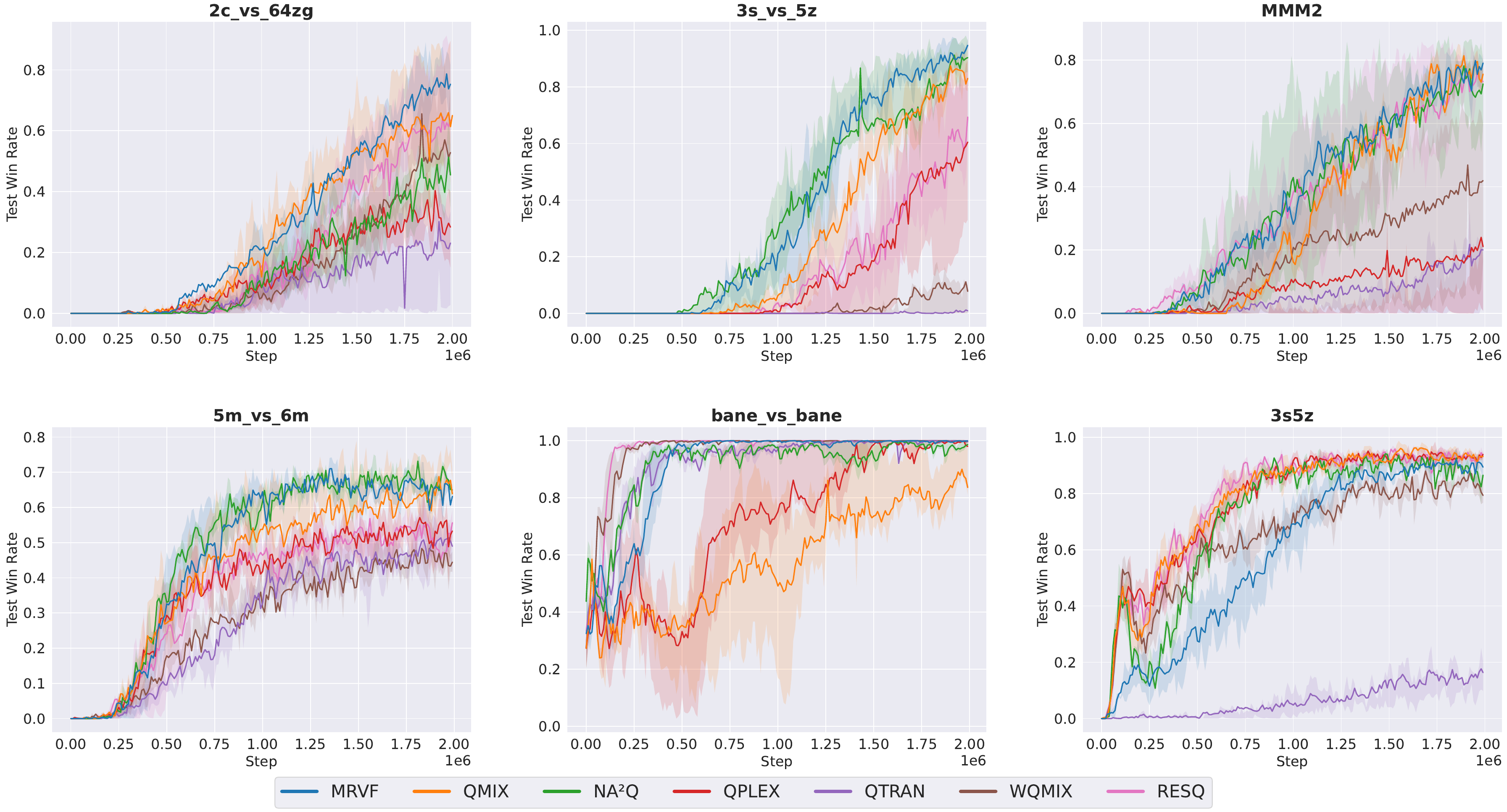}    
    \caption{Test win rate in the SMAC benchmarks.}
    \label{smac result}
\end{figure*}

As shown in Figure~\ref{risk result}, in cases with 3 agents and 5 actions, monotonic factorizations (QMIX~\citep{Ref:qmix} and NA\textsuperscript{2}Q~\citep{Ref:na2q}) obtain the smallest positive return, nearly zero, while QTRAN~\citep{Ref:qtran} and WQMIX~\citep{Ref:wqmix} only achieve a moderate positive return. In contrast, our method consistently attains the optimal return. When the number of agents increases to 5, the risk of obtaining positive returns increases, since consensus requires coordination among more agents. Existing methods rarely obtain positive returns, while our method still obtains the optimum with high probability. However, in cases with 5 agents and 8 actions, the positive rewards become too sparse to obtain. Despite this challenge, our method still obtains better results than others.

\subsection{Predator Prey}
We conduct experiments on the predator-prey task involving 8 agents, where capturing a prey requires cooperation between at least two agents. Agents are punished for capturing the prey alone. We vary the punishment from 0 to -5 to evaluate its impact on performance. The non-monotonic property of the payoff increases with the punishment, since agents must act more cautiously when choosing the "capture" action to ensure consensus and avoid punishments.

The results in Figure~\ref{pred result} show that all methods perform well when the punishment is zero. However, as the punishment intensifies, existing methods avoid the "capture" action to prevent punishments, resulting in \(return\approx0\). Among baseline approaches, WQMIX performs adequately only under moderate punishments, while the performance of ResQ shows significant fluctuations due to its weak stability. In contrast, our method consistently achieves the best performance, even under the strongest punishment of -5. Combined with the results in \textit{risk-reward} games, we conclude that our method outperforms existing methods in environments with highly non-monotonic payoff.

\subsection{StarCraft II Multi-Agent Challenge}

We conduct experiments on the SMAC benchmark in scenarios of varying difficulty. Monotonic value factorization methods, including QMIX~\citep{Ref:qmix} and NA\textsuperscript{2}Q~\citep{Ref:na2q}, are particularly well-suited for the SMAC because changes in an individual agent’s action within a single step have little impact on overall performance (see Appendix~\ref{smac setup} for details). Nevertheless, as shown in Figure~\ref{smac result}, MRVF even achieves an advantage over monotonic value factorization methods, particularly in \textit{3s\_vs\_5z},  \textit{2c\_vs\_64zg} and \textit{bane\_vs\_bane} scenarios. Meanwhile, the final performance of MRVF is the best in most scenarios, as shown in Table~\ref{smac result table}. Combined with the results in the \textit{risk-reward} games and the predator prey task, we conclude that MRVF performs the best in both monotonic and non-monotonic scenarios.


In contrast, other methods (QPLEX~\citep{Ref:qplex}, ResQ~\citep{Ref:resq}), QTRAN~\citep{Ref:qtran} and WQMIX~\citep{Ref:wqmix}) often converge to suboptimal stable points. This leads to a high variance in performance within a scenario and across scenarios. Within the same scenario, although they occasionally achieve strong performance (as seen in the upper bounds of the shaded regions in Figure~\ref{smac result}), their average performance lags behind. In addition, across different scenarios, their performance may vary drastically. For example, WQMIX performs well in \textit{bane\_vs\_bane} but poorly in both \textit{5m\_vs\_6m} and \textit{3s\_vs\_5z}. Therefore, MRVF achieves significant improvements in robustness and performance over them in almost all scenarios.

\section{Conclusion}
In this paper, we introduce a novel theoretical tool for studying the convergence of greedy action under value factorization, and propose MRVF, a novel framework for cooperative multi-agent reinforcement learning. In Section~\ref{Sec:theory}, we use this theoretical tool to derive the condition for global optimality in single-round factorization, and provide examples to demonstrate why existing methods struggle to satisfy this condition. In Section~\ref{Sec:method}, we propose the condition for global optimality in multi-round factorization, which is strictly improving the greedy action round by round. To satisfy this condition, we design the forward and backward computation of MRVF. In addition, we design new sampling strategies suitable for multi-round factorization to ensure training stability. Experiments on non-monotonic one-step games, predator-prey tasks, and the StarCraft II Multi-Agent Challenge show the superior performance and robustness of MRVF compared to state-of-the-art value factorization methods. 

\newpage        
\bibliography{mrvf}
\bibliographystyle{icml2026}

\newpage
\appendix
\onecolumn
\section{Relationship to Existing Work}

\paragraph{Relationship to Weighted QMIX} Both Weighted QMIX~\citep{Ref:wqmix} and the single-round value factorization in our method aim to enhance convergence toward actions superior to a given action. Weighted QMIX achieves this by reducing the fitting weight for inferior actions, whereas our method assigns them the minimum target value. However, Weighted QMIX cannot guarantee strict improvement over the given action, as suboptimal actions may be stable points. In contrast, our method ensures strict improvement by preventing actions with minimum target value from becoming stable points. Moreover, even when Weighted QMIX obtains the optimal action, it may fail to stabilize at this optimum because the optimal action might not be a stable point. Although our method similarly risks deviating from the optimal action, we mitigate this by early termination when no further improvement is detected.

\paragraph{Relationship to GVR} 

For the theoretical aspect, GVR~\citep{Ref:gvr} merely depicts the phenomenon of greedy-action transition caused by changes in the policy distribution, which can be regarded as a special case of the transition discussed in our paper. In contrast, we reveal that the essence of greedy-action transition lies in gradient-free components, and we model this transition with a discrete dynamical system. Unlike GVR, our theory serves as a theoretical paradigm for value factorization methods that involve gradient-free components. With this paradigm, we not only expose the suboptimality of prior work but also provide design guidelines for future factorization schemes. For the algorithmic aspect, the main idea of GVR~\citep{Ref:gvr} is to establish the optimal action as a stable point while rendering other actions non-stable. However, this approach requires prior knowledge of the optimal action, which is fundamentally infeasible in MARL since finding the optimal action is the primary objective. Consequently, GVR inevitably relies on approximations of ideal conditions, which not only increase the complexity but may also convert certain suboptimal actions into stable points. In contrast to GVR, our method converts inferior actions into unstable points in each iteration. Determining inferior actions is computationally straightforward, and by assigning them the minimum target value, existing monotonic value factorizations can effectively exclude these inferior actions. Therefore, our method strictly improves the current action through iterations, providing a more reliable approximation to the optimal action.

\paragraph{Relationship to Communication} 
Our method can also be interpreted as a multi-round communication algorithm within a parallel execution framework. In each iteration, agents exchange pre-decision information and generate new decisions through communication. The key challenge in such frameworks is ensuring that post-communication actions strictly improve upon pre-communication actions --- failure to achieve this prevents agents from reaching consensus, resulting in suboptimal solutions. In our method, by rendering actions inferior to pre-communication ones as non-stable points, our approach not only guarantees the improvement of post-communication actions but also approaches the optimal action within sufficient communication rounds. In contrast, sequential execution frameworks essentially extend single-step action selection across agents, where each agent's action depends on higher-priority agents' choices. Such frameworks not only demonstrate lower efficiency but also introduce additional concerns regarding policy convergence and suboptimality caused by execution ordering~\citep{Ref:seqcomm}.

\section{Definition}
\label{sec: definition}
In this section, we will define some important concepts that have been used in previous work but have not yet been rigorously defined.

\subsection{Monotonic Payoff}
\label{sec: def monotonic payoff}
Note that the definition of monotonicity for payoffs differs from that in functions of real variables, as there is no ordering relationship defined in the action space, the domain of the payoff. We define an order relation on the actions based on the function \(F\) as follows:

\begin{definition}[Order Relation \(\succeq\)]
    \label{def: order of actions}
    Let \(u_i^1, u_i^2\in\mathcal{U}_i\) be two actions of agent \(i\). For the function \(F\), if the order relation between \(u_i^1\) and  \(u_i^2\) is \(u_i^1\succeq u_i^2\), then
    \begin{equation}
        F(u_i^1,\boldsymbol{u}_{-i}) \geq F(u_i^2,\boldsymbol{u}_{-i}), \forall \boldsymbol{u}_{-i}\in\boldsymbol{\mathcal{U}}_{-i}
    \end{equation}
    where \(\boldsymbol{u}_{-i}=(u_1,\cdots,u_{i-1},u_{i+1},\cdots,u_n)\) is the joint action without \(u_i\).
\end{definition}

The function \(F\) can be replaced with \(Q_\mathrm{jt}\), a measurement of payoff. However, the order relation based on \(Q_\mathrm{jt}\) exists over the entire action space only if \(Q_\mathrm{jt}\) is monotonic, which is defined as follows:

\begin{definition}[Monotonic \(Q_\mathrm{jt}\)]
    \label{def: monotonic Qjt}
    \(\forall \boldsymbol{\tau}\in \boldsymbol{\mathcal{T}}\), if for every agent \(i\) and any two actions \(u_i^1, u_i^2\in\mathcal{U}_i\), there exists an order relation that is either \(u_i^1\succeq u_i^2\) or \(u_i^2\succeq u_i^1\), then \(Q_\mathrm{jt}\) is monotonic.
\end{definition}

\begin{figure*}[h]
    \centering
    \includegraphics[scale=0.5]{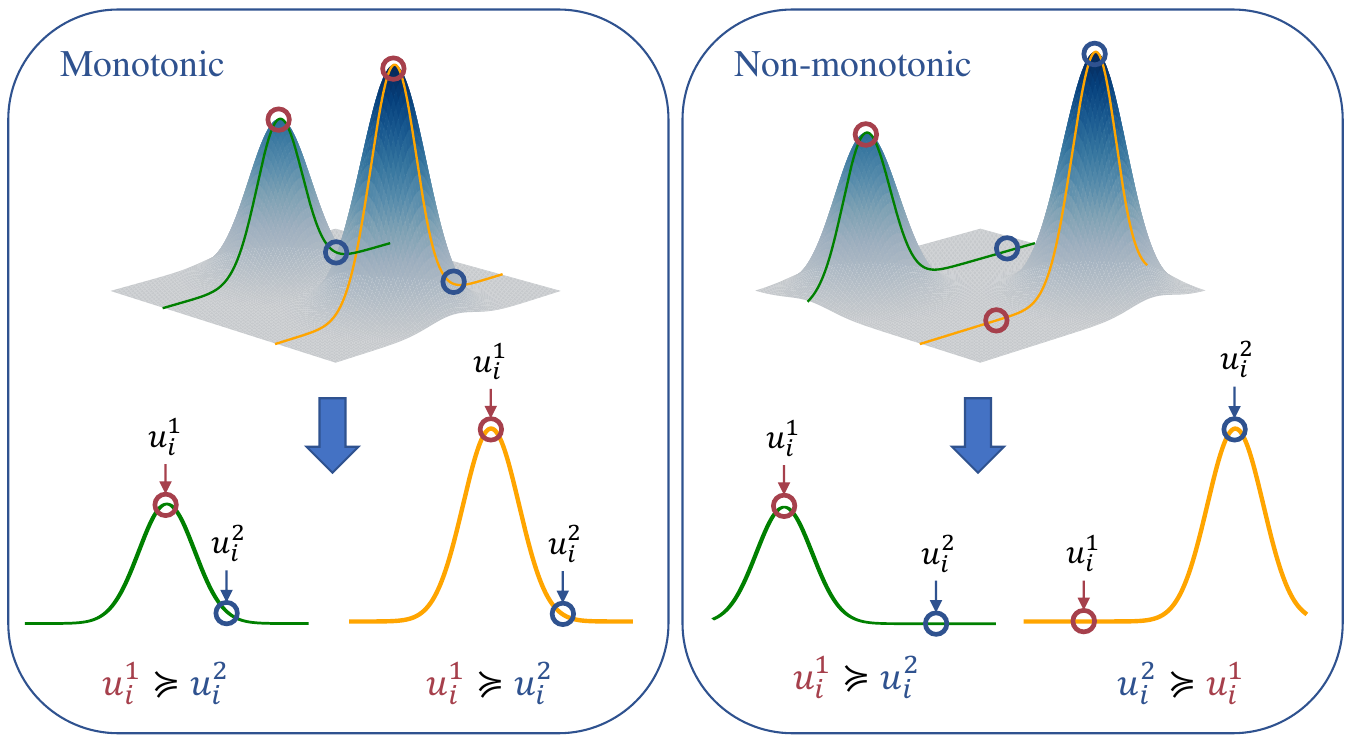}
    \caption{An illustration of Definition~\ref{def: monotonic Qjt}. We intercept curves corresponding to different \(\boldsymbol{u}_{-i}\) and check whether their monotonicity with respect to \(u_i\) is consistent.}
\end{figure*}

Table~\ref{monotonicity of payoff} shows the payoff matrices with different monotonicity. Monotonic factorization methods can easily obtain the optimal solution in monotonic cases ((a) and (b) in Table~\ref{monotonicity of payoff}) and sometimes in non-monotonic cases ((c) in Table~\ref{monotonicity of payoff}). However, monotonic factorizations struggle to obtain the optimal solution in highly non-monotonic cases ((d) in Table~\ref{monotonicity of payoff}).

\begin{table}[h]
    \centering
    \begin{subtable}[h]{0.22\linewidth}
        \centering
        \begin{tabular}{|c|c|c|}
            \hline
            9 & 8  & 7     \\
            \hline
            6     & 5 & 4      \\
            \hline
            3     & 2       & 1  \\
            \hline
        \end{tabular}
        \caption{Monotonic}
    \end{subtable}
    \hfill
    \begin{subtable}[h]{0.22\linewidth}
        \centering
        \begin{tabular}{|c|c|c|}
            \hline
            8 & 9  & 7     \\
            \hline
            2     & 3 & 1      \\
            \hline
            5     & 6       & 4  \\
            \hline
        \end{tabular}
        \caption{Monotonic}
    \end{subtable}%
    \hfill
    \begin{subtable}[h]{0.22\linewidth}
        \centering
        \begin{tabular}{|c|c|c|}
            \hline
            9 & 0  & 0     \\
            \hline
            0     & 5 & 0      \\
            \hline
            0     & 0       & 0  \\
            \hline
        \end{tabular}
        \caption{Non-monotonic}
    \end{subtable}%
    \hfill
    \begin{subtable}[h]{0.25\linewidth}
        \centering
        \begin{tabular}{|c|c|c|}
            \hline
            9 & -9  & -9     \\
            \hline
            -9     & 5 & 0      \\
            \hline
            -9     & 0       & 0  \\
            \hline
        \end{tabular}
        \caption{Highly non-monotonic}
    \end{subtable}
    \caption{Payoff matrices with different monotonicity, where the value in row \(r\) and column \(c\) denotes the payoff of joint action \(\boldsymbol{u}=(r,c)\). (a): A monotonic payoff matrix. (b) A monotonic payoff matrix generated by rearranging the rows and columns of (a). (c) A non-monotonic payoff matrix, where the relation between any two rows or columns is partially ordered. (d) A highly non-monotonic payoff matrix, where the relation between any two rows or columns is largely unordered.}
    \label{monotonicity of payoff}
\end{table}

\subsection{Ideal QMIX}
\label{section: ideal qmix}
The objective of QMIX is to find individual \(Q\)s and \(Q_{\mathrm{mon}}\) that minimize the MSE between \(Q_{\mathrm{tot}}\) and \(Q_{\mathrm{jt}}\). Considering a \(\boldsymbol{\tau}\in \boldsymbol{\mathcal{T}}\) in Equation~(\ref{equ: Ltot of value factorization}), the loss function is defined in Equation~(\ref{loss of QMIX}).
\begin{equation}
    L_{\mathrm{tot}}(\boldsymbol{\tau})=\sum_{\boldsymbol{u}}{\pi(\boldsymbol{u}|\boldsymbol{\tau})(Q_{\mathrm{mon}}(\boldsymbol{\tau},\boldsymbol{u})-Q_{\mathrm{jt}}(\boldsymbol{\tau},\boldsymbol{u}))^2}
    \label{loss of QMIX}
\end{equation}
Since the relationship between individual \(Q\)s and \(Q_{\mathrm{mon}}\) is monotonic, the magnitude relationship of \(Q_{\mathrm{mon}}\) is constrained by that of individual \(Q\)s, which is \(\forall \boldsymbol{\tau}\in \boldsymbol{\mathcal{T}}\) and \(\forall i\in\{1,2,\cdots,n\}, \forall u_i, v_i\in\mathcal{U}_i, \boldsymbol{u}_{-i}\in\boldsymbol{\mathcal{U}}_{-i}=\mathcal{U}_1\times\cdots\mathcal{U}_{i-1}\times\mathcal{U}_{i+1}\times\cdots\times\mathcal{U}_n\)
\begin{equation}
    (Q_{\mathrm{mon}}(\boldsymbol{\tau},\boldsymbol{u}_{-i},u_i)-Q_{\mathrm{mon}}(\boldsymbol{\tau},\boldsymbol{u}_{-i},v_i))(Q_i(\boldsymbol{\tau},u_i)-Q_i(\boldsymbol{\tau},v_i))\geq0
    \label{monotonic constrain}
\end{equation}
Combining the objective defined in Equation~(\ref{loss of QMIX}) and the constrain defined in Equation~(\ref{monotonic constrain}), we get the ideal optimization problem that QMIX solves. In practice, QMIX uses the network as a monotonic function \(f_{\mathrm{mon}}\) to represent the constraint between individual \(Q\)s and \(Q_{\mathrm{mon}}\), and applies stochastic gradient descent (SGD) to minimize the objective. In this way, \(Q_i\) is updated based on its gradient which involves \(\frac{\partial{f_{\mathrm{mon}}}}{\partial{Q_i}}\). However, since how \(\frac{\partial{f_{\mathrm{mon}}}}{\partial{Q_i}}\) varies is unknown, to avoid this confounding factor, we assume that QMIX can obtain the optimal combination of individual \(Q\)s and \(Q_{\mathrm{mon}}\) that satisfies Equation~(\ref{monotonic constrain}), and consequently minimize \(L_{\mathrm{tot}}\).
We name QMIX based on this assumption as the ideal QMIX which is defined as follows~\footnote{The policy \(\pi\) is omitted in the definition by ~\citet{Ref:wqmix}. However, it plays a crucial role in shaping \(Q_{\mathrm{tot}}\), which will be discussed in Lemma~\ref{Qtot* limit}.}:
\begin{definition}[Ideal QMIX]
    \label{definition ideal QMIX}
    Let \(\mathcal{Q}\) be the set of pairs \((\boldsymbol{Q},Q_{\mathrm{mon}})\), where the individual action values \(\boldsymbol{Q}=(Q_1,Q_2,\cdots,Q_n)\) and \(Q_{\mathrm{mon}}\) satisfy Equation~(\ref{monotonic constrain}). For a joint action value \(Q_{\mathrm{jt}}\) and policy \(\pi\), \(\boldsymbol{Q}\) and \(Q_{\mathrm{mon}}\) of the ideal QMIX converge to \(\boldsymbol{Q}^*\) and \(Q_{\mathrm{mon}}^*\) which satisfy \(\forall \boldsymbol{\tau}\in \boldsymbol{\mathcal{T}}\)
    \begin{equation}
        \boldsymbol{Q}^*,Q_{\mathrm{mon}}^*\in\mathop{\arg\min}\limits_{(\boldsymbol{Q},Q_{\mathrm{mon}})\in \mathcal{Q}}\sum_{\boldsymbol{u}}{\pi(\boldsymbol{u}|\boldsymbol{\tau})(Q_{\mathrm{mon}}(\boldsymbol{\tau},\boldsymbol{u})-Q_{\mathrm{jt}}(\boldsymbol{\tau},\boldsymbol{u}))^2}
        \label{ideal QMIX}
    \end{equation}
    where \(Q_{\mathrm{tot}}=Q_{\mathrm{mon}}\) in QMIX.
\end{definition}

As for other value factorization methods such as WQMIX~\citep{Ref:wqmix} and ResQ~\citep{Ref:resq} where \(Q_{\mathrm{mon}}\) appears in their design, we also assume they share the same property as the ideal QMIX, which aligns with the assumption used in their papers.

From Definition~\ref{definition ideal QMIX}, the ideal QMIX obtains a monotonic representation of \(Q_{\mathrm{jt}}\) with the minimum \(L_\mathrm{tot}\). We illustrate this process in matrix cases: Given the order of rows and columns, we can find the monotonic matrix with local minimum \(L_\mathrm{tot}\) by solving the constraint optimization problem. To find the monotonic matrix with minimum \(L_\mathrm{tot}\), we compare \(L_\mathrm{tot}\) for all possible orders. We give an example of the ideal QMIX shown in Table~\ref{non-monotonic example}.

\begin{table}[h]
  \centering
  \begin{subtable}[h]{0.18\linewidth}
  \begin{tabular}{|c|c|c|}
    \multicolumn{3}{c}{\ } \\
    \hline
    8 & -12  & -12     \\
    \hline
    -12     & 3 & 0      \\
    \hline
    -12     & 0       & 5  \\
    \hline
  \end{tabular}
  \caption{\(Q_{\mathrm{jt}}\)}
    \end{subtable}   
    \hspace{0.05\linewidth}
    \begin{subtable}[h]{0.15\linewidth}
      \begin{tabular}{|c|c|c|}
        \multicolumn{3}{c}{\(L_\mathrm{tot}\)=36.22}     \\
        \hline
        -8 & -8  & -8     \\
        \hline
        -8     & 1 & 1      \\
        \hline
        -8     & 1       & \textbf{5}  \\
        \hline
      \end{tabular}
      \caption{\(Q_{\mathrm{tot}}^*\)}
     \end{subtable}
     \hspace{0.05\linewidth}
    \begin{subtable}[h]{0.15\linewidth}
      \begin{tabular}{|c|c|c|}
        \multicolumn{3}{c}{\(L_\mathrm{tot}\)=45.56}   \\
        \hline
        \textbf{8} & -5  & -5     \\
        \hline
        -5     & -5 & -5      \\
        \hline
        -5     & -5       & -5  \\
        \hline
      \end{tabular}
      \caption{\(Q_{\mathrm{tot}}^*\) for \(\bar{\boldsymbol{u}}=\boldsymbol{u}^*\)}
    \end{subtable}
  \caption{(a): A non-monotonic payoff matrix, where the value presented in row \(r\) and column \(c\) denotes the \(Q_{\mathrm{jt}}\) value of joint action \((r,c)\). (b): The monotonic representation with the global minimum \(L_\mathrm{tot}\). (c) The optimal monotonic representation constrained by \(\bar{\boldsymbol{u}}=\boldsymbol{u}^*\). Since the \(L_\mathrm{tot}\) of (b) is less than the \(L_\mathrm{tot}\) of (c), \(Q_{\mathrm{tot}}\) of the ideal QMIX under uniform visitation (\(\pi\equiv1\)) will converge to (b), which fails to get the global optimal action.}
  \label{non-monotonic example}
\end{table}



\section{Proofs}
\label{proof of theorems}

\subsection{Theorem on MRVF}

\mainthmone*

\begin{proof}
\label{proof: approximate convergence}
Let \(\boldsymbol{\mathcal{U}}_+^k=\{\boldsymbol{u}|Q_{\mathrm{jt}}(\boldsymbol{\tau},\boldsymbol{u})> Q_{\mathrm{jt}}(\boldsymbol{\tau},\bar{\boldsymbol{u}}^{k}),\boldsymbol{u}\in\boldsymbol{\mathcal{U}}\}\) be the set of actions superior to \(\bar{\boldsymbol{u}}^{k}\). 

\textbf{Assume to the contrary that} \(\forall k>0, \bar{\boldsymbol{u}}^k\neq\boldsymbol{u}^*\). Since \(\forall k>1,Q_{\mathrm{jt}}(\boldsymbol{\tau},\bar{\boldsymbol{u}}^k)> Q_{\mathrm{jt}}(\boldsymbol{\tau},\bar{\boldsymbol{u}}^{k-1})\) when \(\bar{\boldsymbol{u}}^{k-1}\neq\boldsymbol{u}^*\) (Strict Improvement Condition), we have \(\boldsymbol{\mathcal{U}}_+^k\subset\boldsymbol{\mathcal{U}}_+^{k-1}\), which is equivalent to

\begin{equation}
    \label{equ:shrink superior set}
    |\boldsymbol{\mathcal{U}}_+^{k-1}|-|\boldsymbol{\mathcal{U}}_+^{k}|\geq1
\end{equation}

Telescoping Equation~(\ref{equ:shrink superior set}) from \(k=1\) to \(k=K\), we get \(\forall K>0, |\boldsymbol{\mathcal{U}}_+^{K}|\leq |\boldsymbol{\mathcal{U}}_+^{1}|-(K-1)\). 

If we choose \(K>|\boldsymbol{\mathcal{U}}|+1\), since \(\boldsymbol{\mathcal{U}}_+^{1}\subseteq\boldsymbol{\mathcal{U}}\), we get \(|\boldsymbol{\mathcal{U}}_+^{K}|< |\boldsymbol{\mathcal{U}}_+^{1}|-|\boldsymbol{\mathcal{U}}|\leq0\), which contradicts the fact that \( |\boldsymbol{\mathcal{U}}_+^{K}|\geq0\). Therefore, we prove that \(\exists K>0, \bar{\boldsymbol{u}}^K=\boldsymbol{u}^*\).
\end{proof}

\begin{lemma}
    \label{Qtot* limit}
    For the idea QMIX defined in Equation~(\ref{ideal QMIX}) and \(\epsilon\)-greedy policy \(\pi\) that is continuous for \(\epsilon\in[0,1]\) and satisfies Equation~(\ref{epsilon greedy policy}), we have \(\lim\limits_{\epsilon\to0^+}Q^*_{\mathrm{tot}}(\boldsymbol{\tau},\widetilde{\boldsymbol{u}})=Q_{\mathrm{jt}}(\boldsymbol{\tau},\widetilde{\boldsymbol{u}})\) for given \(\boldsymbol{\tau}\in \boldsymbol{\mathcal{T}}\), where \(Q_{\mathrm{tot}}^*\) is the converged \(Q_{\mathrm{tot}}\).
    \begin{equation}
        \lim_{\epsilon\to0^+}\pi(\boldsymbol{u}|\boldsymbol{\tau})=
        \begin{cases}
            1   &   \boldsymbol{u}=\widetilde{\boldsymbol{u}} \\
            0   &   otherwise
        \end{cases}
        \label{epsilon greedy policy}
    \end{equation}
\end{lemma}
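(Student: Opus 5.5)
We prove the lemma by a squeeze on the converged loss. Fix \(\boldsymbol{\tau}\) and write \(\pi_\epsilon\) for the policy at level \(\epsilon\), \(L_\epsilon(\boldsymbol{Q},Q_{\mathrm{mon}})\) for the objective in Equation~(\ref{ideal QMIX}), and \(L^*_\epsilon\) for its minimum value over \(\mathcal{Q}\), attained by \((\boldsymbol{Q}^*,Q^*_{\mathrm{mon}})\). The idea is that a trivial feasible point already has loss vanishing in \(\epsilon\). The weight on \(\widetilde{\boldsymbol{u}}\) stays bounded away from zero. Together these force the squared error at \(\widetilde{\boldsymbol{u}}\) to vanish.

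\textbf{Upper bound on \(L^*_\epsilon\).} Take a constant competitor: all \(Q_i\equiv 0\) and \(Q_{\mathrm{mon}}(\boldsymbol{\tau},\cdot)\equiv Q_{\mathrm{jt}}(\boldsymbol{\tau},\widetilde{\boldsymbol{u}})\). This pair satisfies Equation~(\ref{monotonic constrain}) trivially, since both factors of the product are zero, so it lies in \(\mathcal{Q}\). Its loss is
\begin{equation*}
\sum_{\boldsymbol{u}\neq\widetilde{\boldsymbol{u}}}\pi_\epsilon(\boldsymbol{u}|\boldsymbol{\tau})\bigl(Q_{\mathrm{jt}}(\boldsymbol{\tau},\widetilde{\boldsymbol{u}})-Q_{\mathrm{jt}}(\boldsymbol{\tau},\boldsymbol{u})\bigr)^2\le M\sum_{\boldsymbol{u}\neq\widetilde{\boldsymbol{u}}}\pi_\epsilon(\boldsymbol{u}|\boldsymbol{\tau}),
\end{equation*}
where \(M=\max_{\boldsymbol{u},\boldsymbol{v}}(Q_{\mathrm{jt}}(\boldsymbol{\tau},\boldsymbol{u})-Q_{\mathrm{jt}}(\boldsymbol{\tau},\boldsymbol{v}))^2\) is finite because \(\boldsymbol{\mathcal{U}}\) is finite. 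By Equation~(\ref{epsilon greedy policy}) and finiteness of \(\boldsymbol{\mathcal{U}}\), the right-hand side tends to \(0\) as \(\epsilon\to0^+\). Hence \(0\le L^*_\epsilon\to 0\).

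\textbf{Lower bound.} Every term of \(L_\epsilon\) is nonnegative, so
\begin{equation*}
\pi_\epsilon(\widetilde{\boldsymbol{u}}|\boldsymbol{\tau})\bigl(Q^*_{\mathrm{tot}}(\boldsymbol{\tau},\widetilde{\boldsymbol{u}})-Q_{\mathrm{jt}}(\boldsymbol{\tau},\widetilde{\boldsymbol{u}})\bigr)^2\le L^*_\epsilon .
\end{equation*}
Since \(\pi_\epsilon(\widetilde{\boldsymbol{u}}|\boldsymbol{\tau})\to1\), it exceeds \(1/2\) for all sufficiently small \(\epsilon\). This gives \((Q^*_{\mathrm{tot}}(\boldsymbol{\tau},\widetilde{\boldsymbol{u}})-Q_{\mathrm{jt}}(\boldsymbol{\tau},\widetilde{\boldsymbol{u}}))^2\le 2L^*_\epsilon\to0\), which is the claim. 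The bound holds for every minimizer, so non-uniqueness of the converged \(Q^*_{\mathrm{tot}}\) causes no trouble.

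\textbf{Main obstacle.} The only delicate point is making sure the competitor is genuinely admissible. We must check that the constant \(Q_{\mathrm{mon}}\) paired with constant \(Q_i\) is allowed by Definition~\ref{definition ideal QMIX}, which it is because \(\mathcal{Q}\) is defined solely by Equation~(\ref{monotonic constrain}). Continuity of \(\pi\) in \(\epsilon\) is not needed beyond the limit in Equation~(\ref{epsilon greedy policy}). The argument also applies verbatim to both the centralized policy in Equation~(\ref{centralized epsilon greedy}) and the decentralized policy in Equation~(\ref{decentralized epsilon greedy}).
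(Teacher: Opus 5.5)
Your proof is correct and rests on the same idea as the paper's: compare the minimizer with a constant, hence trivially admissible, \(Q_{\mathrm{tot}}\) anchored at \(Q_{\mathrm{jt}}(\boldsymbol{\tau},\widetilde{\boldsymbol{u}})\), using \(\pi(\widetilde{\boldsymbol{u}}|\boldsymbol{\tau})\to1\) while all other weights vanish. The paper argues by contradiction with the shifted constant \(Q_{\mathrm{jt}}(\boldsymbol{\tau},\widetilde{\boldsymbol{u}})+\Delta/2\); your direct squeeze with the unshifted constant is a bit cleaner and also gives the explicit bound \((Q^*_{\mathrm{tot}}(\boldsymbol{\tau},\widetilde{\boldsymbol{u}})-Q_{\mathrm{jt}}(\boldsymbol{\tau},\widetilde{\boldsymbol{u}}))^2\le M\sum_{\boldsymbol{u}\neq\widetilde{\boldsymbol{u}}}\pi(\boldsymbol{u}|\boldsymbol{\tau})/\pi(\widetilde{\boldsymbol{u}}|\boldsymbol{\tau})\) for every minimizer.
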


\begin{proof}
    \textbf{Assume for a contradiction} that \(\lim\limits_{\epsilon\to0^+}Q^*_{\mathrm{tot}}(\boldsymbol{\tau},\widetilde{\boldsymbol{u}})\neq Q_{\mathrm{jt}}(\boldsymbol{\tau},\widetilde{\boldsymbol{u}})\), which means \(\exists \Delta>0, \forall E>0, \exists \epsilon \in (0,E),|Q^*_{\mathrm{tot}}(\boldsymbol{\tau},\widetilde{\boldsymbol{u}})-Q_{\mathrm{jt}}(\boldsymbol{\tau},\widetilde{\boldsymbol{u}})|\geq\Delta\).
    
     The loss of \(Q^*_{\mathrm{tot}}\) defined in Equation~(\ref{ideal QMIX}) is
    \begin{equation}
        \begin{aligned}
            L^*_{\mathrm{tot}} & = \sum_{\boldsymbol{u}}{\pi(\boldsymbol{u}|\boldsymbol{\tau})(Q^*_{\mathrm{tot}}(\boldsymbol{\tau},\boldsymbol{u})-Q_{\mathrm{jt}}(\boldsymbol{\tau},\boldsymbol{u}))^2} \\
            & =\sum_{\boldsymbol{u}\neq\widetilde{\boldsymbol{u}}}\pi(\boldsymbol{u}|\boldsymbol{\tau}){(Q^*_{\mathrm{tot}}(\boldsymbol{\tau},\boldsymbol{u})-Q_{\mathrm{jt}}(\boldsymbol{\tau},\boldsymbol{u}))^2}
            +
            \pi(\widetilde{\boldsymbol{u}}|\boldsymbol{\tau})(Q^*_{\mathrm{tot}}(\boldsymbol{\tau},\widetilde{\boldsymbol{u}})-Q_{\mathrm{jt}}(\boldsymbol{\tau},\widetilde{\boldsymbol{u}}))^2 \\
            & \geq \pi(\widetilde{\boldsymbol{u}}|\boldsymbol{\tau})\Delta^2
        \end{aligned}
    \end{equation}
    Let \(Q_{\mathrm{tot}}^{'}(\boldsymbol{\tau},\cdot)\equiv \frac{\Delta}{2}+Q_{\mathrm{jt}}(\boldsymbol{\tau},\widetilde{\boldsymbol{u}})\). The loss of \(Q_{\mathrm{tot}}^{'}\) is 
    \begin{equation}
        \begin{aligned}
            L_{\mathrm{tot}}^{'} =\sum_{\boldsymbol{u}\neq\widetilde{\boldsymbol{u}}}\pi(\boldsymbol{u}|\boldsymbol{\tau}){(\frac{\Delta}{2}+Q_{\mathrm{jt}}(\boldsymbol{\tau},\widetilde{\boldsymbol{u}})-Q_{\mathrm{jt}}(\boldsymbol{\tau},\boldsymbol{u}))^2}
            +
            \pi(\widetilde{\boldsymbol{u}}|\boldsymbol{\tau})(\frac{\Delta}{2})^2
        \end{aligned}
    \end{equation}
    Consider \(L_{\mathrm{tot}}^{*}-L_{\mathrm{tot}}^{'}\), we have
    \begin{equation}
        L_{\mathrm{tot}}^{*}-L_{\mathrm{tot}}^{'}
        \geq \pi(\widetilde{\boldsymbol{u}}|\boldsymbol{\tau})(\frac{\Delta}{2})^2
        -\sum_{\boldsymbol{u}\neq\widetilde{\boldsymbol{u}}}\pi(\boldsymbol{u}|\boldsymbol{\tau}){(\frac{\Delta}{2}+Q_{\mathrm{jt}}(\boldsymbol{\tau},\widetilde{\boldsymbol{u}})-Q_{\mathrm{jt}}(\boldsymbol{\tau},\boldsymbol{u}))^2} 
    \end{equation}
    Notice that 
    \begin{equation}
       \lim_{\epsilon\to0^+}(\pi(\widetilde{\boldsymbol{u}}|\boldsymbol{\tau})(\frac{\Delta}{2})^2
        -\sum_{\boldsymbol{u}\neq\widetilde{\boldsymbol{u}}}\pi(\boldsymbol{u}|\boldsymbol{\tau}){(\frac{\Delta}{2}+Q_{\mathrm{jt}}(\boldsymbol{\tau},\widetilde{\boldsymbol{u}})-Q_{\mathrm{jt}}(\boldsymbol{\tau},\boldsymbol{u}))^2}) 
        = (\frac{\Delta}{2})^2  > 0   
    \end{equation}
    Therefore, \(\exists E>0, \forall \epsilon\in(0,E),L^*_{\mathrm{tot}}>L_{\mathrm{tot}}^{'}\),  contradicting the condition that \(L_{tot}^*\) is the minimum (the definition of \(Q^*_{\mathrm{tot}}\) in Equation~(\ref{ideal QMIX}). 
    
    Hence the assumption is false, and we have \(\lim\limits_{\epsilon\to0^+}Q^*_{\mathrm{tot}}(\boldsymbol{\tau},\widetilde{\boldsymbol{u}})=Q_{\mathrm{jt}}(\boldsymbol{\tau},\widetilde{\boldsymbol{u}})\).
\end{proof}

\mainthmtwo*

\begin{proof}
    \label{proof: centralized policy avoid the worst}
    For \(\boldsymbol{\tau} \in \boldsymbol{\mathcal{T}}\), \textbf{Assume for a contradiction} that an action with the minimum payoff can be a stable point, which means \(\forall E_{\boldsymbol{\tau}}\in(0,1],\exists\epsilon\in(0,E_{\boldsymbol{\tau}}),\exists Q_{\mathrm{tot}}^*\) that \(Q_{\mathrm{jt}}(\boldsymbol{\tau},\bar{\boldsymbol{u}}^*)=\min_{\boldsymbol{u}}Q_{\mathrm{jt}}(\boldsymbol{\tau},\boldsymbol{u})\) and \(\widetilde{\boldsymbol{u}}=\bar{\boldsymbol{u}}^*\) (\(\bar{\boldsymbol{u}}^*\) is the stable point with the minimun \(Q_{\mathrm{jt}}\)), where \(Q_{\mathrm{tot}}^*\) is the converged \(Q_{\mathrm{tot}}\) of the ideal QMIX (see Definition~\ref{definition ideal QMIX}).
    
    We define \(\boldsymbol{\mathcal{U}}^+=\{\boldsymbol{u}|Q_{\mathrm{jt}}(\boldsymbol{\tau},\boldsymbol{u})>\min_{\boldsymbol{u}}Q_{\mathrm{jt}}(\boldsymbol{\tau},\boldsymbol{u})\}\) and \(\boldsymbol{\mathcal{U}}^-=\boldsymbol{\mathcal{U}}-\boldsymbol{\mathcal{U}}^+\) . 
    
    Since \(\lim\limits_{\epsilon\to0^+}Q^*_{\mathrm{tot}}(\boldsymbol{\tau},\bar{\boldsymbol{u}}^*)=Q_{\mathrm{jt}}(\boldsymbol{\tau},\bar{\boldsymbol{u}}^*)\) (Lemma~\ref{Qtot* limit} and \(\widetilde{\boldsymbol{u}}=\bar{\boldsymbol{u}}^*\)), we have \(\exists E_{\boldsymbol{\tau}}^0\in(0,1], \forall \epsilon\in(0,E_{\boldsymbol{\tau}}^0), Q^*_{\mathrm{tot}}(\boldsymbol{\tau},\bar{\boldsymbol{u}}^*)<\min_{\boldsymbol{u}\in\boldsymbol{\mathcal{U}}^+}Q_{\mathrm{jt}}(\boldsymbol{\tau},\boldsymbol{u})\). Thus, \(\forall \boldsymbol{u}\in\boldsymbol{\mathcal{U}}^+\), since \(Q^*_{\mathrm{tot}}(\boldsymbol{\tau},\boldsymbol{u})\leq Q^*_{\mathrm{tot}}(\boldsymbol{\tau},\bar{\boldsymbol{u}}^*)<Q_{\mathrm{jt}}(\boldsymbol{\tau},\boldsymbol{u})\). 
    
    Therefore, for the point-wise squared error of  \(L^*_{\mathrm{tot}}\), we have
    \begin{equation}
        \begin{aligned}
            L^*_{\mathrm{tot}}(\boldsymbol{u}) & = \pi(\boldsymbol{u}|\boldsymbol{\tau})(Q^*_{\mathrm{tot}}(\boldsymbol{\tau},\boldsymbol{u})-Q_{\mathrm{jt}}(\boldsymbol{\tau},\boldsymbol{u}))^2      \\
            & > \frac{\epsilon}{|\boldsymbol{\mathcal{U}}|}(Q^*_{\mathrm{tot}}(\boldsymbol{\tau},\bar{\boldsymbol{u}}^*)-Q_{\mathrm{jt}}(\boldsymbol{\tau},\boldsymbol{u})), \forall \boldsymbol{u}\in\boldsymbol{\mathcal{U}}^+
        \end{aligned}
    \end{equation}
    Therefore, for the loss of \(Q^*_{\mathrm{tot}}\) defined in Equation~(\ref{ideal QMIX}), we have
    \begin{equation}
        \begin{aligned}
            L^*_{\mathrm{tot}} & = \sum_{\boldsymbol{u}}{\pi(\boldsymbol{u}|\boldsymbol{\tau})(Q^*_{\mathrm{tot}}(\boldsymbol{\tau},\boldsymbol{u})-Q_{\mathrm{jt}}(\boldsymbol{\tau},\boldsymbol{u}))^2} \\
            & >\frac{\epsilon}{|\boldsymbol{\mathcal{U}}|}\sum_{\boldsymbol{u}\in\boldsymbol{\mathcal{U}}^+}{(Q^*_{\mathrm{tot}}(\boldsymbol{\tau},\bar{\boldsymbol{u}}^*)-Q_{\mathrm{jt}}(\boldsymbol{\tau},\boldsymbol{u}))^2}
            +(1-\epsilon+\frac{\epsilon}{|\boldsymbol{\mathcal{U}}|})(Q^*_{\mathrm{tot}}(\boldsymbol{\tau},\bar{\boldsymbol{u}}^*)-Q_{\mathrm{jt}}(\boldsymbol{\tau},\bar{\boldsymbol{u}}^*))^2 \\
        \end{aligned}
    \end{equation}
    Consider certain \(\boldsymbol{u}^+\in \boldsymbol{\mathcal{U}}^+\). We define \(Q_{\mathrm{tot}}^{'}\) as
    \begin{equation}
        Q_{\mathrm{tot}}^{'}(\boldsymbol{\tau},\boldsymbol{u}) = 
        \begin{cases}
            Q_{\mathrm{jt}}(\boldsymbol{\tau},\boldsymbol{u}^+) & \boldsymbol{u}=\boldsymbol{u}^+ \\
            Q^*_{\mathrm{tot}}(\boldsymbol{\tau},\bar{\boldsymbol{u}}^*) & otherwise
        \end{cases}
    \end{equation}
    The loss of \(Q_{\mathrm{tot}}^{'}\) defined in Equation~(\ref{ideal QMIX}) is (\(\widetilde{\boldsymbol{u}}=\bar{\boldsymbol{u}}^*\))
    \begin{equation}
        \begin{aligned}
            L_{\mathrm{tot}}^{'}= & \frac{\epsilon}{|\boldsymbol{\mathcal{U}}|}(\sum_{\boldsymbol{u}\in(\boldsymbol{\mathcal{U}}^+-\{\boldsymbol{u}^+\})}{(Q^*_{\mathrm{tot}}(\boldsymbol{\tau},\bar{\boldsymbol{u}}^*)-Q_{\mathrm{jt}}(\boldsymbol{\tau},\boldsymbol{u}))^2}+\sum_{\boldsymbol{u}\in\boldsymbol{\mathcal{U}}^-}{(Q^*_{\mathrm{tot}}(\boldsymbol{\tau},\bar{\boldsymbol{u}}^*)-Q_{\mathrm{jt}}(\boldsymbol{\tau},\bar{\boldsymbol{u}}^*))^2}) \\
            & +(1-\epsilon+\frac{\epsilon}{|\boldsymbol{\mathcal{U}}|})(Q^*_{\mathrm{tot}}(\boldsymbol{\tau},\bar{\boldsymbol{u}}^*)-Q_{\mathrm{jt}}(\boldsymbol{\tau},\bar{\boldsymbol{u}}^*))^2            
        \end{aligned}
    \end{equation}
    Comparing \(L_{\mathrm{tot}}^{*}\) and \(L_{\mathrm{tot}}^{'}\), we have
    \begin{equation}
        L_{\mathrm{tot}}^{*} - L_{\mathrm{tot}}^{'} > 
        \frac{\epsilon}{|\boldsymbol{\mathcal{U}}|}(
        (Q^*_{\mathrm{tot}}(\boldsymbol{\tau},\bar{\boldsymbol{u}}^*)-Q_{\mathrm{jt}}(\boldsymbol{\tau},\boldsymbol{u}^+))^2
        -|\boldsymbol{\mathcal{U}}^-|(Q^*_{\mathrm{tot}}(\boldsymbol{\tau},\bar{\boldsymbol{u}}^*)-Q_{\mathrm{jt}}(\boldsymbol{\tau},\bar{\boldsymbol{u}}^*))^2)
    \end{equation}
    Notice that 
    \begin{equation}
        f(x) = (x-Q_{\mathrm{jt}}(\boldsymbol{\tau},\boldsymbol{u}^+))^2
        -|\boldsymbol{\mathcal{U}}^-|(x-Q_{\mathrm{jt}}(\boldsymbol{\tau},\bar{\boldsymbol{u}}^*))^2
    \end{equation}
    is a continuous function and
    \begin{equation}
        \lim\limits_{x\to Q_{\mathrm{jt}}(\boldsymbol{\tau},\bar{\boldsymbol{u}}^*)}f(x)=(Q_{\mathrm{jt}}(\boldsymbol{\tau},\bar{\boldsymbol{u}}^*)-Q_{\mathrm{jt}}(\boldsymbol{\tau},\boldsymbol{u}^+))^2>0
    \end{equation}
    Since \(\lim\limits_{\epsilon\to0^+}Q^*_{\mathrm{tot}}(\boldsymbol{\tau},\bar{\boldsymbol{u}}^*)=Q_{\mathrm{jt}}(\boldsymbol{\tau},\bar{\boldsymbol{u}}^*)\), we have \(\lim\limits_{\epsilon\to0^+}f(Q^*_{\mathrm{tot}}(\boldsymbol{\tau},\bar{\boldsymbol{u}}^*))\), which means \(\exists E_{\boldsymbol{\tau}}^1\in(0,1], \forall \epsilon\in(0,E_{\boldsymbol{\tau}}^1),f(Q^*_{\mathrm{tot}}(\boldsymbol{\tau},\bar{\boldsymbol{u}}^*))>0\).
    
    Therefore, we find \(E_{\boldsymbol{\tau}}=\min\{E_{\boldsymbol{\tau}}^0,E_{\boldsymbol{\tau}}^1\}\in(0,1]\) that \(\forall \epsilon\in(0,E_{\boldsymbol{\tau}})\) \(L_{\mathrm{tot}}^{*} > L_{\mathrm{tot}}^{'}\), which contradicts the condition that \(Q_{\mathrm{tot}}^*\) satisfies Equation~(\ref{ideal QMIX}). Thus, \(\exists E_{\boldsymbol{\tau}}\in(0,1],\forall\epsilon\in(0,E_{\boldsymbol{\tau}}),\forall \boldsymbol{u} \in \arg\min_{\boldsymbol{u}}Q_{\mathrm{jt}}(\boldsymbol{\tau},\boldsymbol{u})\), \(\boldsymbol{u}\) is not a stable point. 
    
    Finally, we complete the proof by taking \(E=\min\limits_{\boldsymbol{\tau}\in\boldsymbol{\mathcal{T}}}E_{\boldsymbol{\tau}}\).
\end{proof}

\subsection{Theorem on ResQ}
\label{stablility analysis of resq}
We will prove that \(\boldsymbol{u}^*\) is the unique yet weakly stable point of ResQ~\citep{Ref:resq}. Here, we present the expression of \(Q_{\mathrm{tot}}\) of ResQ.
\begin{equation}
    Q_{\mathrm{tot}}(\boldsymbol{\tau},\boldsymbol{u})=Q_{\mathrm{mon}}(\boldsymbol{\tau},\boldsymbol{u})+w_\mathrm{r}(\boldsymbol{\tau},\boldsymbol{u})*Q_{\mathrm{r}}(\boldsymbol{\tau},\boldsymbol{u})
\end{equation}
where \(Q_{\mathrm{mon}}=f_{\mathrm{mon}}(Q_1,Q_2,\cdots,Q_n)\), \(Q_{\mathrm{r}}\leq0\) and
\begin{equation}
    \label{equ: resq wr}
    w_\mathrm{r}(\boldsymbol{\tau},\boldsymbol{u})=
    \begin{cases}
        0   &   \boldsymbol{u}=\widetilde{\boldsymbol{u}} \\
        1   &   otherwise
    \end{cases}
\end{equation}

The loss function for certain \(\boldsymbol{\tau}\) is 

\begin{equation}
    L_\mathrm{tot}(\boldsymbol{\tau})=\sum_{\boldsymbol{u}}\pi(\boldsymbol{u}|\boldsymbol{\tau})(Q_\mathrm{tot}(\boldsymbol{\tau},\boldsymbol{u})-Q_\mathrm{jt}(\boldsymbol{\tau},\boldsymbol{u}))^2
\end{equation}

We provide a specific instance for the weak stability of ResQ in Table~\ref{resq example}. Then, we prove that the weak stability occurs in general cases in Theorem~\ref{resq weakly stable point}.
\begin{table}[h]
  \centering
  \begin{subtable}[h]{0.2\linewidth}
  \begin{tabular}{|c|c|c|}
    \hline
    \textcolor{red}{8} & -12  & -12     \\
    \hline
    -12     & 3 & 0      \\
    \hline
    -12     & 0       & 5  \\
    \hline
  \end{tabular}
    \caption{\(Q_{\mathrm{jt}}\)}
    \end{subtable}   
    \hspace{0.08\linewidth}
    \begin{subtable}[h]{0.2\linewidth}
      \begin{tabular}{|c|c|c|}
        \hline
        \  \textcolor{red}{8} \  & \ 8 \  & \  8  \     \\
        \hline
        8     & 8 & \textcolor{blue}{9}      \\
        \hline
        8     & 8       & 8  \\
        \hline
      \end{tabular}
      \caption{\(Q_{\mathrm{mon}}\)}
     \end{subtable}
     \hspace{0.05\linewidth}
    \begin{subtable}[h]{0.2\linewidth}
      \begin{tabular}{|c|c|c|}
        \hline
        \  \textcolor{red}{0} \   &   -20    &   -20       \\
        \hline
        -20     & -5 & \textcolor{blue}{-9}      \\
        \hline
        -20     & -8       & -4  \\
        \hline
      \end{tabular}
      \caption{\(w_\mathrm{r}*Q_\mathrm{r}\)}
    \end{subtable}
    \hfill
  \caption{This table shows the weak stability of ResQ for a specific \(Q_{\mathrm{jt}}\) in (a), where we let \(\widetilde{\boldsymbol{u}}=\boldsymbol{u}^*\) initially to check whether ResQ will stabilize at \(\boldsymbol{u}^*\). We highlight the values related to \(\widetilde{\boldsymbol{u}}\) in \textcolor{red}{red} and \(\bar{\boldsymbol{u}}\) in \textcolor{blue}{blue}. \(Q_{\mathrm{tot}}\) is the sum of \(Q_{\mathrm{mon}}\) in (b) and \(w_\mathrm{r}*Q_\mathrm{r}\) in (c), which equals to \(Q_{\mathrm{jt}}\) (\(L_{\mathrm{tot}}=0\)). However, since \(\bar{\boldsymbol{u}}\neq\boldsymbol{u}^*\), \(\boldsymbol{u}^*\) is not a strongly stable point for ResQ.}
  \label{resq example}
\end{table}

\begin{lemma}
    \label{resq zero loss}
    Given a joint action value \(Q_{\mathrm{jt}}\) and \(\boldsymbol{\tau} \in \boldsymbol{\mathcal{T}}\), \(\forall \widetilde{\boldsymbol{u}}\in \boldsymbol{\mathcal{U}}\), we can find \(Q_{\mathrm{tot}}\) with \(L_{\mathrm{tot}}=0\) and \(\bar{\boldsymbol{u}}\neq\widetilde{\boldsymbol{u}}\).
\end{lemma}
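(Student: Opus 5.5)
The plan is an explicit construction that generalizes the instance in Table~\ref{resq example}. Fix \(\boldsymbol{\tau}\) and \(\widetilde{\boldsymbol{u}}\). Because the factorization is \(Q_{\mathrm{tot}}=Q_{\mathrm{mon}}+w_\mathrm{r}Q_\mathrm{r}\) with \(w_\mathrm{r}(\widetilde{\boldsymbol{u}})=0\), the only entry at which \(Q_{\mathrm{tot}}\) is forced to equal \(Q_{\mathrm{mon}}\) is \(\widetilde{\boldsymbol{u}}\). At every other entry the non-positive residual \(Q_\mathrm{r}\) can absorb any excess of \(Q_{\mathrm{mon}}\) over \(Q_{\mathrm{jt}}\).

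So I will choose individual \(Q\)s and a monotonic mixer such that the following hold:
\begin{enumerate}
\item \(Q_{\mathrm{mon}}(\boldsymbol{\tau},\widetilde{\boldsymbol{u}})=Q_{\mathrm{jt}}(\boldsymbol{\tau},\widetilde{\boldsymbol{u}})\).
\item \(Q_{\mathrm{mon}}(\boldsymbol{\tau},\boldsymbol{u})\geq Q_{\mathrm{jt}}(\boldsymbol{\tau},\boldsymbol{u})\) for all \(\boldsymbol{u}\neq\widetilde{\boldsymbol{u}}\).
\item The individual argmax \(\bar{\boldsymbol{u}}\) is some action different from \(\widetilde{\boldsymbol{u}}\).
\end{enumerate}
Setting \(Q_\mathrm{r}(\boldsymbol{\tau},\boldsymbol{u})=Q_{\mathrm{jt}}(\boldsymbol{\tau},\boldsymbol{u})-Q_{\mathrm{mon}}(\boldsymbol{\tau},\boldsymbol{u})\leq0\) for \(\boldsymbol{u}\neq\widetilde{\boldsymbol{u}}\) (and, say, \(Q_\mathrm{r}(\boldsymbol{\tau},\widetilde{\boldsymbol{u}})=0\)) then gives \(Q_{\mathrm{tot}}\equiv Q_{\mathrm{jt}}(\boldsymbol{\tau},\cdot)\), hence \(L_{\mathrm{tot}}=0\) for any \(\pi\).

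Concretely, pick an agent \(j\) and an action \(v_j\neq\widetilde{u}_j\); this assumes \(|\mathcal{U}_j|\geq2\), which holds in any nontrivial case. Let \(M=\max_{\boldsymbol{u}}Q_{\mathrm{jt}}(\boldsymbol{\tau},\boldsymbol{u})\) and \(c=Q_{\mathrm{jt}}(\boldsymbol{\tau},\widetilde{\boldsymbol{u}})\). Define \(Q_i\equiv0\) for \(i\neq j\). Define \(Q_j(v_j)=1\) and \(Q_j(u_j)=0\) otherwise, so \(\widetilde{u}_j\) gets \(0\).

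Take the mixer to be a function of \(Q_j\) only: \(f_{\mathrm{mon}}=g(Q_j)\) with \(g(0)=c\) and \(g(1)=\max\{M,c\}+1\), e.g. linear with a positive slope. The mixer is nondecreasing in every \(Q_i\), so Equation~(\ref{monotonic constrain}) holds.

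I then verify the three requirements in turn:
\begin{enumerate}
\item \(\widetilde{\boldsymbol{u}}\) has \(Q_j=0\), so \(Q_{\mathrm{mon}}(\boldsymbol{\tau},\widetilde{\boldsymbol{u}})=c\).
\item Any \(\boldsymbol{u}\) with \(u_j=v_j\) gets \(Q_{\mathrm{mon}}>M\geq Q_{\mathrm{jt}}\).
\item The individual maxima put \(v_j\) in agent \(j\)'s argmax, so \(\bar{\boldsymbol{u}}\) with \(\bar{u}_j=v_j\neq\widetilde{u}_j\) lies in the argmax product of Equation~(\ref{definition of greedy action}).
\end{enumerate}
For the other agents, ties are allowed since \(Q_i\) is constant. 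If a unique argmax is desired, I will add small distinct perturbations that keep monotonicity.

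The main obstacle is requirement 2 at the remaining actions, those with \(u_j\neq v_j\) and \(\boldsymbol{u}\neq\widetilde{\boldsymbol{u}}\). There \(Q_{\mathrm{mon}}=c\), which may fall below \(Q_{\mathrm{jt}}(\boldsymbol{\tau},\boldsymbol{u})\), and that would need \(Q_\mathrm{r}>0\). I will fix this by giving \(Q_j\) a three-level structure:
\begin{itemize}
\item \(Q_j(\widetilde{u}_j)=0\);
\item \(Q_j(u_j)=1\) for \(u_j\notin\{\widetilde{u}_j,v_j\}\);
\item \(Q_j(v_j)=2\).
\end{itemize}
I will also make the other agents' individual values penalize only \(\widetilde{u}_i\): \(Q_i(\widetilde{u}_i)=0\) and \(Q_i=1\) elsewhere.

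The mixer is then defined piecewise so that it equals \(c\) only when every input sits at its \(\widetilde{\boldsymbol{u}}\)-level, i.e. only at \(\boldsymbol{u}=\widetilde{\boldsymbol{u}}\), and is at least \(M\) at every other input combination. An example is
\[
f_{\mathrm{mon}}=c+(\max\{M,c\}-c+1)\min\Bigl\{1,\textstyle\sum_iQ_i\Bigr\}+Q_j ,
\]
which is monotone in each argument. With this choice, every \(\boldsymbol{u}\neq\widetilde{\boldsymbol{u}}\) has \(\sum_iQ_i\geq1\) and therefore \(Q_{\mathrm{mon}}\geq\max\{M,c\}+1>M\geq Q_{\mathrm{jt}}\). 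The argmax of \(Q_j\) is \(v_j\neq\widetilde{u}_j\), so \(\bar{\boldsymbol{u}}\neq\widetilde{\boldsymbol{u}}\) and requirement 3 holds. This completes the construction, and checking it reduces to the routine verification above.
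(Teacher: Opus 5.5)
Your final construction is correct and follows the paper's proof. Both arguments impose \(Q_{\mathrm{mon}}(\boldsymbol{\tau},\widetilde{\boldsymbol{u}})=Q_{\mathrm{jt}}(\boldsymbol{\tau},\widetilde{\boldsymbol{u}})\), \(Q_{\mathrm{mon}}\geq Q_{\mathrm{jt}}\) elsewhere and \(\bar{\boldsymbol{u}}\neq\widetilde{\boldsymbol{u}}\), then absorb the excess with \(Q_\mathrm{r}=Q_{\mathrm{jt}}-Q_{\mathrm{mon}}\leq0\) so that \(Q_{\mathrm{tot}}\equiv Q_{\mathrm{jt}}\); the paper simply takes \(Q_{\mathrm{mon}}=\max_{\boldsymbol{u}}Q_{\mathrm{jt}}+\Delta\) off \(\widetilde{\boldsymbol{u}}\), and your explicit individual \(Q\)s and monotone mixer additionally confirm Equation~(\ref{monotonic constrain}), which the paper leaves implicit (your abandoned single-agent first attempt can be deleted).
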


\begin{proof}
    We can find a \(Q_{\mathrm{mon}}\) that satisfies
    \begin{enumerate}
        \item \(Q_{\mathrm{mon}}\geq Q_{\mathrm{jt}}\)
        \item \(Q_{\mathrm{mon}}(\boldsymbol{\tau},\widetilde{\boldsymbol{u}})=Q_{\mathrm{jt}}(\boldsymbol{\tau},\widetilde{\boldsymbol{u}})\)
        \item \( \bar{\boldsymbol{u}}\neq\widetilde{\boldsymbol{u}}\)
    \end{enumerate}
    Here we give a specific \(Q_{\mathrm{mon}}\) that satisfies these conditions:
    \begin{equation}
        Q_{\mathrm{mon}}(\boldsymbol{\tau},\boldsymbol{u}) = \begin{cases}
            Q_{\mathrm{jt}}(\boldsymbol{\tau},\widetilde{\boldsymbol{u}})   &   \boldsymbol{u}=\widetilde{\boldsymbol{u}} \\
            \max_{\boldsymbol{u}}Q_{\mathrm{jt}}(\boldsymbol{\tau},\boldsymbol{u})+\Delta   &   otherwise \\
        \end{cases}
    \end{equation}
    where \(\Delta>0\). We let \(Q_\mathrm{r}=Q_{\mathrm{jt}}-Q_{\mathrm{mon}}\). Since \(Q_{\mathrm{mon}}\geq Q_{\mathrm{jt}}\), we have \(Q_\mathrm{r}\leq 0\). Therefore, we obtain \(Q_\mathrm{mon}\), \(w_\mathrm{r}\) and \(Q_\mathrm{r}\), which together constitute the \(Q_\mathrm{tot}\) of ResQ.
    
    \(\forall \boldsymbol{u}\neq\widetilde{\boldsymbol{u}}\), we have \(Q_{\mathrm{jt}}(\boldsymbol{\tau},\boldsymbol{u})=Q_{\mathrm{tot}}(\boldsymbol{\tau},\boldsymbol{u})=Q_{\mathrm{mon}}(\boldsymbol{\tau},\boldsymbol{u})+1*Q_{\mathrm{r}}(\boldsymbol{\tau},\boldsymbol{u})\). For \(\widetilde{\boldsymbol{u}}\), since \( Q_{\mathrm{mon}}(\boldsymbol{\tau},\widetilde{\boldsymbol{u}})=Q_{\mathrm{jt}}(\boldsymbol{\tau},\widetilde{\boldsymbol{u}})\), we have \(Q_{\mathrm{jt}}(\boldsymbol{\tau},\widetilde{\boldsymbol{u}})=Q_{\mathrm{tot}}(\boldsymbol{\tau},\widetilde{\boldsymbol{u}})=Q_{\mathrm{mon}}(\boldsymbol{\tau},\widetilde{\boldsymbol{u}})+0*Q_{\mathrm{r}}(\boldsymbol{\tau},\widetilde{\boldsymbol{u}})\). 
    
    Therefore, \(\forall \boldsymbol{u}\in\boldsymbol{\mathcal{U}}\), we have \(Q_{\mathrm{jt}}(\boldsymbol{\tau},\boldsymbol{u})=Q_{\mathrm{tot}}(\boldsymbol{\tau},\boldsymbol{u})\), which means \(L_{\mathrm{tot}}=0\). Thus, we find \(Q_{\mathrm{tot}}\) with \(L_{\mathrm{tot}}=0\) and \(\bar{\boldsymbol{u}}\neq\widetilde{\boldsymbol{u}}\).
\end{proof}

Lemma~\ref{resq zero loss} demonstrates that in ResQ, for any two actions in the entire action space, the greedy action may transit from one to another, which indicates that no action can be a strongly stable point.

\begin{theorem}
    \label{resq weakly stable point}
    Given a joint action value \(Q_{\mathrm{jt}}\) and \(\boldsymbol{\tau} \in \boldsymbol{\mathcal{T}}\), \(\forall \boldsymbol{u}^* \in \arg\max_{\boldsymbol{u}}Q_{\mathrm{jt}}(\boldsymbol{\tau},\boldsymbol{u})\), \(\boldsymbol{u}^*\) is a weakly stable point of ResQ~\citep{Ref:resq}, while \(\forall \boldsymbol{u}^{-}\notin \arg\max_{\boldsymbol{u}}Q_{\mathrm{jt}}(\boldsymbol{\tau},\boldsymbol{u})\) is not a stable point of ResQ.
\end{theorem}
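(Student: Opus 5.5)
Fix $\boldsymbol{\tau}$ and split the statement into three claims: (i) $\boldsymbol{u}^*$ is a stable point of ResQ; (ii) it is not strongly stable; (iii) no $\boldsymbol{u}^-\notin\arg\max_{\boldsymbol{u}}Q_{\mathrm{jt}}(\boldsymbol{\tau},\boldsymbol{u})$ is a stable point. Throughout, ``solution'' means a minimizer of $L_{\mathrm{tot}}$ under the given $\widetilde{\boldsymbol{u}}$, with $Q_{\mathrm{mon}}$ ranging over monotonic representations as in the ideal QMIX assumption. Since $\pi>0$ on the actions that matter (e.g.\ $\epsilon$-greedy with $\epsilon>0$), and Lemma~\ref{resq zero loss} shows a zero-loss $Q_{\mathrm{tot}}$ exists for every $\widetilde{\boldsymbol{u}}$, the minimal loss is $0$. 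Hence every solution satisfies $Q_{\mathrm{tot}}=Q_{\mathrm{jt}}$ exactly. This reduces all three claims to statements about exact representations $Q_{\mathrm{jt}}=Q_{\mathrm{mon}}+w_\mathrm{r}Q_\mathrm{r}$ with $Q_\mathrm{r}\le 0$.

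For (iii), take $\widetilde{\boldsymbol{u}}=\boldsymbol{u}^-$ and any exact solution. At $\boldsymbol{u}^-$ we have $w_\mathrm{r}=0$, so $Q_{\mathrm{mon}}(\boldsymbol{\tau},\boldsymbol{u}^-)=Q_{\mathrm{jt}}(\boldsymbol{\tau},\boldsymbol{u}^-)$. At $\boldsymbol{u}^*\neq\boldsymbol{u}^-$ we have $w_\mathrm{r}=1$ and $Q_\mathrm{r}\le 0$, so $Q_{\mathrm{mon}}(\boldsymbol{\tau},\boldsymbol{u}^*)\ge Q_{\mathrm{jt}}(\boldsymbol{\tau},\boldsymbol{u}^*)>Q_{\mathrm{jt}}(\boldsymbol{\tau},\boldsymbol{u}^-)=Q_{\mathrm{mon}}(\boldsymbol{\tau},\boldsymbol{u}^-)$. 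Thus $\boldsymbol{u}^-$ does not maximize $Q_{\mathrm{mon}}$. By the IGM inclusion (\ref{IGM principle}) applied to the monotonic part, every greedy $\bar{\boldsymbol{u}}$ lies in $\arg\max Q_{\mathrm{mon}}$. Hence $\bar{\boldsymbol{u}}\neq\boldsymbol{u}^-$ for all solutions, and $\boldsymbol{u}^-$ is not stable.

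For (i), take $\widetilde{\boldsymbol{u}}=\boldsymbol{u}^*$ and exhibit one exact solution whose greedy action is $\boldsymbol{u}^*$. Choose individual utilities $Q_i(u_i)=c\cdot\mathbb{1}[u_i=u^*_i]$ and the VDN-style $Q_{\mathrm{mon}}=\sum_i Q_i+b$. Pick $c>0$ and $b$ so that $Q_{\mathrm{mon}}(\boldsymbol{\tau},\boldsymbol{u}^*)=Q_{\mathrm{jt}}(\boldsymbol{\tau},\boldsymbol{u}^*)$; the off-optimal values $b+mc$ with $m<n$ are then at most $Q_{\mathrm{jt}}(\boldsymbol{\tau},\boldsymbol{u}^*)-c$. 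Set $Q_\mathrm{r}=Q_{\mathrm{jt}}-Q_{\mathrm{mon}}$ off $\boldsymbol{u}^*$ and $Q_\mathrm{r}(\boldsymbol{u}^*)=0$. We need $Q_\mathrm{r}\le0$, i.e.\ $Q_{\mathrm{mon}}\ge Q_{\mathrm{jt}}$ off $\boldsymbol{u}^*$. This can fail when $Q_{\mathrm{jt}}$ is near its maximum at some off-optimal action, so the cleaner route is simply $Q_{\mathrm{mon}}\equiv Q_{\mathrm{jt}}(\boldsymbol{\tau},\boldsymbol{u}^*)$ (a constant, hence monotonic). Then $Q_\mathrm{r}=Q_{\mathrm{jt}}-\max Q_{\mathrm{jt}}\le 0$. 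All individual $Q_i$ are constant, so every joint action, including $\boldsymbol{u}^*$, is in the greedy set and $\bar{\boldsymbol{u}}=\boldsymbol{u}^*$ is an admissible outcome. If a strict greedy choice is preferred, use the first construction with $c$ small, noting that ties are allowed under the ``$\subseteq$'' convention.

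For (ii), apply Lemma~\ref{resq zero loss} with $\widetilde{\boldsymbol{u}}=\boldsymbol{u}^*$. It gives a zero-loss solution, hence a minimizer, with $\bar{\boldsymbol{u}}\neq\boldsymbol{u}^*$, so $\boldsymbol{u}^*$ is weakly but not strongly stable. The main subtlety is in (iii): we must argue that every minimizer is exact, which requires positive visitation of $\boldsymbol{u}^*$, and that $\bar{\boldsymbol{u}}$ is governed by $Q_{\mathrm{mon}}$ through the individual $Q_i$ rather than by $Q_{\mathrm{tot}}$. I would state the positivity assumption on $\pi$ explicitly there.
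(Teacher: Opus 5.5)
Your proposal is correct and follows essentially the same route as the paper. Stability of $\boldsymbol{u}^*$ comes from the constant $Q_{\mathrm{mon}}\equiv Q_{\mathrm{jt}}(\boldsymbol{\tau},\boldsymbol{u}^*)$, Lemma~\ref{resq zero loss} at $\widetilde{\boldsymbol{u}}=\boldsymbol{u}^*$ rules out strong stability, and the zero minimal loss forcing an exact fit rules out every $\boldsymbol{u}^-$. Two remarks: your part (iii) usefully spells out, via $\bar{\boldsymbol{u}}\in\arg\max Q_{\mathrm{mon}}$, $w_\mathrm{r}(\boldsymbol{\tau},\boldsymbol{u}^-)=0$ and $Q_\mathrm{r}\le 0$, the inequality $Q_{\mathrm{tot}}(\boldsymbol{\tau},\boldsymbol{u}^-)\ge Q_{\mathrm{tot}}(\boldsymbol{\tau},\boldsymbol{u}^*)$ that the paper only asserts from $\bar{\boldsymbol{u}}=\boldsymbol{u}^-$; and your optional VDN-style variant in (i) works only when $\boldsymbol{u}^*$ is the unique maximizer, taking $nc$ at most the gap to the second-best payoff, but your argument does not rely on it.
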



\begin{proof}
    \label{proof: resq weakly stable point}
    \textbf{First, we illustrate that \(\boldsymbol{u}^*\) is a weak stable point.} 
    
    Let \(\widetilde{\boldsymbol{u}}=\boldsymbol{u}^*\). We can find a \(Q_{\mathrm{mon}}\) that satisfies
    \begin{enumerate}
        \item \(Q_{\mathrm{mon}}\geq Q_{\mathrm{jt}}\)
        \item \(Q_{\mathrm{mon}}(\boldsymbol{\tau},\widetilde{\boldsymbol{u}})=Q_{\mathrm{jt}}(\boldsymbol{\tau},\widetilde{\boldsymbol{u}})\)
        \item \( \bar{\boldsymbol{u}}=\boldsymbol{u}^*\)
    \end{enumerate}
    where \(Q_{\mathrm{mon}}\equiv Q_{\mathrm{jt}}(\boldsymbol{\tau},\boldsymbol{u}^*)\) is a specific case. 
    
    We let \(Q_\mathrm{r}=Q_{\mathrm{jt}}-Q_{\mathrm{mon}}\). Since \(Q_{\mathrm{mon}}\geq Q_{\mathrm{jt}}\), we have \(Q_\mathrm{r}\leq 0\). Therefore, we obtain \(Q_\mathrm{mon}\), \(w_\mathrm{r}\) and \(Q_\mathrm{r}\), which together constitute the \(Q_\mathrm{tot}\) of ResQ.
    
    According to \(w_\mathrm{r}\) defined in Equation~(\ref{equ: resq wr}), we have 
    \begin{equation}
        \label{equ: made Qtot of resq}
        Q_{\mathrm{tot}}=
        \begin{cases}
            Q_{\mathrm{mon}}(\boldsymbol{\tau},\boldsymbol{u})+1*Q_{\mathrm{r}}(\boldsymbol{\tau},\boldsymbol{u})   & \forall \boldsymbol{u}\neq\widetilde{\boldsymbol{u}}\\
            Q_{\mathrm{mon}}(\boldsymbol{\tau},\widetilde{\boldsymbol{u}})+0*Q_{\mathrm{r}}(\boldsymbol{\tau},\widetilde{\boldsymbol{u}})   & \boldsymbol{u}=\widetilde{\boldsymbol{u}}
        \end{cases}
    \end{equation}

    Substituting \(Q_\mathrm{r}=Q_{\mathrm{jt}}-Q_{\mathrm{mon}}\) and \(Q_{\mathrm{mon}}(\boldsymbol{\tau},\widetilde{\boldsymbol{u}})=Q_{\mathrm{jt}}(\boldsymbol{\tau},\widetilde{\boldsymbol{u}})\) into Equation~(\ref{equ: made Qtot of resq}) yields \(Q_{\mathrm{tot}}\equiv Q_{\mathrm{jt}}\).
    
    Since \(Q_{\mathrm{tot}}\equiv Q_{\mathrm{jt}}\), we have \(L_{\mathrm{tot}}=0\), which is the lower bound of loss. Therefore, \(\boldsymbol{u}^*\) is a \textbf{stable point} of ResQ. 
    
    However, according to Lemma~\ref{resq zero loss} for \(\widetilde{\boldsymbol{u}}=\boldsymbol{u}^*\), we can also find \(Q_{\mathrm{tot}}^{'}\) with \(L_{\mathrm{tot}}^{'}=0\) while \(\bar{\boldsymbol{u}}^{'}\neq\boldsymbol{u}^*\). Thus, \(\boldsymbol{u}^*\) is a \textbf{weakly stable point} of ResQ.
    
    \textbf{Second, we illustrate that \(\boldsymbol{u}^{-}\) is not a stable point.} 
    
    Let \(\widetilde{\boldsymbol{u}}=\boldsymbol{u}^{-}\). \textbf{Assume for a contradiction} that there exists \(Q_{\mathrm{tot}}\) satisfying \(\bar{\boldsymbol{u}}=\boldsymbol{u}^{-}\) and achieving the minimum loss. According to Lemma~\ref{resq zero loss}, we have \(L_{\mathrm{tot}}=0\), otherwise \(L_{\mathrm{tot}}\) is not the minimum. 
    
    Since \(L_{\mathrm{tot}}=0\), we have \(\forall \boldsymbol{u}\in \boldsymbol{\mathcal{U}}, Q_{\mathrm{tot}}(\boldsymbol{\tau},\boldsymbol{u})=Q_{\mathrm{jt}}(\boldsymbol{\tau},\boldsymbol{u})\). Therefore, we have \(Q_{\mathrm{tot}}(\boldsymbol{\tau},\boldsymbol{u}^{-})=Q_{\mathrm{jt}}(\boldsymbol{\tau},\boldsymbol{u}^{-})\) for \(\boldsymbol{u}^{-}\) and  \(Q_{\mathrm{tot}}(\boldsymbol{\tau},\boldsymbol{u}^*)=Q_{\mathrm{jt}}(\boldsymbol{\tau},\boldsymbol{u}^*)\) for \(\boldsymbol{u}^*\). 
    
    However, since \(Q_{\mathrm{tot}}(\boldsymbol{\tau},\boldsymbol{u}^{-})\geq Q_{\mathrm{tot}}(\boldsymbol{\tau},\boldsymbol{u}^*)\) due to \(\bar{\boldsymbol{u}}=\boldsymbol{u}^{-}\), we get a contradiction where \(Q_{\mathrm{jt}}(\boldsymbol{\tau},\boldsymbol{u}^{-})\geq Q_{\mathrm{jt}}(\boldsymbol{\tau},\boldsymbol{u}^*)\). Therefore, \(\boldsymbol{u}^{-}\) is not a stable point.
    
\end{proof}

\section{Stability Analysis of QPLEX}
\label{stability analysis of QPLEX}
In this section, we provide a suboptimal case of QPLEX in matrix games. Here, we present the expression of \(Q_{\mathrm{tot}}\) of QPLEX.
\begin{equation}
    Q_{\mathrm{tot}}(\boldsymbol{\tau},\boldsymbol{u})=\sum_i{\max_{u_i}{Q_i(\tau_i,u_i)}}+\sum_i{w_i(\boldsymbol{\tau},\boldsymbol{u})*(Q_i(\tau_i,u_i)-\max_{u_i}{Q_i(\tau_i,u_i))}}
\end{equation}
where \(w_i(\boldsymbol{\tau},\boldsymbol{u})\geq0\), and \(\max_{u_i}Q_i(\tau_i,u_i)\) can be treated as a gradient-free component, since
\begin{equation}
    \max_{u_i}Q_i(\tau_i,u_i)=Q_i(\tau_i,\bar{\boldsymbol{u}}_i), \quad \bar{\boldsymbol{u}}_i=\arg\max_{u_i}Q_i(\tau_i,u_i)
\end{equation}

The loss function for certain \(\boldsymbol{\tau}\) under uniform visitation is 

\begin{equation}
    L_\mathrm{tot}(\boldsymbol{\tau})=\sum_{\boldsymbol{u}}(Q_\mathrm{tot}(\boldsymbol{\tau},\boldsymbol{u})-Q_\mathrm{jt}(\boldsymbol{\tau},\boldsymbol{u}))^2
\end{equation}

For QPLEX, stable point occurs when the gradients of \(\boldsymbol{Q}\) and \(\boldsymbol{w}\) satisfy that \(\forall i\in\{1,2,\cdots,N\}\)
\begin{equation}
    \frac{\partial L_{\mathrm{tot}}}{\partial \boldsymbol{Q}_i}=\boldsymbol{0},
    \frac{\partial L_{\mathrm{tot}}}{\partial \boldsymbol{w}_i}\geq\boldsymbol{0}
    \label{QPLEX condition}
\end{equation}
where \(\frac{\partial L_{\mathrm{tot}}}{\partial w_i(\boldsymbol{\tau},\boldsymbol{u})}>0\) only when \(w_i(\boldsymbol{\tau},\boldsymbol{u}) = 0\). 


The suboptimal cases of QPLEX are presented in Table~\ref{QPLEX fail example} and Table~\ref{QPLEX fail while QMIX success example}, where there are three stable points but only one is optimal. For readers to verify those stable points, we expand Equation~(\ref{QPLEX condition}) into Equation~(\ref{QPLEX gradient}).
\begin{equation}
    \begin{aligned}
        \frac{\partial L_{\mathrm{tot}}}{\partial Q_i(\tau_i,u_i)}= &
        \begin{cases}
            \sum\limits_{\boldsymbol{u}\in\boldsymbol{\mathcal{U}}(\boldsymbol{u}_i=u_i)}(Q_{\mathrm{tot}}(\boldsymbol{\tau},\boldsymbol{u})-Q_{\mathrm{jt}}(\boldsymbol{\tau},\boldsymbol{u}))w_i(\boldsymbol{\tau},\boldsymbol{u})   &   u_i\neq\bar{\boldsymbol{u}}_i \\
            \\
            \\
            \begin{aligned}
            \sum\limits_{\boldsymbol{u}\in\boldsymbol{\mathcal{U}}(\boldsymbol{u}_i=u_i)}(Q_{\mathrm{tot}}(\boldsymbol{\tau},\boldsymbol{u})-Q_{\mathrm{jt}}(\boldsymbol{\tau},\boldsymbol{u}))w_i(\boldsymbol{\tau},\boldsymbol{u}) + \\\sum\limits_{\boldsymbol{u}\in\boldsymbol{\mathcal{U}}}(Q_{\mathrm{tot}}(\boldsymbol{\tau},\boldsymbol{u})-Q_{\mathrm{jt}}(\boldsymbol{\tau},\boldsymbol{u}))(1-w_i(\boldsymbol{\tau},\boldsymbol{u}))  \end{aligned} &   u_i=\bar{\boldsymbol{u}}_i
        \end{cases}
        \\
        \frac{\partial L_{\mathrm{tot}}}{\partial w_i(\boldsymbol{\tau},\boldsymbol{u})}= & 
        (Q_{\mathrm{tot}}(\boldsymbol{\tau},\boldsymbol{u})-Q_{\mathrm{jt}}(\boldsymbol{\tau},\boldsymbol{u}))(Q_i(\tau_i, \boldsymbol{u}_i)-Q_i(\tau_i, \bar{\boldsymbol{u}}_i))
    \end{aligned}
    \label{QPLEX gradient}
\end{equation}
where \(\boldsymbol{\mathcal{U}}(\boldsymbol{u}_i=u_i)=\{\boldsymbol{u}|\boldsymbol{u}\in\boldsymbol{\mathcal{U}},\boldsymbol{u}_i=u_i\}\).

\begin{table}[h]
  \centering
  \begin{subtable}[h]{0.2\linewidth}
        \begin{tabular}{|c|c|c|}
        \hline
        8 & -12  & -12     \\
        \hline
        -12     & 3 & 0      \\
        \hline
        -12     & 0       & 5  \\
        \hline
      \end{tabular}
      \caption{\(Q_{\mathrm{jt}}\)}
  \end{subtable}    
  \\
  \begin{subtable}[h]{0.3\linewidth}
  \centering
        \resizebox{0.95\textwidth}{!}{
        \begin{tabular}{|c|c|c|c|}
            \hline
            \diagbox[width=3em]{\(Q_1\)}{\(Q_2\)}  & \textbf{2.77} & 0.31  & 0.37  \\
            \hline
            \textbf{5.23}   &   \textbf{8} & -12  & -12     \\
            \hline
            0.49   &   -12     & 3 & 0      \\
            \hline
            0.33   &   -12     & 0       & 5  \\
            \hline
        \end{tabular}
        }
        \caption{\(\check{Q}_{\mathrm{tot}}\)}
  \end{subtable}  
  \hfill
  \begin{subtable}[h]{0.3\linewidth}
  \centering
        \resizebox{0.95\textwidth}{!}{
        \begin{tabular}{|c|c|c|c|}
            \hline
            \diagbox[width=3em]{\(Q_1\)}{\(Q_2\)}  & 1.82 & \textbf{3.09}  & 0.70  \\
            \hline
            1.42   &   5.5 & -12  & -12     \\
            \hline
            \textbf{2.41}   &   -12     & \textbf{5.5} & 0      \\
            \hline
            1.09   &   -12     & 0       & 5  \\
            \hline
        \end{tabular}
        }
        \caption{\(\check{Q}_{\mathrm{tot}}\)}
  \end{subtable}
  \hfill
  \begin{subtable}[h]{0.3\linewidth}
  \centering
        \resizebox{0.95\textwidth}{!}{
        \begin{tabular}{|c|c|c|c|}
            \hline
            \diagbox[width=3em]{\(Q_1\)}{\(Q_2\)}  & 1.29 & 0.66  & \textbf{3.41}  \\
            \hline
            1.38   &   6.5 & -12  & -12     \\
            \hline
            0.68   &   -12     & 3 & 0      \\
            \hline
            \textbf{3.09}   &   -12     & 0       & \textbf{6.5}  \\
            \hline
        \end{tabular}
        }
        \caption{\(\check{Q}_{\mathrm{tot}}\)}
  \end{subtable}
  \hfill
  \\
  \hspace{0.035\linewidth}
  \begin{subtable}[h]{0.2\linewidth}
        \begin{tabular}{|c|c|c|}
        \hline
        \textbf{0.25} & 1.51  & 1.71     \\
        \hline
        4.21     & 0.48 & 0.10      \\
        \hline
        4.08     & 0.59       & 0.09  \\
        \hline
      \end{tabular}
      \caption{\(\check{w}_1\)}
  \end{subtable}   
  \hfill
  \begin{subtable}[h]{0.2\linewidth}
        \begin{tabular}{|c|c|c|}
        \hline
        0.00 & 17.64  & 5.01     \\
        \hline
        8.53     & \textbf{7.65} & 1.41      \\
        \hline
        5.30     & 4.16       & 0.15  \\
        \hline
      \end{tabular}
      \caption{\(\check{w}_1\)}
  \end{subtable}   
  \hfill
  \begin{subtable}[h]{0.2\linewidth}
        \begin{tabular}{|c|c|c|}
        \hline
        0.00 & 6.70  & 10.86     \\
        \hline
        3.35     & 0.26 & 2.71      \\
        \hline
        5.10     & 1.03       & \textbf{1.92}  \\
        \hline
      \end{tabular}
      \caption{\(\check{w}_1\)}
  \end{subtable}
  \hspace{0.05\linewidth}
  \hfill
  \\
  \hspace{0.035\linewidth}
  \begin{subtable}[h]{0.2\linewidth}
        \begin{tabular}{|c|c|c|}
        \hline
        \textbf{2.01} & 8.13  & 8.34     \\
        \hline
        0.85     & 1.12 & 3.12      \\
        \hline
        4.62     & 2.08       & 1.05  \\
        \hline
      \end{tabular}
      \caption{\(\check{w}_2\)}
  \end{subtable}   
  \hfill
  \begin{subtable}[h]{0.2\linewidth}
        \begin{tabular}{|c|c|c|}
        \hline
        0.00 & 9.07  & 5.22     \\
        \hline
        13.71     & \textbf{7.94} & 2.29      \\
        \hline
        8.23     & 3.76       & 0.13  \\
        \hline
      \end{tabular}
      \caption{\(\check{w}_2\)}
  \end{subtable}   
  \hfill
  \begin{subtable}[h]{0.2\linewidth}
        \begin{tabular}{|c|c|c|}
        \hline
        0.00 & 2.58  & 2.50     \\
        \hline
        4.93     & 1.05 & 0.23      \\
        \hline
        8.72     & 2.36       & \textbf{1.47}  \\
        \hline
      \end{tabular}
      \caption{\(\check{w}_2\)}
  \end{subtable}
  \hspace{0.05\linewidth}
  \caption{The example of QPLEX in which \(\boldsymbol{u}^*\) is not the only one stable points. The quantities of \(\check{Q}_{\mathrm{tot}}^*\) and \(\check{w}_i\) listed in the same column correspond to a particular stable point, where we highlight \(\check{\boldsymbol{u}}\) in bold.}
  \label{QPLEX fail example}
\end{table} 

\begin{table}[h!]
  \centering
  \begin{subtable}[h]{0.2\linewidth}
        \begin{tabular}{|c|c|c|}
        \hline
        \ 8 \ & \ 0 \  & \ 0 \     \\
        \hline
        0     & 3 & 0      \\
        \hline
        0     & 0       & 5  \\
        \hline
      \end{tabular}
      \caption{\(Q_{\mathrm{jt}}\)}
  \end{subtable}    
  \\
  \begin{subtable}[h]{0.3\linewidth}
  \centering
        \resizebox{0.95\textwidth}{!}{
        \begin{tabular}{|c|c|c|c|}
            \hline
            \diagbox[width=3em]{\(Q_1\)}{\(Q_2\)}  & \textbf{4.14} & 1.85  & 0.73  \\
            \hline
            \textbf{3.86}   &   \textbf{8} & 0  & 0     \\
            \hline
            0.44   &   0     & 3 & 0      \\
            \hline
            0.54   &   0     & 0       & 5  \\
            \hline
        \end{tabular}
        }
        \caption{\(\check{Q}_{\mathrm{tot}}\)}
  \end{subtable}  
  \hfill
  \begin{subtable}[h]{0.3\linewidth}
  \centering
        \resizebox{0.95\textwidth}{!}{
        \begin{tabular}{|c|c|c|c|}
            \hline
            \diagbox[width=3em]{\(Q_1\)}{\(Q_2\)}  & 2.57 & \textbf{2.75}  & 0.99  \\
            \hline
            2.59   &   5.5 & 0  & 0     \\
            \hline
            \textbf{2.75}   &   0    & \textbf{5.5} & 0      \\
            \hline
            1.11   &   0     & 0       & 5  \\
            \hline
        \end{tabular}
        }
        \caption{\(\check{Q}_{\mathrm{tot}}\)}
  \end{subtable}
  \hfill
  \begin{subtable}[h]{0.3\linewidth}
  \centering
        \resizebox{0.95\textwidth}{!}{
        \begin{tabular}{|c|c|c|c|}
            \hline
            \diagbox[width=3em]{\(Q_1\)}{\(Q_2\)}  & 2.86 & 0.57  & \textbf{3.15}  \\
            \hline
            3.06   &   6.5 & 0  & 0     \\
            \hline
            0.82   &   0     & 3 & 0      \\
            \hline
            \textbf{3.35}   &   0     & 0       & \textbf{6.5}  \\
            \hline
        \end{tabular}
        }
        \caption{\(\check{Q}_{\mathrm{tot}}\)}
  \end{subtable}
  \hfill
  \\
  \hspace{0.035\linewidth}
  \begin{subtable}[h]{0.2\linewidth}
        \begin{tabular}{|c|c|c|}
        \hline
        \textbf{0.94} & 1.31  & 1.10     \\
        \hline
        2.34     & 0.83 & 0.91      \\
        \hline
        2.41     & 1.49       & 0.83  \\
        \hline
      \end{tabular}
      \caption{\(\check{w}_1\)}
  \end{subtable}   
  \hfill
  \begin{subtable}[h]{0.2\linewidth}
        \begin{tabular}{|c|c|c|}
        \hline
        0.00 & 33.25  & 1.48     \\
        \hline
        8.82     & \textbf{13.44} & 0.44      \\
        \hline
        3.00     & 3.35       & 0.16  \\
        \hline
      \end{tabular}
      \caption{\(\check{w}_1\)}
  \end{subtable}   
  \hfill
  \begin{subtable}[h]{0.2\linewidth}
        \begin{tabular}{|c|c|c|}
        \hline
        0.00 & 2.91  & 22.31     \\
        \hline
        2.13     & 0.54 & 2.57      \\
        \hline
        4.35     & 1.53       & \textbf{14.12}  \\
        \hline
      \end{tabular}
      \caption{\(\check{w}_1\)}
  \end{subtable}
  \hspace{0.05\linewidth}
  \hfill
  \\
  \hspace{0.035\linewidth}
  \begin{subtable}[h]{0.2\linewidth}
        \begin{tabular}{|c|c|c|}
        \hline
        \textbf{1.44} & 3.50  & 2.34     \\
        \hline
        0.76     & 0.95 & 1.43      \\
        \hline
        0.66     & 1.34       & 0.07  \\
        \hline
      \end{tabular}
      \caption{\(\check{w}_2\)}
  \end{subtable}   
  \hfill
  \begin{subtable}[h]{0.2\linewidth}
        \begin{tabular}{|c|c|c|}
        \hline
        0.00 & 19.52  & 2.99     \\
        \hline
        31.74     & \textbf{20.50} & 3.13      \\
        \hline
        3.35     & 1.73       & 0.14  \\
        \hline
      \end{tabular}
      \caption{\(\check{w}_2\)}
  \end{subtable}   
  \hfill
  \begin{subtable}[h]{0.2\linewidth}
        \begin{tabular}{|c|c|c|}
        \hline
        0.00 & 2.20  & 0.10     \\
        \hline
        3.89     & 0.83 & 1.94      \\
        \hline
        22.74     & 2.53       & \textbf{6.15}  \\
        \hline
      \end{tabular}
      \caption{\(\check{w}_2\)}
  \end{subtable}
  \hspace{0.05\linewidth}
  \caption{The example in which QPLEX has two suboptimal stable points, while QMIX can easily obtain the optimum. The quantities of \(\check{Q}_{\mathrm{tot}}\) and \(\check{w}_i\) listed in the same column correspond to a particular stable point, where we highlight \(\check{\boldsymbol{u}}\) in bold.}
  \label{QPLEX fail while QMIX success example}
\end{table} 

\section{Experimental Setup}
\label{experimental setup}
We adopt a setup similar to PyMARL~\citep{Ref:smac}. We use the implementation of QMIX, QTRAN, QPLEX, WQMIX, ResQ, and NA\textsuperscript{2}Q~\citep{Ref:na2q} from their open-source repositories~\footnote{https://github.com/oxwhirl/pymarl}~\footnote{https://github.com/oxwhirl/wqmix}~\footnote{https://github.com/xmu-rl-3dv/ResQ}~\footnote{https://github.com/zichuan-liu/NA2Q}.  These codes are released under the Apache License V2.0. 

All algorithms apply TD(\(0\)) to update \(Q_{\mathrm{tot}}\) including ResQ, as TD(\(\lambda\)) result in catastrophic performance in \textit{5m\_vs\_6m} while no measurable improvement in other scenarios. The hyperparameters for all algorithms and environments are presented in Table~\ref{hyperparameters}. For WQMIX, we set \(\alpha=0.1\) for all environments.  For MRVF, we use three rounds of value factorization, with a probability \(p=0.2\) for sampling preselected actions. 

For hardware, we run experiments on an NVIDIA 3090 GPU for risk-reward games, predator-prey tasks, and SMAC scenarios (excluding \textit{bane\_vs\_bane}). In addition, we use an NVIDIA A800 GPU for MRVF in  \textit{2c\_vs\_64zg} and \textit{MMM2} scenarios.

\begin{table}[h]
    \centering
    \begin{tabular}{lll}
        \toprule
        \textbf{Hyperparameter} & \textbf{Value} & \textbf{Description}  \\
        \midrule
        Batch Size & 32 &   Number of episodes per update\\
        Replay buffer size &  5000  & Maximum number of stored episodes     \\
        Target update interval  & 200   & Frequency of updating the target network \\
        Initial \(\epsilon\)    &   1.0 &   The initial \(\epsilon\) in the \(\epsilon\)-greedy policy  \\
        Final \(\epsilon\)  & 0.05  & The final \(\epsilon\) in the \(\epsilon\)-greedy policy  \\
        Anneal steps for \(\epsilon\) & 50,000 & Number of steps for linearly decay of \(\epsilon\) \\
        Discount \(\gamma\) &   0.99    &   Discount of future return \\
        Test interval   &   10,000  &   Frequency of test evaluation    \\
        Test episodes   &   32  &   Number pf episodes to test  \\
        \bottomrule
    \end{tabular}
    \caption{The hyperparameters for experiments}
    \label{hyperparameters}
\end{table}


\paragraph{Network structure}
We provide details of the network structures presented in Figure~\ref{img architecture of mrvd}. The individual Q network and the mixing network (\(Q_{\mathrm{tot}}\)) are shown in Figure~\ref{tot network structure}, and the joint action value network (\(\hat{Q}_{\mathrm{jt}}\)) is shown in Figure~\ref{jt network structure}. 

\begin{figure*}[h!]
    \centering
    \includegraphics[scale=0.65]{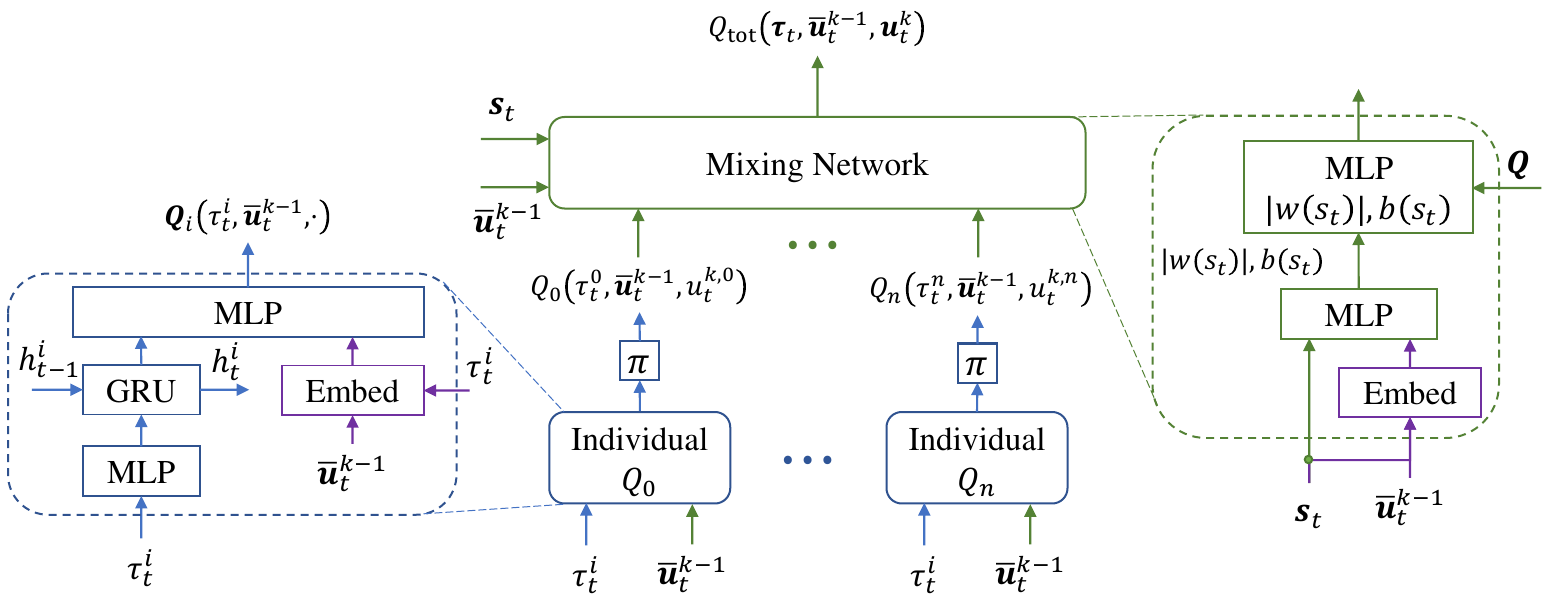}  
    \caption{The network structure of \(Q_{\mathrm{tot}}\).}
    \label{tot network structure}
\end{figure*}

\begin{figure*}[h!]
    \centering
    \includegraphics[scale=0.65]{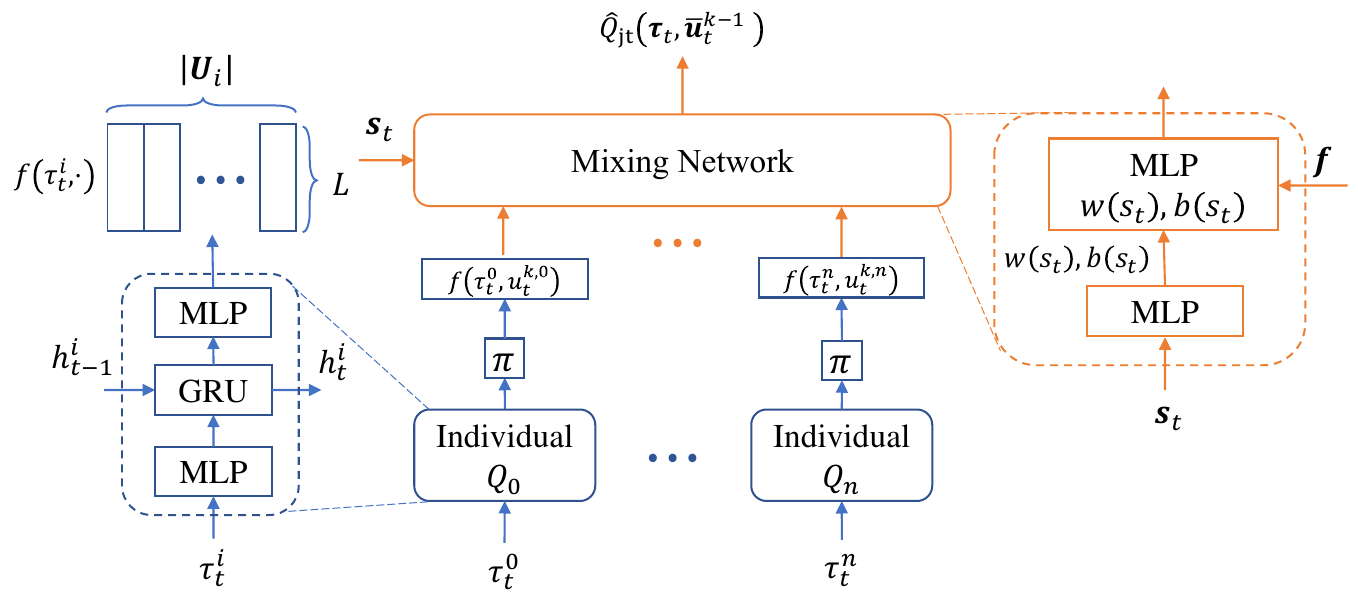}   
    \caption{The network structure of \(Q_{\mathrm{jt}}\).}
    \label{jt network structure}
\end{figure*}

The individual Q network processes the observation using a linear layer followed by a GRU~\citep{Ref:gru} with 64 hidden dimensions. The mixing network processes the state with 2-layer linear network (64 hidden dimension) and processes the individual Qs with 2-layer linear network, where the weights and bias (generated by the state's linear processor) have 32 hidden dimension. Inspired by the embedding module~\citep{Ref:embed2003}~\citep{Ref:embed2013}, we generate the features of \(\bar{\boldsymbol{u}}^k\) by selecting them from the continuous features (128 dimensions) of the observation or state. For the concatenated feature, we double the hidden dimensions to 128. 

The network structure of \(\hat{Q}_{\mathrm{jt}}\) is similar to that of \(Q_{\mathrm{tot}}\), with two key differences: First, we use a 3-layer linear network to process the state. Second, the output of the individual network has 64 dimensions. We input the state to \(Q_{\mathrm{jt}}\) in all SMAC scenarios except \textit{3s5z} and \textit{3s\_vs\_5z}, because masking observations of dead units loses critical information for return prediction in complex scenarios. In the \textit{bane\_vs\_bane} scenario, we replace the embedding feature with the action index to avoid GPU out-of-memory errors.

\subsection{One-Step Game}
\label{one-step game setup}
We randomly generate \(Q_{\mathrm{jt}}\) of the risk-reward game with the following steps: First, randomly generate the individual reward vectors \(r_i\geq0\). Second, we randomly generate a bijection \(\boldsymbol{\mathcal{U}}\to\boldsymbol{\mathcal{U}}\) that maps \(u_i\) to another action \(v_i\) for all agents \(i\), which is the exchange of rows and columns for a matrix. Finally, we let \(Q_{\mathrm{jt}}(\boldsymbol{u})=sign(\boldsymbol{v})*\sum\limits_{i=0}^n r_i(v_i)\) where \(sign(\boldsymbol{v})=1\) if all agents choose the same mapped action, and \(sign(\boldsymbol{v})=-1\) otherwise. Here, we give an example of the risk-reward matrix in Table~\ref{risk-reward matrix example}.

\begin{table}[h!]
    \centering
    \hspace{0.1\linewidth}
    \begin{subtable}{0.2\linewidth}
        \begin{tabular}{|c|c|c|c|}
            \hline
            \diagbox[width=3em]{\(r_1\)}{\(r_2\)}  & 3 & 4  &5  \\
            \hline
            0   &   \textbf{3} & -4  & -5     \\
            \hline
            1   &   -4     & \textbf{5} & -6      \\
            \hline
            2   &   -5     & -6       & \textbf{7}  \\
            \hline
        \end{tabular}
        \caption{ }
    \end{subtable}
    \hspace{0.1\linewidth}
    \begin{subtable}{0.2\linewidth}
        \begin{tabular}{|c|c|c|c|}
            \hline
            \diagbox[width=3em]{\(r_1\)}{\(r_2\)}  & 5 & 4  &3  \\
            \hline
            2   &   \textbf{7} & -6  & -5     \\
            \hline
            0   &   -5     & -4 & \textbf{3}      \\
            \hline
            1   &   -6     & \textbf{5}   &    -4  \\
            \hline
        \end{tabular}
        \caption{ }
    \end{subtable}
    \hspace{0.1\linewidth}
    \caption{An example of the risk-reward matrix. (a): the risk-reward matrix before action mapping, where positive rewards are in the diagonal. (b): the risk-reward matrix after action mapping, which scatters positive rewards.}
    \label{risk-reward matrix example}
\end{table}

We constrain the values of \(Q_{\mathrm{jt}}\) to the interval \([-10,10]\). To achieve this, we first generate the reward vectors uniformly from \([0,1]\), then divide each component by \(n\), and finally multiply the result by \(10\). The normalized return in Figure~\ref{risk result} is defined as
\begin{equation}
    return(\boldsymbol{u})=
    \begin{cases}
        \frac{Q_{\mathrm{jt}}(\boldsymbol{u})-\min\limits_{\boldsymbol{u}^+\in \boldsymbol{\mathcal{U}}^+} Q_{\mathrm{jt}}(\boldsymbol{u}^+)}{\max\limits_{\boldsymbol{u}^+\in \boldsymbol{\mathcal{U}}^+} Q_{\mathrm{jt}}(\boldsymbol{u}^+)-\min\limits_{\boldsymbol{u}^+\in \boldsymbol{\mathcal{U}}^+} Q_{\mathrm{jt}}(\boldsymbol{u}^+)}   &   Q_{\mathrm{jt}}(\boldsymbol{u})\geq0 \\
        \\
        \frac{Q_{\mathrm{jt}}(\boldsymbol{u})-\min\limits_{\boldsymbol{u}^-\in \boldsymbol{\mathcal{U}}^-} Q_{\mathrm{jt}}(\boldsymbol{u}^-)}{\min\limits_{\boldsymbol{u}^-\in \boldsymbol{\mathcal{U}}^-} Q_{\mathrm{jt}}(\boldsymbol{u}^-)}   &   Q_{\mathrm{jt}}(\boldsymbol{u})<0 
    \end{cases}
\end{equation}
where \(\boldsymbol{\mathcal{U}}^+=\{\boldsymbol{u}^+|\boldsymbol{u}^+\in\boldsymbol{\mathcal{U}}, Q_{\mathrm{jt}}(\boldsymbol{u}^+)\geq0\}\) and \(\boldsymbol{\mathcal{U}}^+\cup\boldsymbol{\mathcal{U}}^-=\boldsymbol{\mathcal{U}}\).

\subsection{Predator Prey}
\label{predator prey setup}
The predator-prey task~\citep{Ref:DCG} involves a multi-agent scenario where 8 predators cooperate to capture 8 prey. Each predator can choose from six possible actions: four directional movements (up, down, left, right), staying still, or attempting to "capture" prey.

\paragraph{Capture Conditions}
\begin{itemize}
    \item A predator can only select the "capture" action when occupying the same position with prey.
    \item Successful capture requires at least two predators simultaneously choosing "capture" on the same prey.
    \item Successful capture yields \(+10\) reward.
    \item If only one predator chooses "capture" for a prey, the team receives a punishment.
    \item The captured prey and successful predators are immediately removed from the environment. Observations of removed agents are masked with zeros.
\end{itemize}


\subsection{StarCraft II Multi-Agent Challenge}
\label{smac setup}
The SMAC (SC2.4.6 version) involves two opposing teams competing to defeat each other. Players control one team's units with the objective of eliminating all enemy units. The reward system consists of three components:
\begin{itemize}
    \item \textbf{Kill Reward}: \(+10\) for eliminating an enemy unit by reducing its HP to zero.
    \item \textbf{Win Reward}: \(+200\) for defeating all enemy units and winning the game.
    \item \textbf{Damage}: The amount of change in an enemy's HP.
\end{itemize}

In the original PyMARL implementation~\citep{Ref:smac}, rewards are enforced as positive by taking absolute values. However, in scenarios where enemies can regenerate health/shields, agents might artificially inflate rewards by allowing recovery and reinflicting damage, rather than focusing on winning. To address this, we adopt the reward without taking its absolute values.

We illustrate why monotonic value factorizations excel in SMAC from two aspects. First, the individual action has slight impact on returns. Take \textit{3s\_vs\_5z} scenario for example, the enemy unit, Zealot, has \(100\) health and \(50\) recoverable shields. Thus, the maximum return is
\begin{equation}
    \underbrace{150*5}_{Damage} + \underbrace{10 *5}_{Kill} + \underbrace{200}_{Win} = 1000
\end{equation}
In particular, damages constitutes the major component of return (75\% proportion). Within a step, an individual agent's action only affects its damage, and its action will not cause a sharp change of the state. Consequently, a slight deviation from the optimal joint action is insufficient to cause a significant drop in return. Therefore, monotonic value factorization has advantages in SMAC, as without the sharp drop in return near the global optimum, monotonic value factorization can attain the global optimum.

Second, choosing the optimal individual action also performs the best in average. We consider a "focus fire" case that frequently appears in SMAC. In this case, two agents fight with three enemies denoted as "A", "B", and "C", and focusing fire on the same target is the trick to win. Those enemies are identical units, except that Enemy A currently has less HP. We present the \(Q_{\mathrm{jt}}\) of this case in Table~\ref{focus fire case}. Although \(Q_{\mathrm{jt}}\) in Table~\ref{focus fire case} is non-monotonic, the average return of choosing action A is superior to that of choosing other actions. In this case, monotonic value factorization can obtain the optimal value.

\begin{table}[h]
    \centering
        \begin{tabular}{c|c|c|c|c|}
             \multicolumn{1}{c}{ } &\multicolumn{1}{c}{\(A\)}   &\multicolumn{1}{c}{\(B\)}  &\multicolumn{1}{c}{\(C\)}  &\multicolumn{1}{c}{\(\emptyset\)}     \\
            \hhline{~|-|-|-|-|}
            \(A\)   & 2.2 &  1.1  & 1.1  & -0.9     \\
            \hhline{~|-|-|-|-|}
            \(B\)   &1.1     & 2 & 1     &-1      \\
            \hhline{~|-|-|-|-|}
            \(C\)   &1.1     & 1       & 2   & -1 \\
            \hhline{~|-|-|-|-|}
            \(\emptyset\)   & -0.9    &   -1    & -1  & -2    \\
            \hhline{~|-|-|-|-|}
        \end{tabular}
        \hspace{0.03\linewidth}
        \vspace{0.02\textwidth}
    \caption{\(Q_{\mathrm{jt}}\) in the "focus fire" case of the SMAC environment: The action is denoted by its attack target,  and the action \(\emptyset\) means no attack. The data in the table only represents the relative advantages of each joint action rather than actual values. The agents receive a higher return (+2) when both focus fire on the same target, a moderate return (+1) when attacking different targets, and lower returns for not attacking. Additionally, attacking low-health units (Enemy A) grants a +0.1 bonus to the returns.}
    \label{focus fire case}
\end{table}

\section{Ablation Study}
\label{ablation}
\subsection{Experiment and Analysis}
\label{Subsec: main ablation}
We analyze the impact of multi-round iteration and strictness in improving greedy action (Equation~(\ref{tot loss round>1})). We design MRVF-single, which uses a single round, and MRVF-non-strict, which also uses the original \(Q_{\mathrm{jt}}\) as the target value of \(Q_{\mathrm{tot}}\) (Equation~(\ref{tot loss round 1})) for round \(k>1\). We present the results in risk-reward games (Figure~\ref{risk ablation}), predator-prey tasks (Figure~\ref{pred ablation}), and SMAC (Figure~\ref{smac ablation}). 

\begin{figure*}[h!]
    \centering
    \includegraphics[scale=0.28]{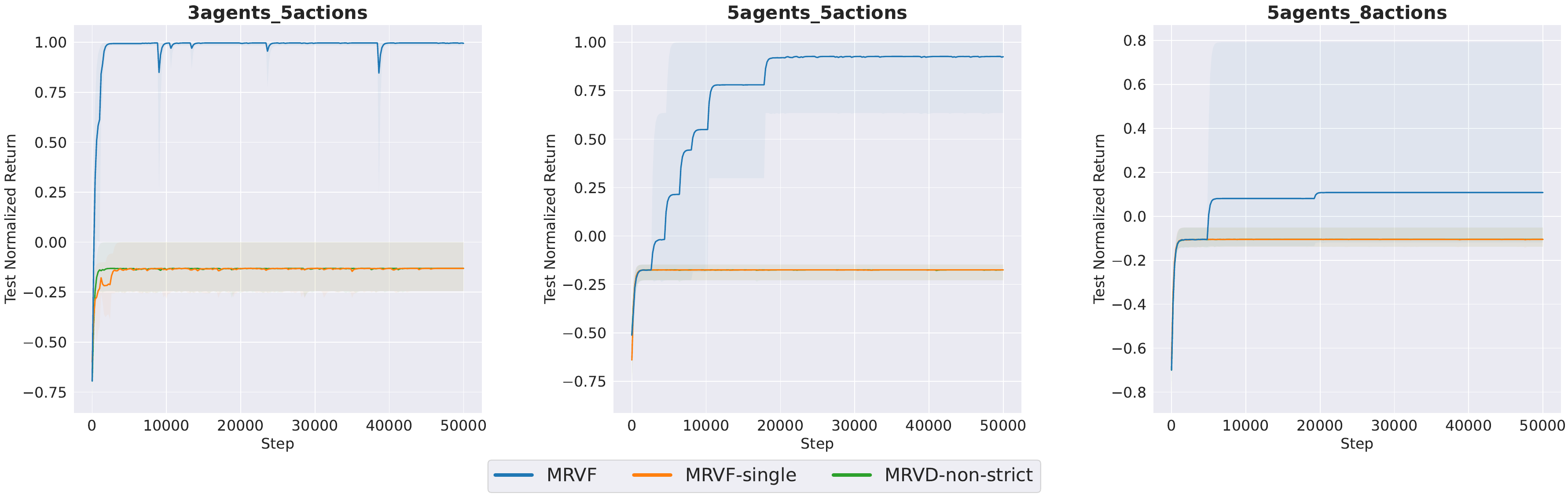} 
    \caption{Test normalized return in the risk-reward games.}
    \label{risk ablation}
\end{figure*}

\begin{figure*}[h!]
    \centering
    \includegraphics[scale=0.28]{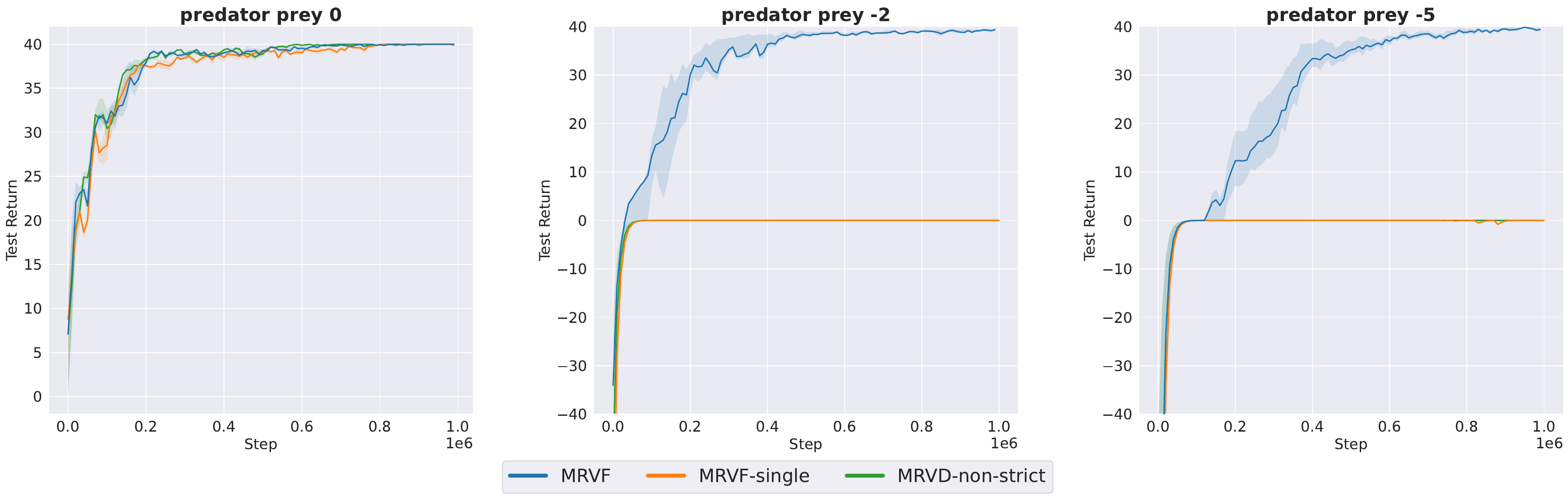} 
    \caption{Test return in the predator-prey tasks with punishments 0 (left), -2 (middle), and -5 (right).}
    \label{pred ablation}
\end{figure*}

The results presented in Figure~\ref{risk ablation} and Figure~\ref{pred ablation} demonstrate that the strict improvement contributes to better performance in highly non-monotonic scenarios. The explanation for this is that MRVF-non-strict cannot guarantee improvement in the greedy action compared to the preceding round, which prevents it from obtaining the optimal action within a given number of rounds.

\begin{figure*}[h!]
    \centering
    \includegraphics[scale=0.28]{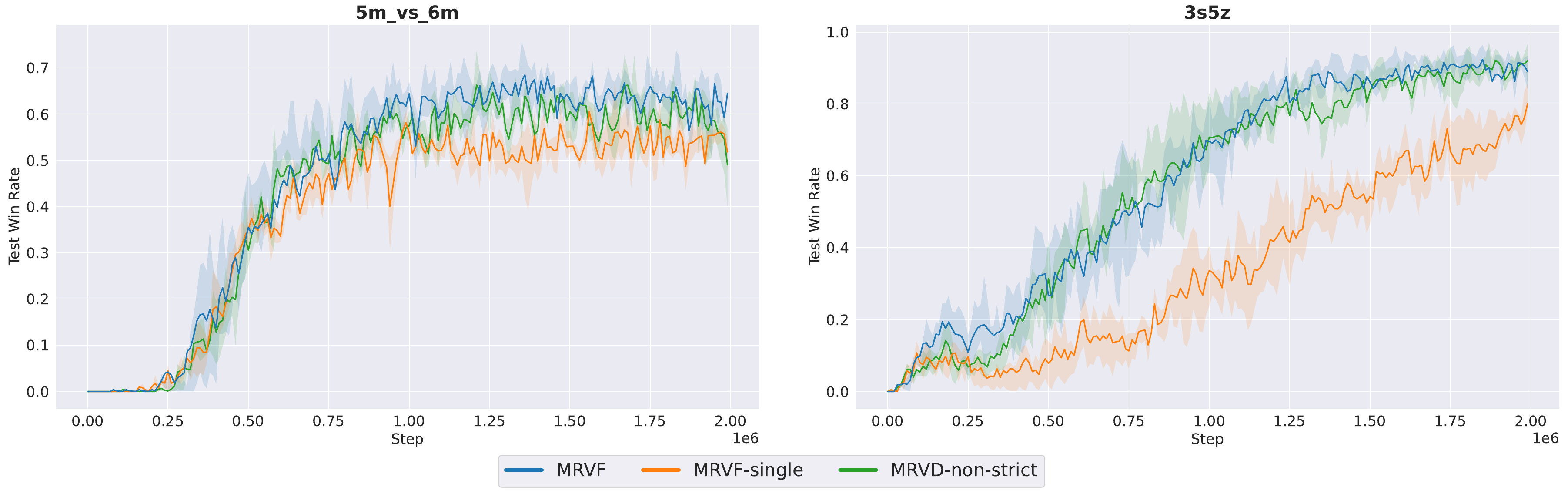} 
    \caption{Test win rate in the SMAC benchmarks.}
    \label{smac ablation}
\end{figure*}

As for approximately monotonic scenarios like SMAC, Figure~\ref{smac ablation} shows a slight improvement compared to single-round factorization. In these scenarios, the improvement comes from additional information in decision-making. Specifically, by feeding \(\bar{\boldsymbol{u}}_{t}^{k-1}\) into the individual Q networks, each agent knows the intentions of others and could infer their states accordingly. Without global information, this additional information contributes to better decision-making. In addition, due to the approximate monotonicity of the payoff, the first round of MRVF already produces a satisfactory solution, which explains the slightness of the improvement.


\subsection{Discussion on Rounds}
We will discuss how many rounds MRVF requires to obtain the optimal solution under scenarios with different levels of non-monotonicity. As discussed in Section~\ref{Subsec: main ablation}, a second round is required to obtain the optimal solution in highly non-monotonic environments. To support our claim, we calculate the proportion of taking \(\boldsymbol{u}_t^k\) as the final action for each round \(k\), as shown in Figure~\ref{round result}. 

\begin{figure*}[h]
    \centering
    \includegraphics[scale=0.28]{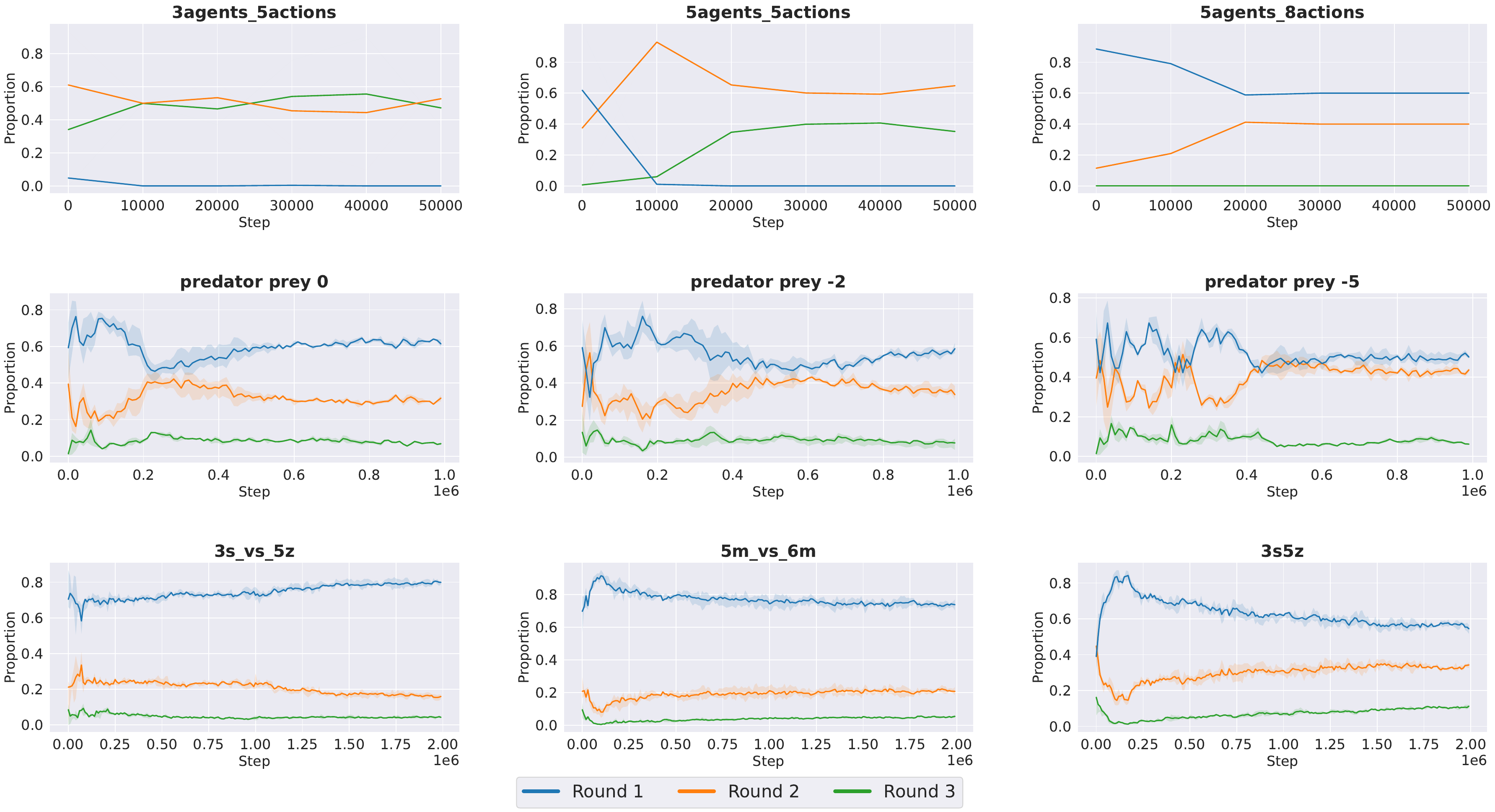}  
    \caption{Proportion of final actions generated in each round throughout test episodes.}
    \label{round result}
\end{figure*}

From Figure~\ref{round result}, we find that in the predator-prey environment, as the penalty increases, the proportion of second-round decisions rises from 30\% to nearly 50\%. In contrast, for approximately monotonic scenarios such as SMAC, the proportion of second-round decisions typically remains around 20\%, and the performance gap between multi-round and single-round is small. In addition, a third round is generally unnecessary unless the reward is so sparse that the second round fails to find the optimal action. As shown in Figure~\ref{round result}, third-round decisions account for only about 5\% across all predator-prey and SMAC scenarios. Nevertheless, to balance performance and efficiency, we recommend starting with a maximum of three rounds and adjusting it based on the proportion.

\section{Limitations}
\label{limitations}
Although MRVF achieves significant improvements over existing methods in highly non-monotonic scenarios, it faces the issue of high computational complexity: MRVF requires calculating \(Q_{\mathrm{tot}}\) and \(Q_{\mathrm{jt}}\) in each round. Although we reduce computational complexity by reusing parts of network outputs (\textit{e.g.}, individual features in \(Q_{\mathrm{jt}}\)) and early terminating the multi-round iterations for evaluation, the computational complexity of MRVF is still \(O(K(N+1))\), while that of QMIX is at \(O(N)\). In addition, as shown in Equation~(\ref{tot loss round 1}) and Equation~(\ref{tot loss round>1}), the update of \(Q_{\mathrm{tot}}\) depends on the results of \(Q_{\mathrm{jt}}\). As a result, the update of \(Q_{\mathrm{tot}}\) is delayed compared to standard TD learning, which leads to a slow rise of the win rate in \textit{3s\_vs\_5z} (Figure~\ref{smac result}). However, learning directly from \(Q_{\mathrm{jt}}\) is inevitable for environments that are not retraceable. Otherwise, if exploration of branching trajectories is allowed, TD targets could be used in the update of \(Q_{\mathrm{tot}}\).

We summarize the reasons why MRVF may fail to achieve the optimal solution in practice: First, the approximation errors of \(\hat{Q}_{\mathrm{jt}}\) propagate to \(Q_{\mathrm{tot}}\), which may prevent the \(Q_{\mathrm{tot}}\) from reaching the global optimum. Second, monotonic factorization is not ideal. Since we use a neural network as the mixing network, the approximation errors introduced by the network make it difficult to obtain the \(Q_{\mathrm{tot}}\) that minimizes \(L_{\mathrm{tot}}\). Third, overly sparse rewards may further amplify these approximation errors. As shown in Figure~\ref{risk result}, MRVF struggles to achieve optimal performance in \textit{5agents\_8actions}.


\section{Supplementary Experiment and Analysis}
In this section, we provide analysis and experiments that are not included in the main part of this paper. 
\subsection{StarCraft II Multi-Agent Challenge}
In Section~\ref{Subsec:suboptimality of single-round}, we analyze the stability of QPLEX~\citep{Ref:qplex}, WQMIX~\citep{Ref:wqmix} and ResQ~\citep{Ref:resq}. We reveal that suboptimal stable points are the main cause of the suboptimality in highly non-monotonic scenarios including risk-reward games and predator-prey tasks with large punishments. However, in Figure~\ref{smac result}, these methods show inferior performance compared to monotonic value factorizations in SMAC scenarios. We provide the following explanation for this phenomenon.

\paragraph{QPLEX and ResQ} From Table~\ref{QPLEX fail while QMIX success example} in Appendix~\ref{stability analysis of QPLEX}, in cases where QMIX can easily obtain the global optimum, QPLEX may still fail due to multiple suboptimal stable points. In addition, for ResQ, Theorem~\ref{resq weakly stable point} shows that \(\boldsymbol{u}^*\) is a weakly stable point in general cases. The weak stability of ResQ is particularly severe because any action can potentially become \(\bar{\boldsymbol{u}}\) (Lemma~\ref{resq zero loss}). Consequently, QPLEX and ResQ do not perform well in SMAC. 

\paragraph{WQMIX} WQMIX is designed for highly non-monotonic scenarios, and from Figure~\ref{risk result} and Figure~\ref{pred result}, it outperforms monotonic value factorization methods. Although WQMIX is supposed to perform similarly to QMIX in weakly non-monotonic or monotonic scenarios, it shows poor performance in SMAC (Figure~\ref{smac result}). We provide three factors resulting in this phenomenon. 
\begin{itemize}
    \item[(1)] \textbf{The joint action-value network}: Individual features constitute only a small portion of the mixer's input in the joint action-value network. Consequently, the joint action-value network hardly distinguishes different actions, since action information is primarily embedded in these individual features.
    \item[(2)] \textbf{Decentralized \(\epsilon\)-greedy policy}: The decentralized \(\epsilon\)-greedy policy (defined in Equation~(\ref{decentralized epsilon greedy})) combined with the weight in WQMIX enhances the convergence to the local optimum, since the decentralized \(\epsilon\)-greedy policy explores locally and the weight discards lower values around the local optimum.
    \item[(3)] \textbf{Weight}: When \(\alpha<1\), the weight reduces sample efficiency, as downweighting samples is similar to discarding them.
\end{itemize}

We conduct additional experiments to support our analysis. We replace the network of \(\hat{Q}_{\mathrm{jt}}\) with that of ours and replace the decentralized \(\epsilon\)-greedy policy with the centralized one. The result in Figure~\ref{smac cwqmix result} demonstrates that these three factors do result in the performance gap between WQMIX and QMIX. 

\begin{figure*}[h!]
    \centering
    \includegraphics[scale=0.4]{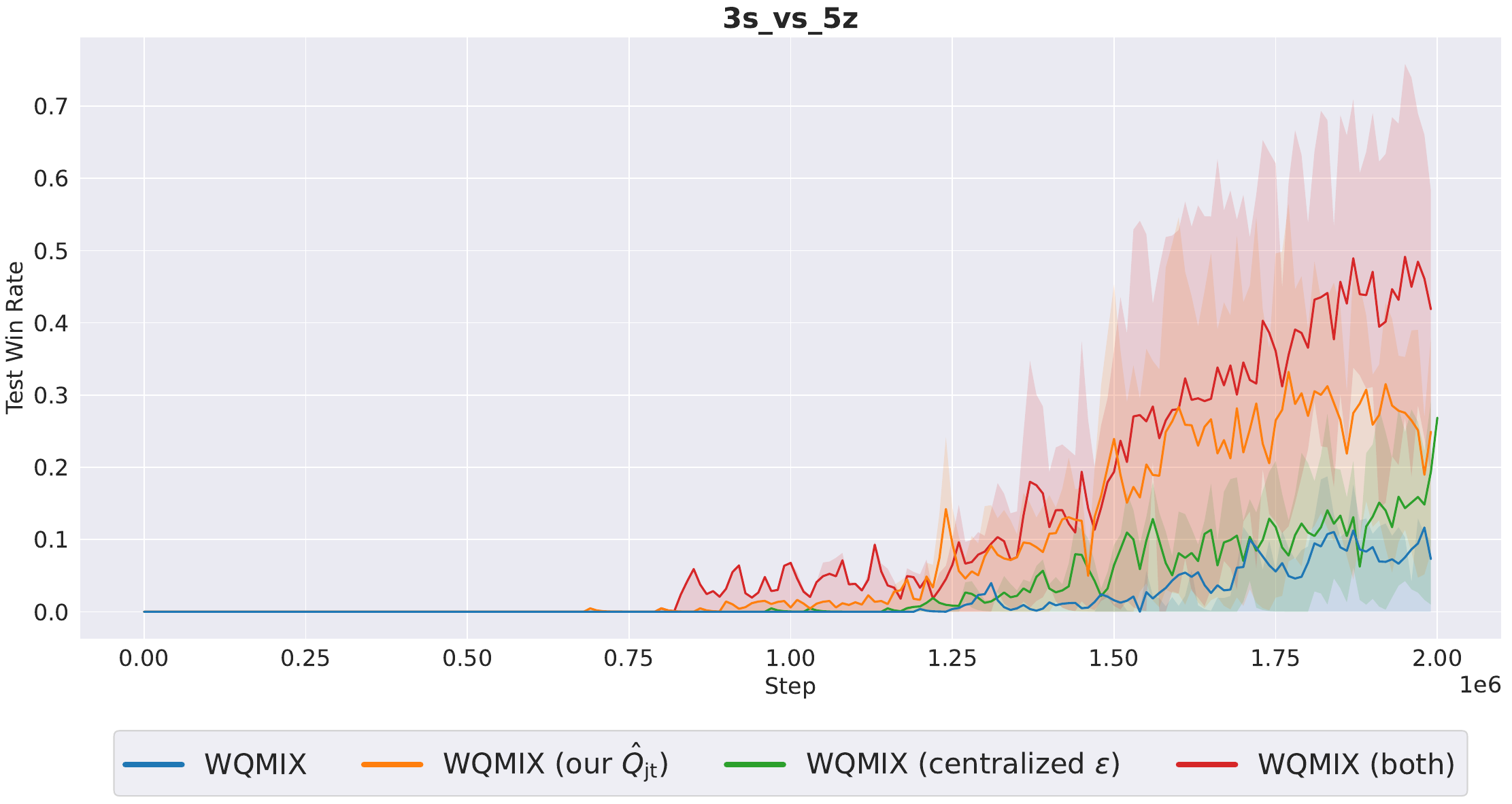} 
    \caption{Test win rate in \textit{3s\_vs\_5z}. WQMIX (\(\hat{Q}_{\mathrm{jt}}\)) represents WQMIX replacing the network of \(\hat{Q}_{\mathrm{jt}}\) with that of ours. WQMIX (centralized \(\epsilon\)) represents WQMIX replacing the policy with the centralized one. WQMIX (both) represents WQMIX with both modifications.}
    \label{smac cwqmix result}
\end{figure*}

\subsection{Super-hard Scenario of SMAC}
We conduct additional experiments on the \textit{6h\_vs\_8z} scenario, which is a particularly challenging scenario where the "focus fire" is a key tactic for victory~\citep{Ref:smac}. We adopt a setup similar to the \textit{bane\_vs\_bane} scenario, except that we set anneal steps for \(\epsilon\) to \(10^6\). The results in Figure~\ref{smac hard result} show a slight difference between MRVF and QMIX, supporting our analysis in Appendix~\ref{smac setup}.

\begin{figure*}[h!]
    \centering
    \includegraphics[scale=0.4]{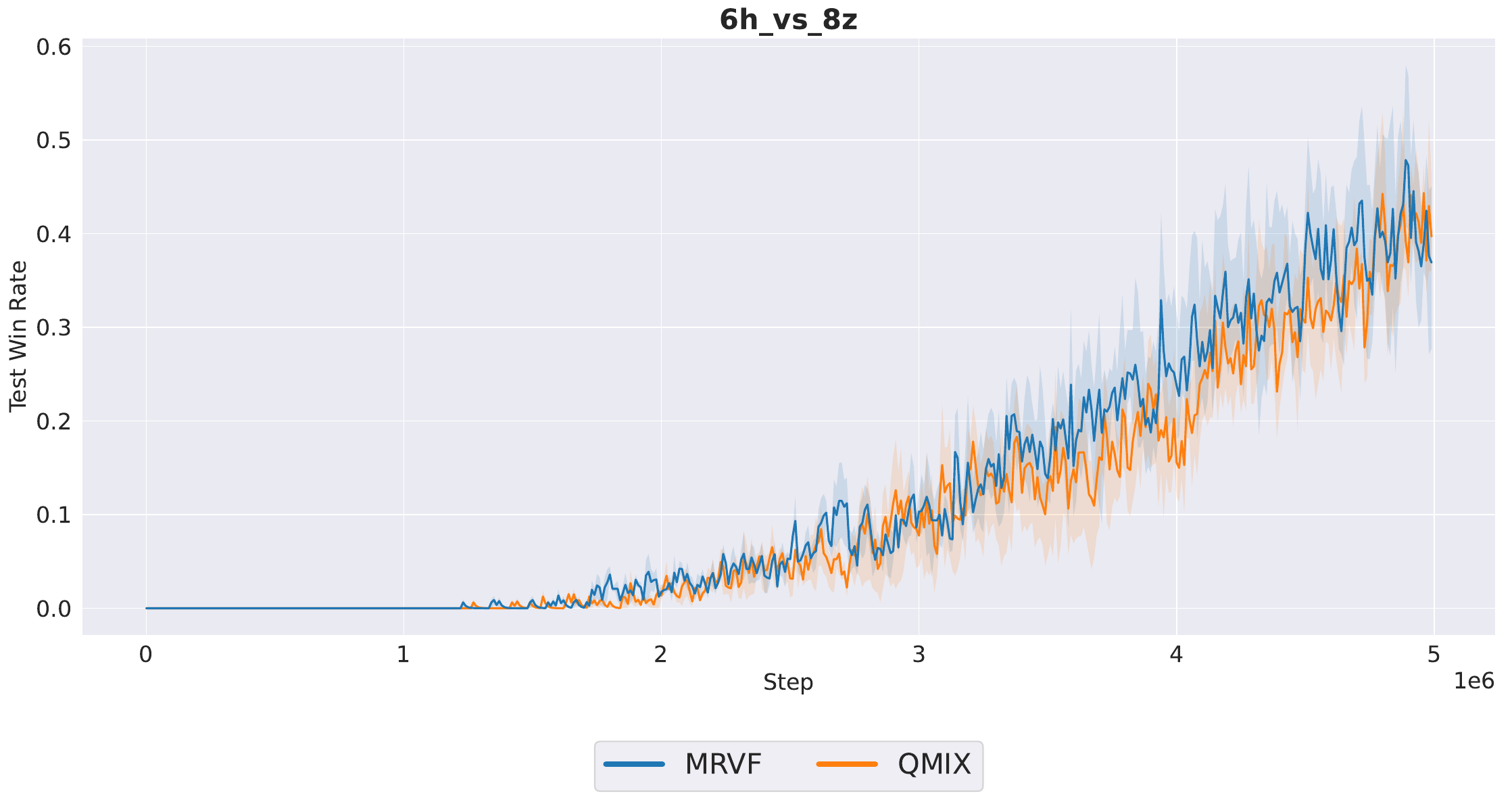}   
    \caption{Test win rate in \textit{6h\_vs\_8z} in SMAC benchmarks.}
    \label{smac hard result}
\end{figure*}

\subsection{SMACv2}
\label{exp in SMACv2}
Results on SMACv2~\citep{Ref:smacv2} are shown in Figure~\ref{smacv2 result}. In terms of average return, MRVF slightly outperforms QMIX, while the two algorithms achieve comparable win rates. Overall, the performance gap between MRVF and QMIX is small. This is because SMACv2 inherits the reward design of SMAC; as shown in Appendix~\ref{smac setup}, its payoff is nearly monotonic. However, we observe a mismatch between the return and the win rate in SMACv2, suggesting that the reward design may need to account for environmental stochasticity. When this mismatch occurs, we argue that the return better reflects the performance than the win rate, since the objective of MARL is to maximize the return.

\begin{figure*}[h!]
    \centering
    \includegraphics[scale=0.28]{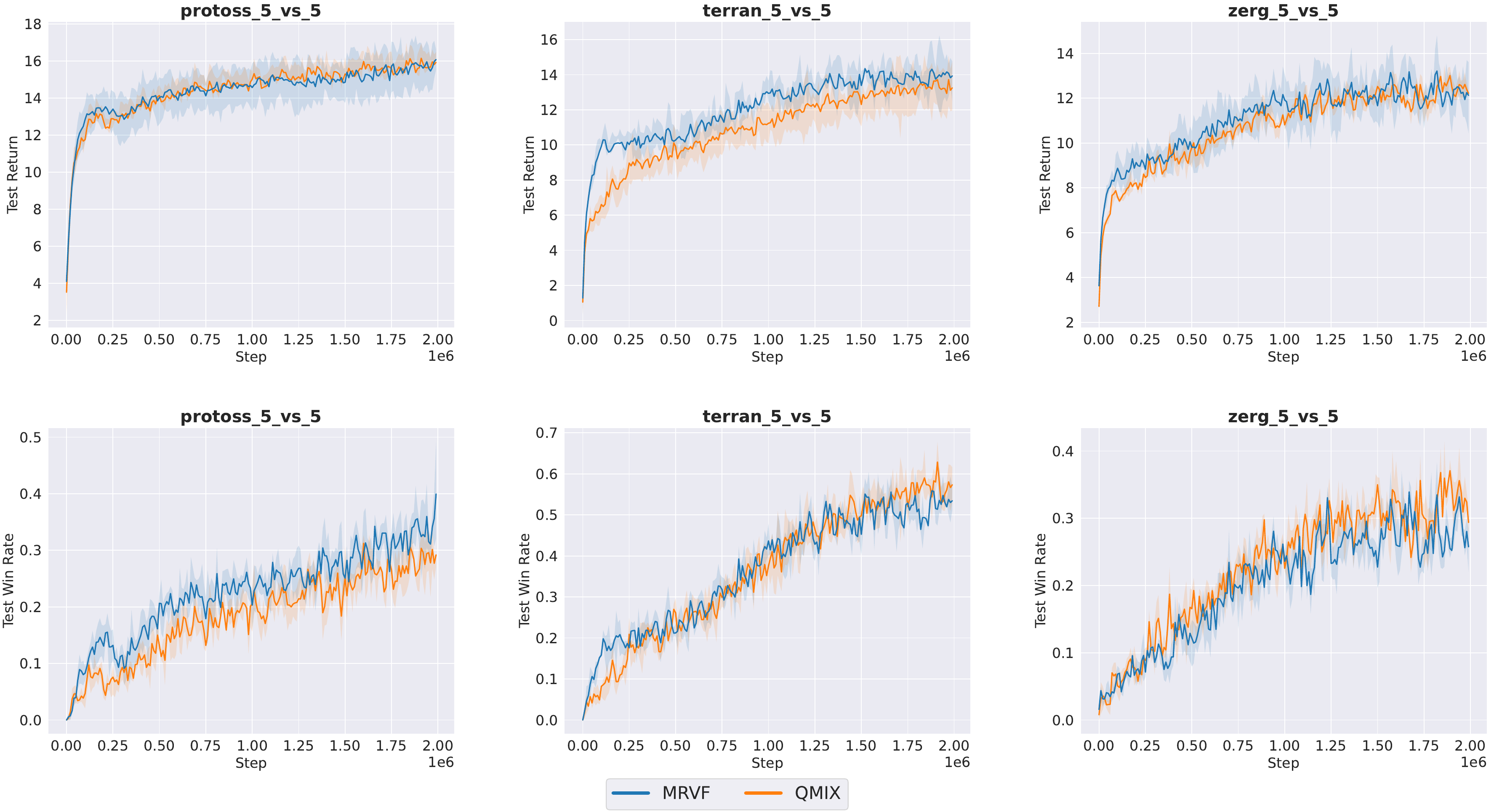} 
    \caption{Test return (first row) and win rate (second row) in SMACv2 benchmarks.}
    \label{smacv2 result}
\end{figure*}

\subsection{Predator Prey}
From the results in Figure~\ref{pred result}, QMIX fails to obtain the optimal action when the penalty is -2. However, an analysis based on ideal QMIX will reveal that the optimal action is a stable point for any \(\epsilon\in(0,1]\). In fact, this does not indicate a theoretical flaw. An explanation is that suboptimal actions are also stable points as \(\epsilon\to0\).

We will provide an analysis of the stable points when \(\epsilon\to0\). Taking two agents as an example, the reward function for the predator-prey environment with a penalty of -2 is given in Equation~(\ref{pred reward}).
\begin{equation}
    reward =
    \begin{bmatrix}
        10 & -2 & \cdots & -2 \\
        -2 & 0 & \cdots & 0   \\
        \vdots  & \vdots & \ddots   & \vdots \\
        -2 & 0 & \cdots & 0   \\
    \end{bmatrix}
    \label{pred reward}
\end{equation}
where the reward is a matrix of shape \(6\times6\).

Consider \(\widetilde{\boldsymbol{u}}\) that neither agent executes the "capture" action, which yields a reward of 0. As \(\epsilon\to0\), \(Q^*_{\mathrm{tot}}\) that minimizes \(L_{\mathrm{tot}}\) is given in Equation~(\ref{lim for non optimum}).

\begin{equation}
    \lim_{\epsilon\to0}Q^*_{\mathrm{tot}}|_{\widetilde{\boldsymbol{u}}\neq\boldsymbol{u}^*}  =
    \begin{bmatrix}
        -2 & -2 & \cdots & -2 \\
        -2 & 0  & \cdots & 0   \\
        \vdots   & \vdots & \ddots   & \vdots \\
        -2 & 0 & \cdots & 0   \\
    \end{bmatrix}
    , L_\mathrm{tot}^*=0+0*o(\epsilon)+12^2*o(\epsilon^2)
    \label{lim for non optimum}
\end{equation}
where \(\bar{{\boldsymbol{u}}}^* = \widetilde{\boldsymbol{u}}\), and both correspond to the agents that perform the "non-capture" action. Therefore, "non-capture" action pairs are stable points in predator prey with penalty -2.

Since suboptimal actions are in the majority, it is difficult for the greedy action to converge to the optimal one. However, because suboptimal actions are stable points only when \(\epsilon\) is small, we can enhance the convergence to the optimal action by slowing down the decay of \(\epsilon\). To support our analysis, we conduct additional experiments on QMIX with \(\epsilon\) fixed at \(1\) and QMIX with \(\epsilon\) decaying in one million steps, as shown in Figure~\ref{pred qmix result}. From the result in predator prey -2, agents successfully learn to capture prey in both settings, yet they still fail when the penalty is -5. In addition, increasing the weight of the optimal action in the policy can enhance the convergence to the optimal action. For example, the weighting mechanism in WQMIX serves a similar purpose. However, these methods still fail when the penalty is set to -5.

\begin{figure*}[h!]
    \centering
    \includegraphics[scale=0.28]{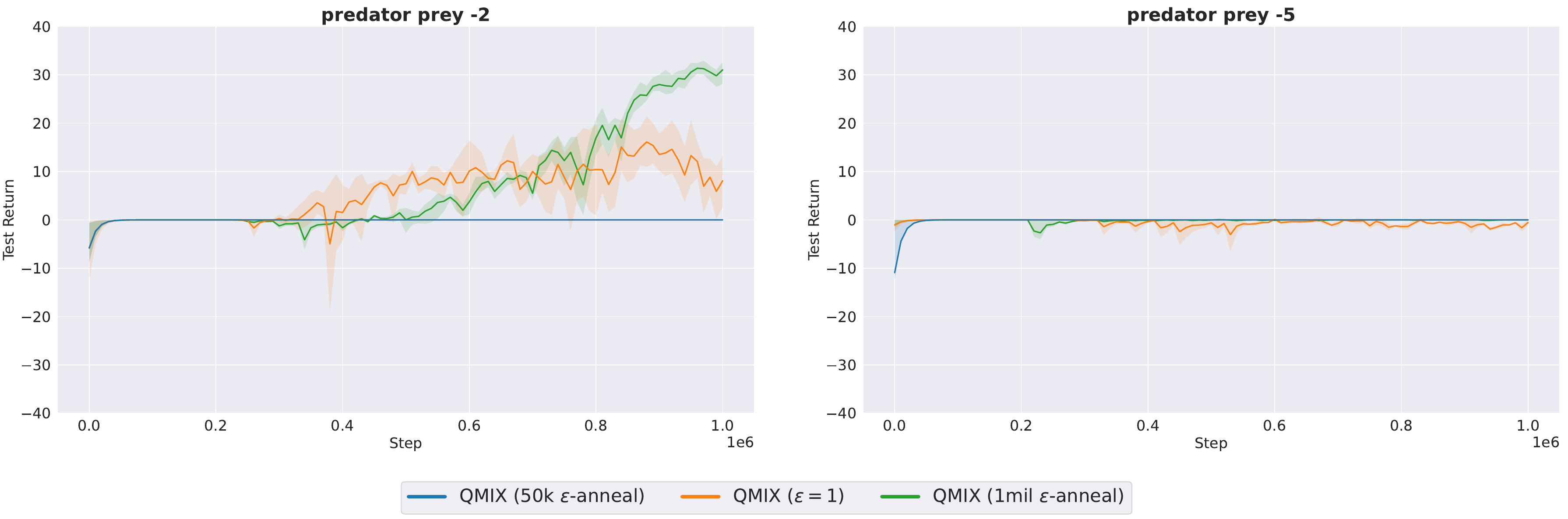}  
    \caption{Test return in the predator prey tasks with punishments -2 (left), and -5 (right). QMIX (50k \(\epsilon\)-anneal) represents QMIX with \(5*10^4\) anneal steps for \(\epsilon\). QMIX (1mil \(\epsilon\)-anneal) represents QMIX with \(10^6\) anneal steps for \(\epsilon\). QMIX (\(\epsilon=1\)) represents QMIX with \(\epsilon=1\) constantly.}
    \label{pred qmix result}
\end{figure*}

\subsection{Comparison with other MARL baselines}
\label{exp on other baselines}
In Section~\ref{experiment}, we compare MRVF with algorithms such as WQMIX~\citep{Ref:wqmix} that claim to address the representation limitation in value factorization. However, the results show that none of these methods fully resolves this limitation. Other value-factorization approaches, including NA2Q~\citep{Ref:na2q} HYGMA~\citep{Ref:hygma} and GoMARL~\citep{Ref:gomarl}, are not designed to address the representation limitation. On-policy methods such as MAPPO~\citep{Ref:mappo} suffer from similar issues, converging to suboptimal Nash Equilibria~\citep{Ref:tadppo}, which are analogous to suboptimal stable points in value factorization. Consequently, all of these methods perform poorly in environments with highly non-monotonic payoffs, as illustrated in Figure~\ref{mappo pp result}.
\begin{figure*}[h]
    \centering
    \includegraphics[scale=0.28]{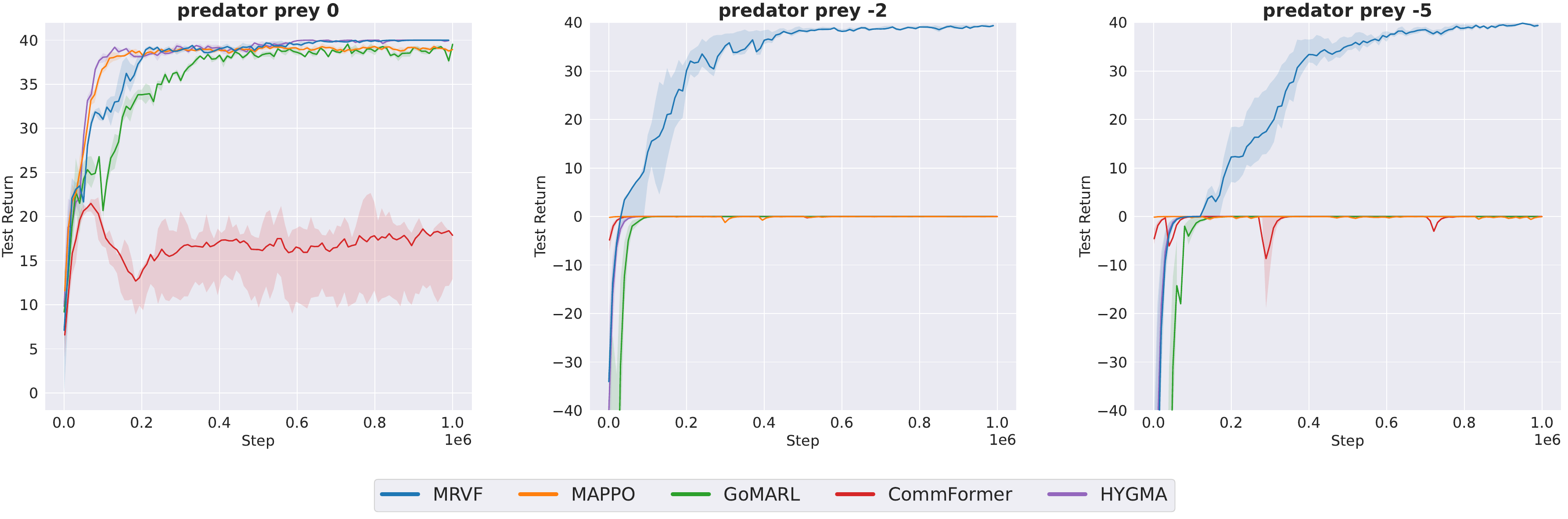}   
    \caption{Test return in the predator prey tasks with punishments 0 (left), -2 (middle), and -5 (right).}
    \label{mappo pp result}
\end{figure*}

The comparison results on the SMAC environment are shown in Figure~\ref{mappo smac result}. It can be observed that the win rates of the on-policy methods (including MAPPO~\citep{Ref:mappo} and CommFormer~\citep{Ref:commformer}) increase more slowly than that of MRVF, and their final performance is usually lower. Moreover, the performance of the on-policy methods depends on hyperparameters, as in Table~\ref{on-policy MARL hyperparameters}, the network and its inputs for \textit{3s\_vs\_5z} differ from those used in the other scenarios.

\begin{figure*}[h]
    \centering
    \includegraphics[scale=0.28]{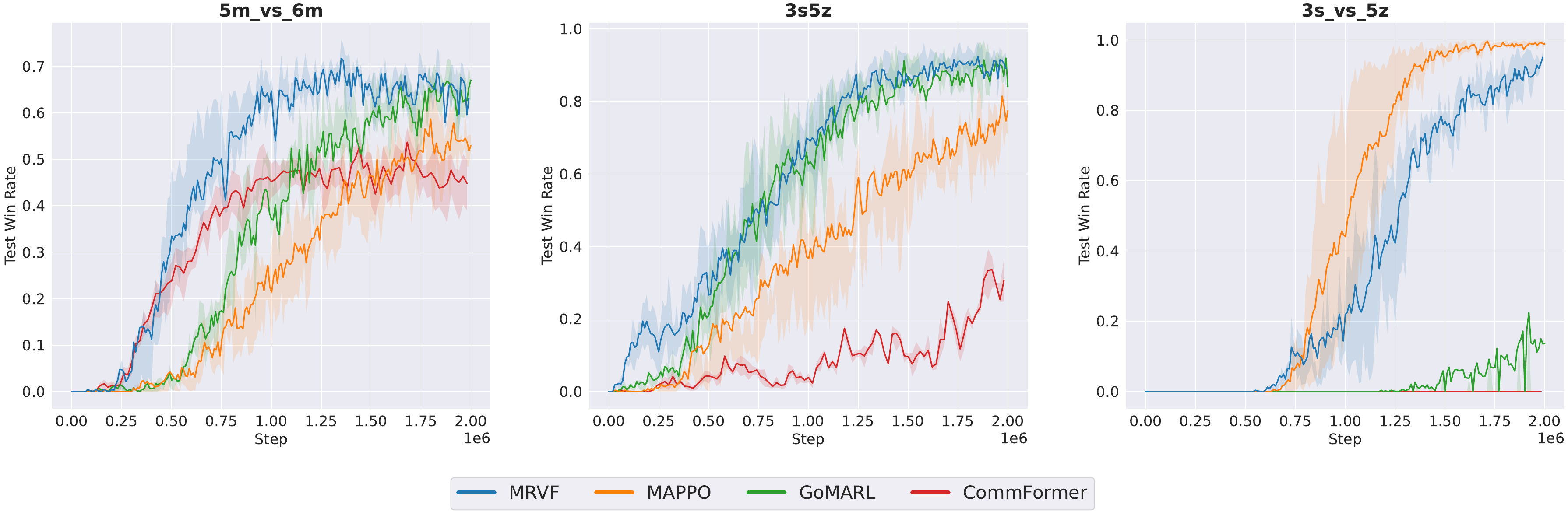} 
    \caption{Test win rate in the SMAC benchmarks.}
    \label{mappo smac result}
\end{figure*}

We use the implementation of GoMARL~\footnote{https://github.com/zyfsjycc/GoMARL}, MAPPO~\footnote{https://github.com/marlbenchmark/on-policy}, CommFormer~\footnote{https://github.com/charleshsc/CommFormer} and HYGMA~\footnote{https://github.com/mysteryelder/HYGMA} are from their open-source repositories. The hyperparameter settings for the on-policy methods are listed in Table~\ref{on-policy MARL hyperparameters}. Please refer to \citep{Ref:mappo} and \citep{Ref:commformer} for further details.
\begin{table}[h]
    \centering
    \begin{tabular}{ccccccc}
        \toprule
         & epoch & clip & network & stacked frames & training threads & rollout threads  \\
        \midrule
        \textit{3s\_vs\_5z} & 15 & 0.05 & mlp & 4 & 1 & 8  \\
        \textit{5m\_vs\_6m} & 10 & 0.05 & rnn & 1 & 1 & 8  \\
        \textit{3s5z} & 5 & 0.2 & rnn & 1 & 1 & 8  \\
        predator prey & 10 & 0.2 & rnn & 1 & 1 & 8  \\
        \bottomrule
    \end{tabular}
    \caption{The hyperparameters for on-policy MARL}
    \label{on-policy MARL hyperparameters}
\end{table}

\section{Pseudocode of MRVF}

\begin{algorithm}[htbp]
\caption{MRVF Forward Computation}
\label{alg:mrvf forward}
\begin{algorithmic}[1]  
    \STATE \textbf{Input:} time step \(t\), training flag \(is\_training\)
    \STATE Get action-observation history \(\boldsymbol{\tau}_t\) from the environment.
    \STATE Set \(\bar{\boldsymbol{u}}_t^0\) to a default action
    \STATE \(Q_\mathrm{jt}(\boldsymbol{\tau}_t,\bar{\boldsymbol{u}}_t^{0})\gets -\infty\) 
    \STATE \(flag \gets True\)
    \FOR{\(k = 1, 2, \cdots, K\)}
        \STATE Compute individual Q values based on \(\boldsymbol{\tau}_t\) and \(\bar{\boldsymbol{u}}_t^{k-1}\)
        \STATE Obtain \(\bar{\boldsymbol{u}}_t^{k}\) based on individual Q values
        \IF{\(Q_\mathrm{jt}(\boldsymbol{\tau}_t,\bar{\boldsymbol{u}}_t^{k})>Q_\mathrm{jt}(\boldsymbol{\tau}_t,\bar{\boldsymbol{u}}_t^{k-1})\) and \(flag\) }
            \STATE \(\bar{\boldsymbol{u}}_t \gets \bar{\boldsymbol{u}}_t^{k}\)
        \ELSE
            \STATE \(flag \gets False\)
            \IF{ not \(is\_training\)}
                \STATE \textbf{break}
            \ENDIF
        \ENDIF
        \STATE \(\boldsymbol{u}_t^k \gets \bar{\boldsymbol{u}}_t^{k}\)
    \ENDFOR
    \STATE \(\boldsymbol{u}_t\gets \bar{\boldsymbol{u}}_t\)
    \IF{\(is\_training\)}
        \STATE Generate a random number \(x\in[0,1]\) 
        \IF{\(x<p\)}
            \STATE Randomly selet a round \(k\in\{1,2,\cdots,K\}\)
            \STATE \(\boldsymbol{u}_t\gets \bar{\boldsymbol{u}}_t^{k}\)
        \ELSE
            \STATE Generate a random number \(y\in[0,1]\)
            \IF{\(y<\epsilon\)}
                \STATE Randomly selet an action \(\boldsymbol{u}_\mathrm{rand}\in\boldsymbol{\mathcal{U}}\)
                \FOR{\(k = 1, 2, \cdots, K\)}
                    \STATE \(\boldsymbol{u}_t^{k}\gets \boldsymbol{u}_\mathrm{rand}\)
                \ENDFOR
                \STATE \(\boldsymbol{u}_t\gets \boldsymbol{u}_\mathrm{rand}\)
            \ENDIF
        \ENDIF
        \STATE return \(\boldsymbol{u}_t\) and \(\boldsymbol{u}_t^1,\cdots, \boldsymbol{u}_t^K\)
    \ELSE
        \STATE return \(\boldsymbol{u}_t\)
    \ENDIF
\end{algorithmic}
\end{algorithm}

\begin{algorithm}[htbp]
\caption{MRVF Training}
\label{alg:mrvf training}
\begin{algorithmic}[1]  
\STATE Initial parameters of mixing network, joint Q network and individual Q networks
\STATE Initialize replay buffer \(\mathcal{D}\)
\STATE \(is\_training\gets True\)
\WHILE{\(total\_step<max\_total\_step\)}
    \STATE Initialize the environment
    \FOR{\(t = 1, 2, \cdots, max\_step\)}
        \STATE Get action-observation history \(\boldsymbol{\tau}_t\) from the environment.
        \STATE Obtain \(\boldsymbol{u}_t\) and \(\boldsymbol{u}_t^1,\cdots, \boldsymbol{u}_t^K\) through Forward Computation of MRVF
        \STATE Execute actions \(\boldsymbol{u}_t\), observe reward \(r_t\) and next action-observation history \(\boldsymbol{\tau}_{t+1}\)
        \STATE Store transition \((\boldsymbol{\tau}_t,\{ \boldsymbol{u}_t^1,\cdots, \boldsymbol{u}_t^K\}, \boldsymbol{u}_t, r_t, \boldsymbol{\tau}_{t+1})\) in replay buffer \(\mathcal{D}\)
        \STATE Decay \(\epsilon\)
        \STATE \(total\_step\gets total\_step + 1\)
    \ENDFOR
    \STATE Sample minibatch \(\mathcal{B}\) from \(\mathcal{D}\)
    \FOR{\(t = 1, 2, \cdots, max\_step\)}
        \STATE Get transition \((\boldsymbol{\tau}_t, \boldsymbol{u}_t, r_t, \boldsymbol{\tau}_{t+1},\bar{\boldsymbol{u}}_{t+1})\) from \(\mathcal{B}\)
        \STATE Calculate \(L_\mathrm{jt}\) accroding to Equation~\ref{jt loss}
        \FOR{\(k = 1, 2, \cdots, K\)}
            \IF{\(k=1\)}
                \STATE Get tuple \((\boldsymbol{\tau}_t, \boldsymbol{u}_t^k)\) from \(\mathcal{B}\)
                \STATE Calculate \(L_\mathrm{tot}\) accroding to Equation~\ref{tot loss round 1}
            \ELSE
                \STATE Get tuple \((\boldsymbol{\tau}_t, \boldsymbol{u}_t^k,\bar{\boldsymbol{u}}_t^{k-1})\) from \(\mathcal{B}\)
                \STATE Calculate \(L_\mathrm{tot}\) accroding to Equation~\ref{tot loss round>1}
            \ENDIF
        \ENDFOR
    \ENDFOR
    \STATE Update parameters using gradient descent
\ENDWHILE
\end{algorithmic}
\end{algorithm}

\end{document}